\renewcommand*\env@matrix[1][*\c@MaxMatrixCols c]{%
  \hskip -\arraycolsep
  \let\@ifnextchar\new@ifnextchar
  \array{#1}}
\theoremstyle{plain}
\newtheorem{theorem}{Theorem}
\newtheorem{lemma}[theorem]{Lemma}
\newtheorem{proposition}[theorem]{Proposition}
\newtheorem{assum}{Assumptions}
\newtheorem{definition}{Definition}
\theoremstyle{nonumberplain}
\theoremstyle{plain}
\newtheorem{remark}{Remark}
\theoremstyle{plain}
\newtheorem{example}{Example}
\theoremstyle{nonumberplain}
\newtheorem{proof}{Proof}
\newlength\fheight
\newlength\fwidth
\newcommand{\abs}[1]{\left| #1 \right|}
\newcommand{\norm}[1]{\| #1 \|}
\newcommand{\inn}[2]{\langle #1,#2 \rangle}
\newcommand{\lrbrace}[1]{\left\{ #1 \right\}}
\newcommand{\normi}[1]{{\left\vert\kern-0.25ex\left\vert\kern-0.25ex\left\vert #1 
    \right\vert\kern-0.25ex\right\vert\kern-0.25ex\right\vert}}
\newcommand{\inni}[2]{{\langle\kern-0.25ex\langle #1,#2
   \rangle\kern-0.25ex\rangle}}
\def\X{\mathcal{X}}
\def\Erw{\mathbb{E}}
\def\rmr{\mathrm{r}}
\def\rmD{\mathrm{D}}
\def\rmf{\mathrm{f}}
\def\tildrmr{\tilde{\rmr}}
\def\overrmr{\overline{\rmr}}
\def\tildrmT{\tilde{\rmT}}
\def\hatrmT{\hat{\rmT}}
\def\rmQ{\mathrm{Q}}
\def\tildrmQ{\tilde{\rmQ}}
\def\overrmQ{\overline{\rmQ}}
\def\rmP{\mathrm{P}}
\def\rmH{\mathrm{H}}
\def\tildrmP{\tilde{\rmP}}
\def\overrmP{\overline{\rmP}}
\def\rmV{\mathrm{V}}
\def\tildrmV{\tilde{\rmV}}
\def\tildrmQ{\tilde{\rmQ}}
\def\nat{ \mathbb{N} }								
\def\real{ \mathbb{R} }								
\def\A{ \mathcal{A} }
\def\G{ \mathcal{G} }
\def\S{ \mathcal{S} }               
\def\X{\mathcal{X}}
\def\Y{\mathcal{Y}}
\def\Loc{ \textnormal{lo} }
\def\GAQL{ \textnormal{GAQL} }
\def\LAQGI{ \textnormal{LAQGI} }	
\def\Glob{ \textnormal{gl} }
\def\LQ{ \textnormal{LQ} }
\def\GQ{ \textnormal{GQ} }
\def\EI{ \textnormal{EI} }
\def\rmT{\mathrm{T}}
\def\DMDP{\mathfrak{DM}}
\def\A{\mathcal{A}}
\DeclareMathOperator{\Unif}{\textnormal{Uni}}
\DeclareMathOperator{\supp}{\text{supp}}
\DeclareMathOperator{\Boltz}{\textnormal{Bolt}}
\DeclareMathOperator{\Softmax}{\textnormal{Soft}}
\DeclareMathOperator{\Greed}{\textnormal{Greed}}
\DeclareMathOperator*{\argmax}{arg\,max}
\newcommand\restr[2]{{
  \left.\kern-\nulldelimiterspace 
  #1 
  \right|_{#2} 
  }}
\newcommand{\F}{\mathcal{F}}
\newcommand{\tildF}{\tilde{\F}}
\title{On Information Asymmetry in Multi-Agent Reinforcement Learning: Convergence and Optimality}
\author{
Ezra~Tampubolon$^\dagger$, Haris~Ceribasic$^\dagger$
  Holger~Boche$^\dagger$*,\\
  $^\dagger$Technische Universit{\"a}t M{\"u}nchen, Lehrstuhl f{\"u}r Theoretische Informationstechnik\\
  *Munich Center for Quantum Science and Technology (MCQST)\\
  \{ezra.tampubolon,haris.ceribasic,boche\}@tum.de
}
\begin{document}

%
\maketitle
%
\begin{abstract}
In this work, we study the system of interacting non-cooperative two Q-learning agents, where one agent has the privilege of observing the other's actions. We show that this information asymmetry can lead to a stable outcome of population learning, which generally does not occur in an environment of general independent learners and that the resulted post-learning policies are almost optimal in the underlying game sense, i.e. they form a Nash equilibrium. Furthermore, we propose in this work a Q-learning algorithm, requiring predictive observation of two subsequent opponent's actions, yielding an optimal strategy given that the latter applies a stationary strategy, and discuss the existence of the Nash equilibrium in the underlying information asymmetrical game.
\end{abstract}
\begin{IEEEkeywords}
Information Asymmetry, Q-learning, Markov Game, Reinforcement Learning, Online Optimization
\end{IEEEkeywords}

\section{Introduction}
\paragraph*{Information asymmetry in applications} In widespread multi-agent systems, information distribution is often asymmetrical, meaning that some agents have more or better information than the other. This property has been a subject of extensive study in economics resulting in the characterization of undesirable consequences such as market failure, moral hazards, monopoly of information, and adverse selection; and in the mechanisms avoiding those occurrences \cite{Aboody2000}. 
Likewise, information asymmetry arises in technical applications usually as an effect of hierarchical structures and cross-layer perspectives, which grows in importance with the systems' increasing complexity and growing interlinkage enabled by groundbreaking infrastructures, such as 5G and IoT. A specific example of an asymmetrical information relationship is that between the base stations (BSs) and the (mobile) users (USs) in a wireless communication system, where BSs often has (implicit) information about USs' service request, while USs might not know about the BS service allocation. 
Among USs themselves, information asymmetry might also occur, due to the decision-making order, such as in the setting of primary user (PU) and secondary users (SUs) in a cognitive radio network 
\cite{Adlakha2013,ZhangChen2017}.
 Another possible occurrence of information asymmetry is in the relation between defender and attacker in security systems \cite{AlpcanBasar2010,YangXue2013,XuRabinovich2015,HeIslam2017,EtesamiBasar2019}. Therein, the attacker might observe the defender's action, while the latter is unaware of the former's action but suffers the consequences. The reverse case might also occur in practice: the defender can observe the attacker's action while the attacker can only observe her action's impact on the defender.

\paragraph*{Reinforcement learning} 
In recent years, machine learning (ML) techniques have gained significant importance in academia and industry. 
Reinforcement learning (RL) \cite{SuttonBarto2018,Kaelbling1996} is a ML paradigm suited for dynamical applications. 
It allows a single agent to learn a reward maximizing policy in an unknown Markovian environment, arising naturally in various domains, such as robotics, telecommunications, economics. 
One fundamental technique in RL is the so-called Q-learning. Q-learning explores and exploits the state-action space and generate the so-called optimal Q-function, giving rise to the greedy deterministic strategy optimizing the accumulated discounted reward. Q-learning has been successfully adapted in several applications, reaching from single-device systems \cite{OrtizLi2016,Mastronade2011,Moody1998} to networked multi-device systems, found, e.g., in wireless communication \cite{BennisNiyato2010,Simsek2011,AmiriFridman2018,Ghadimi2017,Calabrese2018}, wireless sensor networks \cite{Sharma2020}, and edge computing \cite{ZengPan2019}. Furthermore, the Q-learning constitute a foundation of other RL algorithm, such as the policy iteration method, where the corresponding policy evaluation step is similar to the Q-learning iterate. Therefore, results concerning to the specific Q-learning paradigm enrich the general field of reinforcement learning. 

\paragraph*{Multi-Agent Learning} 
In the recent years, one can observe the increasing tendency of increasing interconnection between heterogeneous autonomous systems enabled by groundbreaking infrastructures, such as 5G and IoT, so that we literally have the case that no intellectual system is an island entire of itself. Therefore, although dramatic progresses in the field of AI also observable in the recent years, the applicability of AI techniques will still be limited until we understand the inter-agent interplay of the intelligent systems. The latter is far from being trivial, as properties of a single-agent intelligent system does in general not immediately transfer to its multi-agent extension. For instance, methods for a single-agent RL does in general not suit for multi-agent applications since the assumptions (stationarity of reward and state transition dynamic) are not longer valid (see e.g., \cite{zhang2019multiagent}), and thus desired algorithmic properties, such as the convergence of the learning, no longer hold in general. In this respect, the study of interconnected learning agents may help one to recognize possible pitfalls in the entire system, and inspire e.g., the design of an efficient mechanisms.


\paragraph*{Problem formulation}
 As motivated before, our interest is on the setting of competitive Q-learning multi-agent systems which is subject to asymmetrical information distribution. Specifically, we study the setting of two Q-learning non-cooperative agents, where one of them has the privilege of observing the other's actions.
The first question which one might ask is the following:
\begin{center}
\textit{How does the information asymmetry impact the outcome of multi-agent Q-learning?}
 \end{center}
In order to answer this question, there are two subsequent aspects needed to be investigated: First, the convergence of the non-cooperative learning schemes, and second, the behaviour of the limit Q-table and the resulted greedy strategy. The first aspect does not immediately follow from the well-known convergence of the single agent Q-learning is well-known, because for the general setting of independent Q-learners, i.e., Q-learners who can not observe the other's action, the learning outcome may not stable \cite{Tan1993,Claus1998}. This issue arises since the agents assume a stationary environment in the Q-learning phase, while other agents' influence makes her reward time-changing. In investigating the second aspect, our attention lies in the question of whether the informational advantage (resp. disadvantage) of the globalized (resp. localized) agent has a positive (resp. negative) impact on her. Moreover, we aim to investigate in face of the informational asymmetry, whether the learning outcome yields a reciprocal optimum strategy of the players, i.e., whether it is a Nash equilibrium of the underlying game.

\subsection*{Our Contributions}
In this work, we consider the information asymmetrical LA-GA Q-learning, where LA updates the Q-table as in the single-agent case with the possible influence of other agent in her obtained reward and the system state transition, and where GA updates the Q-table in dependence of LA's action. We call the corresponding method as LA Q-learning with globalized interference (LAQGI) and GA Q-learning (GAQL). 

At first, we show the convergence of this reciprocal Q-learning algorithm, and thus show the positive aspect of the informational asymmetry as the factor which can foster the stability of the two agent Q-learning. 

Furthermore, we provide theoretical and numerical analysis of the policies that result from the mentioned convergent multi-agent Q-learning. In this respect, our main result is the insight that the information asymmetry manifests itself in the outcome of the learning as follows: The LA generates via LAQGI greedy policy that is optimal given that the GA applies the long-term training policy, and the GA generates via GAQL a greedy policy that is optimal given that LA generates a greedy post-learning policy. Consequently, the informational disadvantage of LA causes this agent to choose a post-learning policy that is sub-optimal, as GA likely applies the corresponding greedy policy rather than the training policy. However, despite of the sub-optimality of LA's post-learning strategy due to the model mismatch, we are able to show that the greedy post-learning policy is almost optimal in the game sense, meaning that no agent have incentives to deviate from this strategy.

As the theory of single-agent MDP asserts one to apply a greedy strategy for optimal actions, one can expect that LA applies a greedy strategy in the post-learning phase. Therefore, GAQL would be fine enough for GA. However, we provide for the sake of completeness a learning algorithm, called extra information GA Q-learning (EIGAQL), which provides GA an optimal strategy given that LA applies a not-necessarily-greedy strategy. This advantage comes, as suggested by the name, with cost of additional information requirement: GA needs to observe the actual and next action of LA. Furthermore, as we are only able to generate an almost solution of the LA-GA game, we also provide for the sake of completeness in this work an existence Theorem for the indeed solution, i.e., Nash equilibrium, of this game.

Complementary to the theoretical results, we provide in this work some numerical simulations. Further, we numerically compare the performance of the proposed reciprocal information asymmetrical multi-agent learning with the independent Q-learning, where both agents is unaware of others' action, and the jointly cooperative Q-learning, where both agents know about the action of others and cooperatively updating the Q-table giving rise to the population-welfare post-learning policy.

\subsection*{Relation to Prior Works} 
\textbf{Learning in Games:} 
Our work is related to the works investigating the dynamic of agents in a competitive game setting. In particular, among them are those closely related to ours, which generate long-term results with different agent types. The latter includes no-learning agents, e.g., greedy agents with best-response dynamics, and learning agents, e.g., the fictitious playing, the gradient playing, and the online-learning playing (see e.g., \cite{Tampubolon2019Agg,Tampubolon2020Pricing,Tampubolon2020Coord} and the references therein).
For a further comprehensive review of the literature on those topics, we refer to \cite{Fudenberg1998,zhang2019multiagent}. 

\textbf{Multi-Agent Reinforcement Learning:} Particular research field of learning in games relevant to our work is the field of multi-agent reinforcement learning (for an excellent overview see \cite{Busoniu2008,zhang2019multiagent,LuYan2020}). Similarly to the single-agent case, the solutions proposed in this field can basically be categorized into value-based, policy-based, and linear programming based solution. Respective to this distinction, our work can be classified into value-based approach. As the body of the literature on multi-agent reinforcement learning is enormous huge, we review in the following only some value-based approaches closely related to our work.  Most of the works on this kind of approach (or more generally multi-agent reinforcement learning) concern with the setting where the iterate of a learner requires fully knowledge about other learners' action and even their further characteristics such as the obtained rewards and the value function iterates. One prototypical examples of such works are those \cite{Shapley1953,Watkins1992,Littman1994,Hu2003} proposing the minimax Q-learning ensuring the convergence of the iterate of the competitive reinforcemet learners to the Nash equilibrium. Several extensions of the minimax Q-learning have been given in the literature. To name a few: \cite{HuGao2015} proposes a negotiation process to reduce the amount of exchanged information in the minimax Q-learning; \cite{ConitzerSandholm2007} proposes an algorithm which is optimal against eventually stationary agents; \cite{Huang2020ModelAR} proposes a minimax algorithm for risk-averse reinforcement learners. Above mentioned multi-agent Q-learning methods require high degree of coordination of the agents. Therefore, one can clearly not expect that such solutions is realizable in practice. Another drawback of the solutions proposed in the above mentioned work is that they usually make use of non-elementary operation, such as computing at each time step the Nash equilibrium of an additional stage game induces by the Q-iterate. However, there is one principal difference between such works and ours: While the focus of the mentioned works is on designing algorithms converging to the solution concept of the Nash equilibrium of the underlying Markov game, our focus is rather to investigate the repeated game outcome of Q-learning agents with information asymmetry. Nevertheless, we are able to show that in some sense the outcome of the learning Q-learning is an almost solution of the underlying game.

\textbf{Independent Learning:} The most basic approach in multi-agent reinforcement learning is the independent learner approach which concern with learners each applying the single-agent reinforcement learning. This kind of approach is relevant to ours as the proposed method here is an extension of this learning paradigm for the information asymmetric case where one agent can observe the action of others. For the general setting of non-cooperative independent Q-learners, the learning outcome is believed to not be stable \cite{Tan1993,Claus1998}, although it can be proven that a stable solution of the underlying game exists \cite{Fink1964}. This issue arises since the agent assumes a stationary environment in the Q-learning phase, while other agents' influence makes her reward time-changing. One way to avoid this occurence is by considering two time-scale method, where in one scale, the agents each apply constantly a policy aiming at the first place to learn the best reply given the opponent strategy, and where in the other scale the agents adapt their strategy by the learning experience  \cite{ArslanYuksel2017,ArslanYuksel2019}. This kind of method clearly requires, due to the different time-scale, a higher degree of coordination than the independent learning, which is in general not given in the applications.

\textbf{Asymmetry in Multi-Agent Reinforcement Learning:} The asymmetrical information structure as considered in our work induces a particular ordering of the agents' game play, i.e., GA acts after LA. A game with such kind of asymmetry is called the Stackelberg game. In the context of the multi-agent reinforcement learning, the Stackelberg game has first been studied in \cite{kononen2004asymmetric}, proposing Q-learning method based on minimax approaches given by \cite{Shapley1953,Watkins1992,Littman1994,Hu2003} for achieving the Nash equilibrium of the underlying game. The Stackelberg game in multi-agent reinforcement learning has been successfully applied in several applications, such as robotics\cite{pinto2017asymmetric,warrington2020robust}, security \cite{vorobeychik2012computing,vasal2020model}, and wireless network \cite{jia2018stackelberg,xiao2018two}.
In Stackelberg game, it is usually assumed that all agents have fully knowledge of others' action contrasting to our work as we consider the additional asymmetry in action's observability. Lastly, we mention that there are another type of information asymmetry considered in the literature, such as the case where an agent in contrast to others has (local) information about the underlying system \cite{HeDai2016,vasal2020model}. However, such approaches, in contrast to ours, usually requires additional assumption on the system dynamic, such as the existence of post-decision state where some system characteristics are revealed, or additional prediction step.
\subsection*{Structure of Our Work}
Our work is structured as follows:
\begin{itemize}
    \item In Section \ref{Sec:Model}, we provide for sake of completeness basic notions on Markov decision process and Q-learning. Moreover, we also provide in this section a formal description of the LA-GA information asymmetric stochastic game of our interest.
    \item In Section \ref{Sec:ConvThm}, we introduce LA Q-learning with globalized interference (LAQGI) and GA Q-learning (GAQL) constituting canonical extensions of the single-agent Q-learning to the aforementioned LA-GA game. Moreover, we show that joint application of LAQGI and GAQL leads to convergent learning process both agents.
    \item In Section \ref{Sec:OptProof}, we discuss the outcome of the joint application of LAQGI and GAQL. Specifically, we analyze theoretically the joint performance of the greedy algorithms resulted from those algorithms. Here, we show that LAQGI and GAQL yields an almost solution concept, i.e. Nash equilibrium, of the underlying game.
    \item Motivated by the fact that GAQL only provides an optimal greedy strategy for GA provided that LA applies a stationary deterministic strategy, our aim in Section \ref{Sec:ajajahshsgsgssssss} is to provide an alternative GA Q-learning method yielding an optimal greedy strategy for GA even if LA applies a not-necessarily deterministic but stationary strategy. There, we are able to show that the latter can be generated by allowing GA to have an additional observation of LA's action, and propose the so-called extra information GA Q-learning (EIGAQL).  
    \item As LAQGI and GAQL only yield an almost Nash equilibrium of the underlying LA-GA game, we provide in Section \eqref{Sec: Nash}, the answer to the question whether an indeed solution concept exists.
    \item Finally, we provide and discuss in Section \ref{Sec:Num} some numerical simulations  which support our theoretical findings and also give additional insight into the mechanism introduced in this work.
\end{itemize}
\subsection*{Basic Notions and Notations}
Let $\X$ and $\Y$ be a finite sets. We denote the set of probability density on $\X$ by $\Delta(\X)$
, i.e.:
\begin{equation*}
\Delta(\X):=\lrbrace{p:\X\rightarrow [0,1]:~\sum_{x\in\X}p(x)=1}.
\end{equation*}  
We write the set of Markov kernel with source $\X$ and target $\Y$ by $\Delta_{\X}(\Y)$, i.e.:
\begin{equation*}
\Delta_{\X}(\Y):=\lrbrace{	\begin{aligned}
		&p:(\X,\Y)\rightarrow[0,1],\\
	&(x,y)\mapsto p(y|x)
		\end{aligned}	:~\sum_{y\in\Y}p(y|x)=1,~\forall x\in\X}
\end{equation*}
Given two vectors (or matrices) $x,y$ having the same dimensions, we denote the entrywise multiplication of $x$ and $y$ by $x\odot y$

\section{Model Description}
\label{Sec:Model}
In this section, we recall some basic notions for the setting of our investigations, i.e., the setting of the local-global competitive reinforcement learners. As the reinforcement learning generally concerns with the setting of the so-called Markov decision process (MDP), we briefly recall, for sake of completeness, the latter in Subsection \ref{Subsec:MDP}. Subsequently, we extend the notion of MDP in \ref{Subsec:aajajsshshsggsgshddddssss} to the information asymmetrical game setting of our interest.   
\subsection{Single Agent Markov Decision Process}
\label{Subsec:MDP}

To begin with, we first recall the setting of a \textit{\textbf{Markov decision process (MDP)}} of a single agent \cite{Puterman1994}:
\begin{definition}
A Markov decision process is defined as the tuple $(\S,\A,\rmr,\rmP)$, where $\S$ and $\A$ are finite sets, $\rmr:\S\times \A\rightarrow\real$, and $\rmP\in\Delta_{\S \times\A}(\S)$.
\end{definition}
 MDP serves as a model for decision making of an agent in a dynamical uncertain system. In this regard, $\S$ stands for the set of all possible system \textit{\textbf{states}} and $\A$ for the set of all possible executable \textit{\textbf{actions}} of the agent in the system. Moreover, $\rmr(s,a)$ stands for the reward received by the agent after executing the action $a\in\A$ given that the system is in the state $s\in\S$. This interpretation of $\rmr$ gives rise to the naming "\textit{\textbf{reward function}}". Lastly, $\rmP$ describes the dynamic, by assigning the probability $\rmP(s'|s,a)$ for the change of the system state to $s'\in\S$ given that the agent executes action $a\in\A$, and that the actual system state is $s\in\S$ .
 
 In the repeated setting, the aim of the agent in MDP is to determine a \textit{\textbf{policy}} $\pi\in\Delta_{\S}(\A)$ optimizing the obtained reward. Specifically, the agent uses the policy $\pi$ as follows: $\pi(a_{t}|s_{t})$ stands for the probability for choosing the action $a_{t}\in\A$ at time $t\in\nat_{0}$ given that the system is in the state $s_{t}\in\S$. Of course, one may, in investigating a MDP, consider a more general class of policy, such as the time-varying policies, which in contrast to the policies in $\Delta_{\S}(\A)$ depend on the state-action history. However, it is usually enough to consider the class of (stationary Markov) policy $\Delta_{\S}(\A)$ (see, e.g., \cite{Puterman1994}). 
Now, let us further discuss the policies in $\Delta_{\S}(\A)$.
One of the important subclass of such policies is the class of deterministic policy: A \textit{\textbf{deterministic policy}} $\pi\in\Delta_{\S}(\A)$ is a policy satisfying for every $s\in\S$ $\pi(a|s)=1$ for an $a\in\A$. A deterministic policy $\pi$ can be seen, with slight abuse of notation, as a deterministic function $\pi:\S\rightarrow\A$. A type of deterministic policy of particular interest in MDP is the \textit{\textbf{greedy policy}} $\pi$ w.r.t. $\rmQ\in\real^{\S\times\A}$ defined as $\pi(s)\in\argmax_{a\in\A}Q(s,a)$.

We measure the performance of a policy $\pi\in\Delta_{\S}(\A)$ in a MDP $(\S,\A,\rmr,\rmP)$ by the so-called \textit{\textbf{value function}} $\rmV_{\pi}:\S\rightarrow\real$ defined as:
\begin{equation*}
\rmV_{\pi}(s)=\Erw\left[\left. \sum_{t=0}^{\infty}\beta^{t}\rmr(S_{t},A_{t})\right| \begin{aligned}&S_{0}=s,S_{t+1}\sim P(\cdot|S_{t},A_{t}),\\
& A_{t}\sim\pi(\cdot|S_{t}) \end{aligned}\right],
\end{equation*}
where $\beta\in(0,1)$ is a chosen \textit{\textbf{discount factor}}. As described above, $\rmV_{\pi}(s)$ specifies the expected discounted accumulated reward of the agent in the infinite horizon provided that she follows the strategy $\pi$ to choose her action and that the initial system state is $s$.
We refer the MDP with the discount factor $\DMDP:=(\S,\A,\rmr,\rmP,\beta)$ to as the \textit{\textbf{discounted MPD}}, and $\rmV_{\pi}$ as to the value function of $\pi$ in $\DMDP$. Closely
related to the value function, is the following quantity called the Q-function of the policy $\pi$ in $\DMDP$, defined as 
\begin{equation*}
\rmQ_{\pi}(s,a)=\Erw\left[\left. \sum_{t=0}^{\infty}\beta^{t}\rmr(S_{t},A_{t})\right| \begin{aligned}&S_{0}=s,~A_{0}=a,\\
&A_{t}\sim\pi(\cdot|S_{t}) ,\\
&S_{t+1}\sim P(\cdot|S_{t},A_{t})\end{aligned}\right]. 
\end{equation*}
$\rmQ_{\pi}(s,a)$ computes the expected discounted accumulated reward of the agent in the infinite horizon provided that she applies the initial action $a$ and follows the strategy $\pi$ to choose her subsequent actions, and that the initial system state is $s$. The specific relation between value function and Q-function is given by $\rmV_{\pi}(s)=\Erw_{A\sim\pi(\cdot|s)}[\rmQ_{\pi}(s,A)]$.

The goal of the discounted MDP is to optimize the value function w.r.t. the policy. In this direction, it is convenient to consider the so called \textit{\textbf{optimal value function}} of $\DMDP$ given by:
\begin{equation*}
 \rmV_{*}(s):=\max_{\pi\in\Delta_{\S}(\A)}\rmV_{\pi}(s).
 \end{equation*}
It is sometimes convenient to consider Q-function corresponds to $\rmV_{*}$, i.e., the function $\rmQ_{*}:\real^{\S\times\A}\rightarrow\real$ given by:
\begin{equation*} 
\rmQ_{*}(s,a):=\max_{\pi\in\Delta_{\S}(\A)}\rmQ_{\pi}(s,a),
\end{equation*}
We refer $\rmQ_{*}$ to as the \textit{\textbf{optimal Q-function}} for $\DMDP$.
If we know $\rmQ_{*}$, then the \textit{\textbf{optimal policy}} $\pi_{*}$ for $\DMDP$, i.e. the policy satisfying $\rmQ_{*}=\rmQ_{\pi_{*}}$ and $\rmV_{*}=\rmV_{\pi_{*}}$
is the greedy policy w.r.t. $\rmQ_{*}$. 

\subsection{MDP and the Bellman Equations}
Given a discounted MDP $\DMDP=(\S,\A,\rmr,\rmP,\beta)$. For analysis of the value function $\rmV_{\pi}$ of a policy $\pi\in\Delta_{\S}(\A)$ it is useful to describe it implicitly as a solution of the so-called Bellman equation. Specifically, one can show (see Theorem 6.1.1 in \cite{Puterman1994}), that 
$\rmV_{\pi}$ is the unique solution of: 
\begin{equation}
\label{Eq:aajshssgsgsgsffsfss}
\rmV_{\pi}(s)=\Erw_{A\sim \pi(\cdot|s)}\left[ \rmr(s,A)+\beta\Erw_{S'\sim P(\cdot|s,A)}[\rmV_{\pi}(s')\right].
\end{equation}
Similarly, the $Q_{\pi}$-function of $\pi$ is the unique solution of equation:
\begin{equation}
\label{Eq:aaksjsjshhsgsgsggshsgsgsggsgs}
\begin{split}
\rmQ_{\pi}(s,a)=\rmr(s,a)+\beta \Erw_{S'\sim \rmP(\cdot|s,a)}\left[ \Erw_{A'\sim \pi(\cdot|S')}\left[ \rmQ_{\pi}(S',A')\right] \right]. 
\end{split}
\end{equation}

For optimal value function $\rmV_{*}$ of $\DMDP$ we have also implicit description similar to the previous one for the value function of a policy. Specifically, it holds that $\rmV_{*}$ is the unique solution of the equation:
\begin{equation}
\label{Eq:ajajajsgsgsffssddafaaddaadsssdss}
\rmV_{*}(s)=\max_{a\in\A}\left[ \rmr(s,a)+\gamma\sum_{s'\in\S}\rmP(s'|s,a)\rmV_{*}(s')\right]. 
\end{equation}
Moreover for the corresponding Q-function $Q_{*}$, it holds that it is the unique solution of the equation:
\begin{equation}
\label{Eq:akjajsjshgdgdgdggdgdgdgdd}
\rmQ_{*}(s,a)= \rmr(s,a)+\gamma\Erw_{S'\sim\rmP(S'|s,a)}\left[ \max_{a'\in\A}\rmQ_{*}(S',a')\right].
\end{equation}

Working with a discounted MDP $\DMDP$ and a policy $\pi\in\Delta_{\S}(\A)$, it is convenient to utilize the operator $\rmT_{\pi}:\real^{\S\times\A}\rightarrow \real^{\S\times\A}$ given by:
\begin{equation*}
(\rmT_{\pi}\rmQ)(s,a):=\rmr(s,a)+\gamma \Erw_{S'}[\Erw_{A'\sim\pi}[\rmQ(S',A')]],
\end{equation*}
called the \textit{\textbf{Bellman operator}} of $\pi$ in $\DMDP$. Furthermore, the operator $\rmT^{*}:\real^{\S\times\A}\rightarrow \real^{\S\times\A}$,
\begin{equation*}
(\rmT_{*}\rmQ)(s,a):=\rmr(s,a)+\gamma \Erw_{S'}[\max_{a'\in\A}\rmQ(S',a')],
\end{equation*} 
called the \textit{\textbf{optimal Bellman operator}} of $\DMDP$, is also useful for MDP analysis. It directly follows from the discussion in the previous paragraph that the Q-function of the policy $\pi$ is the unique fixed point of the Bellman operator $\rmT_{\pi}$ of the policy $\pi$ in $\DMDP$. The same relation holds also between the optimal Bellman operator and the optimal Q-function for $\DMDP$. Formally, $\rmQ_{\pi}$ and $\rmQ_{*}$ are the unique solution of:
\begin{equation}
\label{Eq:akakshsgsggsffsgsgsggsgsssss}
\rmT_{\pi} Q_{\pi}=Q_{\pi}\quad\text{and}\quad \rmT_{*} Q_{*}=Q_{*}
\end{equation}
One property of the Bellman operator useful for our later approach, is that both $\rmT_{\pi}$ and $\rmT_{*}$ are \textit{\textbf{$\gamma$-contractions}} (w.r.t. $\norm{\cdot}_{\infty}$), i.e.:
\begin{equation}
\label{Eq:akaksjsssgggsgsssssss}
\norm{\rmT \rmQ-\rmT \rmQ'}_{\infty}\leq \gamma\norm{\rmQ-\rmQ'}_{\infty},\quad\forall \rmQ,\rmQ'\in\real^{\S\times\A},
\end{equation} 
where $\rmT$ is either $\rmT_{\pi}$ or $\rmT_{*}$.

\subsection{Single Agent Q-Learning}

In many practical applications, the agent in an MDP $(\S,\A,\rmr,\rmP)$ has no knowledge about the reward $\rmr$ and the transition probability $\rmP$. Consequently, she cannot simply solve the Bellman equations \eqref{Eq:ajajajsgsgsffssddafaaddaadsssdss} and \eqref{Eq:akjajsjshgdgdgdggdgdgdgdd}, or \eqref{Eq:akakshsgsggsffsgsgsggsgsssss}. 
One way to do this is by the so-called \textit{\textbf{Q-learning}} algorithm. Starting from an initial system state $S_{0}$, this algorithm maintains at each time step $t\in\nat_{0}$, the so-called Q-table $\rmQ_{t}$ which is a $\real^{\S\times \A}$-valued random variable, serving as an approximation of the optimal Q-function. For each time step $t\in\nat_{0}$, the update is as follows: First, the learner takes the action $A_{t}\sim\eta_{t}(\cdot|S_{t})$, where $\eta_{t}$ is a $\Delta_{\S}(\A)$-valued random variable,  experiences the reward $R_{t}:=\rmr(S_{t},A_{t})$, and observes the new state of the system $S_{t+1}\sim\rmP(\cdot|S_{t},A_{t})$. By those information, the agent finally update the Q-table at time $t\in\nat$ as follows: 
\begin{equation*}
Q_{t+1}(S_{t},A_{t})=(1-\gamma_{t}) Q_{t}(S_{t},A_{t})+\gamma_{t}[R_{t}+\beta\max_{a\in\A }Q_{t}(S_{t+1},a)],
\end{equation*}
and leaves the remaining entries unupdated, i.e., $Q_{t+1}(s,a)=Q_{t}(s,a)$, for all $(s,a)\neq (S_{t},A_{t})$.     

The sequence of $\Delta_{\S}(\A)$-valued random variable $(\eta_{t})_{t\in\nat_{0}}$ specifies the interaction of the learner with the system in the training phase, as it provides the choice of instantaneous action given the system state. By this reason, $(\eta_{t})_{t\in\nat_{0}}$ is called the \textbf{learning policy}. Generally, the learning policy depends on the historical learning data. One way to realize the learning policy is by generating it from the Q-table update $Q_{t}$ by means of a mapping $\Psi:\real^{\S\times\A}\rightarrow\Delta_{\S}(\A)$, i.e., $\eta_{t}=\Psi Q_{t}$. We refer $\Psi$ as to the \textit{\textbf{policy generator}}. In practice, one chooses the policy generator such that the corresponding learning policy follows the famous trade-off principle of exploration and exploitation. The first principle means that the probability of choosing any action should be non-zero, and the second principle means that the learning policy should be concentrated on the set of the best actions respective to the historical data. The latter is contained implicitly in the Q-table update as it is generated by historical rewards.  
In the following we provide some popular choices of learning policies:
\begin{example}[Boltzmann policy]
\label{Ex:Boltz}
An instance of a learning policy satisfying this principle is the \textit{\textbf{Boltzmann policy/strategy}} $\eta_{t}^{\Boltz,\tau}$ with the temperature $\tau$, which is generated by the policy generator $\Psi_{\Boltz,\tau}$ given by:
\begin{equation}
\label{Eq:ajajhhsgsgsggssssss}
(\Psi_{\Boltz,\tau} Q)(\cdot|s)=\Phi_{\Softmax,\tau} Q(\cdot,s),\quad\text{where }\Phi_{\Softmax,\tau}:\real^{\A}\rightarrow\Delta(\A),~(\Phi_{\Softmax,\tau}f)(a)\propto\exp\left(\frac{f(a)}{\tau}\right).
\end{equation}
The mapping $\Phi_{\Softmax,\tau}$ is called the \textbf{softmax} and plays an important role in the theory of decision making, i.a., machine learning. The Boltzmann policy $\eta_{t}^{\Boltz,\tau}$ takes specifically the following specific form: 
\begin{equation*}
 \eta_{t}^{\Boltz,\tau}(a|s)\propto\exp\left( \frac{Q_{t}(s,a)}{\tau}\right)
 \end{equation*} 
For $\tau\rightarrow 0$, the Boltzmann strategy tends to be the greedy w.r.t. $Q_{t}$  (exploitation), and for $\tau\rightarrow \infty$, it tends to choose the action with equal probability (exploration). 
\end{example}
\begin{example}
Another popular learning policy is the so-called \textbf{$\epsilon$-greedy policy} $\eta_{t}^{\Greed,\epsilon}$ ($\epsilon\in [0,1]$) generated by the policy generator $\Psi_{\Greed,\epsilon}$ given by:
 \begin{equation*}
 (\Psi_{\Greed,\epsilon} Q)(\cdot|s)=(1-\epsilon)+\epsilon\frac{\delta_{\argmax_{a'\in\A}Q_{t}(s,a')}(\cdot)}{\abs{\argmax_{a'\in\A}Q_{t}(s,a')}},
 \end{equation*}
\end{example}
In this work, we mostly consider the Boltzmann policy as a learning policy. One reason for this is the analytical convenience of the Boltzmann policy, which is founded by the fact that this learning policy possesses nice properties such as Lipschitz continuity. Furthermore, Boltzmann policy is based on the softmax selection rule which is a plausible model for a natural decision-making. For instance, it is shown in \cite{LeeConroy2004,KimHwang2009} that the behaviour of monkeys during reinforcement learning experiments is consistent with the Boltzman rule for learning policy. Furthermore, there are vast connections between the softmax rule and the neurophysiology of the decision-making (see e.g., \cite{daw2006cortical,Lee2006,Cohen2007,BOSSAERTS2015}) 

\subsection{Information Asymmetrical LA-GA Game Setting}
\label{Subsec:aajajsshshsggsgshddddssss}
\subsubsection{LA-GA Markov Game}
In this work, we consider the setting of Markov game (see, e.g., \cite{zhang2019multiagent}) with two players: The \textit{\textbf{localized agent (LA)}} and the  \textit{\textbf{globalized agent (GA)}}. The (finite) state space of system containing those agents is denoted by $\S$. $\A_{\Loc}$ stands for the (finite) action space of the LA, and $\A_{\Glob}$ for the action space of the GA. The function $\rmr^{\Loc}:\S\times\A_{\Loc}\times\A_{\Glob}\rightarrow\real$ specifying the LA's reward depends on the state of the system, the action of the GA, and her own action.
Similarly, GA's reward function is given by $\rmr^{\Glob}:\S\times\A_{\Loc}\times\A_{\Glob}\rightarrow\real$. Throughout, we assume that both agents are unaware of the reward functions. Assuming that the system is in the state $s\in\S$, and that the agents apply the joint action $(a_{\Loc},a_{\Glob})\in\A_{\Loc}\times\A_{\Glob}$, the system state changes in Markovian manner as in a MDP described by a probability distribution $\rmP(\cdot|s,a_{\Loc},a_{\Glob})\in\Delta(\S)$, unknown to both agents. 

In our study, we assume that LA can choose an action-deciding strategy for deciding her action from the set $\Delta_{\S}(\A_{\Loc})$, where for a $\pi_{\Loc}\in\Delta_{\S}(\A_{\Loc})$, $\pi_{\Loc}(a_{\Loc}|s)$ stands for the probability that LA chooses $a_{\Loc}\in\A_{\Loc}$ given that the system is in the state $s\in\S$. Moreover, we assume that GA can choose an action deciding strategy from the set $\Delta_{\S\times\A_{\Loc}}(\A_{\Glob})$. For a $\pi_{\Glob}\in\Delta_{\S\times\A_{\Loc}}(\A_{\Glob})$, $\pi_{\Glob}(a_{\Glob}|s,a_{\Loc})$ stands for the probability that GA chooses $a_{\Glob}\in\A_{\Glob}$ given that the system is in the state $s\in\S$ and that LA chooses the action $a_{\Loc}\in\A_{\Loc}$. This class of LA-GA strategy reflects the local-global information asymmetrical aspect of our interest, since it models the fact that GA can observe LA's action while the latter cannot observe the former's action. 

Align with the MDP setting, our focus in this paper is on the aforementioned modeled LA-GA Markov game in the repeated infinite horizon setting. Accordingly, we extend the notion of value function for a single agent MDP to the LA-GA setting by defining the LA's value function of the LA-GA strategy tuple $(\pi_{\Loc},\pi_{\Glob})$ as:
\begin{equation*}
V_{\pi_{\Loc},\pi_{\Glob}}^{\Loc}(s)=\Erw\left[\left. \sum_{t=0}^{\infty}\beta_{\Loc}^{t}\rmr^{\Loc}(S_{t},A^{\Loc}_{t},A^{\Glob}_{t})\right| \begin{aligned}&S_{0}=s,S_{t+1}\sim P(\cdot|S_{t},A_{t}),\\
& A^{\Loc}_{t}\sim\pi_{\Loc}(\cdot|S_{t}),~A^{\Glob}_{t}\sim\pi_{\Glob}(\cdot|S_{t},A_{\Loc,t})  \end{aligned}\right],
\end{equation*} 
where $\beta_{\Loc}\in (0,1)$ denotes the discount factor of LA.
The GA's value function $V_{\pi_{\Loc},\pi_{\Glob}}^{\Glob}$ is defined similar as above with the discount factor and the reward replaced by $\beta_{\Glob}\in(0,1)$ and $\rmr^{\Glob}$.
\subsubsection{Game Learning Model}
Since the underlying game is unknown to both agents, they have to learn in order to deduce the (agent-subjective) optimal strategy. Therefore, we also study the LA-GA training phase. In this phase, we consider a slight modification of the information asymmetrical Markov game described before with the following specification. 
Starting with the initial state $S_{0}=s\in\S$, the agents execute the following procedure for each time $t\in\nat_{0}$: 
\begin{itemize}
	\item First, LA chooses the (randomized) action $A_{t}^{\Loc}$ possibly utilizing the historical and actual system dynamic $(S_{\tau})_{\tau\in [t]}$, and some implicit information about the historical GA actions $(A^{\Glob}_{\tau})_{\tau\in [t-1]}$ (and also her actions $(A^{\Loc}_{\tau})_{\tau\in [t-1]}$)
	\item Afterwards GA observes LA's action $A_{t}^{\Loc}$. By utilizing this information, the historical and actual system dynamic $(S_{\tau})_{\tau\in [t]}$, and implicit information about her and LA's past actions $(A_{\tau}^{\Loc},A_{\tau}^{\Glob})_{\tau\in [t-1]}$, GA chooses the action $A_{t}^{\Glob}$.
	\item Finally LA (resp. GA) obtain the reward  $\rmr^{\Loc}(S_{t},A_{t}^{\Loc},A_{t}^{\Glob})$ 
	(resp. $\rmr^{\Glob}(S_{t},A_{t}^{\Loc},A_{t}^{\Glob})$) and subsequently the system transits to the state $S_{t+1}\sim \rmP(\cdot|S_{t},A_{t}^{\Loc},A_{t}^{\Glob})$. 
\end{itemize}
Our main interest is on the Q-learning-based LA-GA training behaviour which we will specify in the next section. 


For analysis purposes, we can represent the scope of LA information by means of the filtration $(\F_{t})_{t\in\nat_{0}}$, where $\F_{t}$ is the sigma-algebra generated by $(S_{\tau})_{\tau\in [t]_{0}}$, $(A^{\Loc}_{\tau})_{\tau\in [t]_{0}}$, $(A^{\Glob}_{\tau})_{\tau\in [t-1]_{0}}$:
\begin{equation*}
\F_{t}:=\sigma((S_{\tau})_{\tau\in [t]_{0}}, (A^{\Loc}_{\tau})_{\tau\in [t]_{0}}, (A^{\Glob}_{\tau})_{\tau\in [t-1]_{0}}).
\end{equation*}
$\F_{t}$ represents the (implicit and explicit) information LA have at time $t\in\nat$ after choosing an action $A_{t}^{\Loc}$.  As modeled in the previous paragraph, the scope of GA information in the training phase is different than LA. We represent this by a different filtration $(\tildF_{t})_{t\in\nat_{0}}$, where $\tildF_{t}$ denotes the $\sigma$-algebra generated by $\F_{t}$ and $A^{\Glob}_{t}$, i.e.:
\begin{equation*}
    \tilde{\mathcal{F}}_{t}:=\sigma(\F_{t},A_{t}^{\Glob})=\sigma((S_{\tau})_{\tau\in [t]_{0}}, (A^{\Loc}_{\tau})_{\tau\in [t]_{0}}, (A^{\Glob}_{\tau})_{\tau\in [t]_{0}}).
\end{equation*}
The $\sigma$-algebra $\tildF_{t}$ represents the (implicit and explicit) information LA have at time $t$ after choosing an action $A_{t}^{\Glob}$.  In this model, the information that GA can utilize at time $t\in\nat_{0}$ for choosing the action $A_{t}^{\Glob}$ can be symbolized by $\F_{t}$. 

\section{LA-GA Q-learning -- Convergence Result}
\label{Sec:ConvThm}
\begin{algorithm}[htbp]
	\caption{LA Q-learning with Globalized Interference (LAQGI)}
	\begin{algorithmic}
		\STATE \textbf{Extrinsic Parameter:} LA reward $\rmr^{\Loc}$, system dynamic $\rmP$, GA policy $(\eta_{t}^{\Glob})_{t\in\nat_{0}}$, with 
		\STATE \textbf{LA parameter:} Policy generator $\Psi^{\Loc}:\real^{S\times\A_{\Loc}}\rightarrow \Delta_{\S}(\A_{\Loc})$, learning rate $\gamma_{t}^{\Loc}>0$, discount factor $\beta^{\Loc}$ 
		\STATE \textbf{Initialization:} $Q_{0}^{\LQ}\in\real^{\S\times\A}$
		%
		\FOR{$t=0,1,2,\ldots$}
		
		\STATE Execute the action $A_{t}^{\Loc}\sim\eta_{t}^{\LQ}(\cdot|S_{t})$, according to the LA learning policy $\eta_{t}^{\LQ}:=\Psi^{\Loc}Q^{\LQ}_{t}$
		\STATE Experience $R^{\Loc}_{t}:=\rmr^{\Loc}(S_{t},A^{\Glob}_{t},A^{\Loc}_{t})$, where $A^{\Glob}_{t}\sim \eta^{\Glob}_{t}(\cdot|\F_{t})$
		\STATE Query system state information $S_{t+1}\sim \rmP(\cdot|S_{t},A^{\Loc}_{t},A^{\Glob}_{t})$
		\STATE Update: 
		$$Q^{\LQ}_{t+1}(S_{t},A^{\Loc}_{t})=(1-\gamma^{\Loc}_{t}) Q^{\LQ}_{t}(S_{t},A^{\Loc}_{t})+\gamma^{\Loc}_{t}H^{\Loc}_{t+1},$$
		where:
		$$H^{\Loc}_{t+1}=R^{\Loc}_{t}+\beta^{\Loc}\max_{a_{\Loc}^{'}\in \A_{\Loc}}Q^{\LQ}_{t}(S_{t+1},a_{\Loc}^{'}).$$
		\FOR{all $(s,a_{\Loc})\neq (S_{t},A^{\Loc}_{t})$}
		\STATE Update $Q^{\LQ}_{t+1}(s,a_{\Loc})=Q^{\LQ}_{t}(s,a_{\Loc})$.
		\ENDFOR
		
		\ENDFOR
	\end{algorithmic}
	\label{Alg:ohgfdfddrrdrdrdrdrdghhgggdsa}
\end{algorithm}
\setlength{\textfloatsep}{0pt}
In this section, we extend the single agent Q-learning paradigm  to the informational asymmetrical Markov game setting given in Subsection \ref{Subsec:aajajsshshsggsgshddddssss}. Our particular interest is on the convergence behaviour of the given Q-learning extension. 
\subsection{LA Q-learning}

First, we model the Q-learning iterate for LA by straightforwardly extend the single agent Q-learning to the LA reward structure having additional dependency on the GA action. The specific description is given in Algorithm \ref{Alg:ohgfdfddrrdrdrdrdrdghhgggdsa}. In our LA Q-learning model, we assume in this subsection that the GA chooses her action according to a general time-varying policy $(\eta_{t}^{\Glob})_{t\in\nat_{0}}$, where $\eta_{t}^{\Glob}(s|\F_{t})$ is a $\Delta(\A_{\Glob})$-valued random variable, which might depend on the (implicit information of the) state-action history according to the GA scope of information (see Subsection \ref{Subsec:aajajsshshsggsgshddddssss}). We formalize the latter, by assuming that for every $t\in\nat_{0}$, $\eta_{t}^{\Glob}$ is $\F_{t}$-measureable. To emphasize the latter dependency, we sometimes use the notation $\eta_{t}^{\Glob}(\cdot|\F_{t})$ instead of $\eta_{t}$. The analytical use of $\eta_{t}^{\Glob}$ is to be understood as:
\begin{equation}
\label{Eq:ajkajjhssghsgsgsgsssss}
\Erw[f(A_{t}^{\Glob})|\F_{t}]=\sum_{a_{\Glob}\in \A_{\Glob}}\eta_{t}(a_{\Glob}|\F_{t})f(a_{\Glob}),\quad A_{t}^{\Glob}\sim\eta_{t}(\cdot|\F_{t}),~f:\A_{\Glob}\rightarrow\real.
\end{equation}	

In the following, we specify some conditions leading to the convergence of LAQGI and determine the corresponding limit:
\begin{theorem}
	\label{Thm:akakashshssggdhdhdggdgdhdhd}
Suppose that there exists $\eta^{\Glob}_{\infty}(\cdot)\in\Delta_{\S\times\A_{ \Loc}}(\A_{\Glob})$ s.t.:
\begin{equation}
\label{Eq:jjaajjsgsgsfsffsfsgsgsss}
\lim_{t\rightarrow\infty}\norm{\eta^{\Glob}_{t}(\cdot|\F_{t})-\eta^{\Glob}_{\infty}(\cdot|S_{t},A^{\Loc}_{t})}=0,\quad \text{a.s.}
\end{equation}
and that:
\begin{equation}
\label{Eq:ajajshshsggshhsgsgsgssssss}
\sum_{t=0}^{\infty}\psi_{t}^{\Loc}=\infty~ \text{and}~ \sum_{t=0}^{\infty}\psi_{t}^{\Loc,2}<\infty~\text{a.s.,}\quad \text{where}\quad \psi^{\Loc}_{t}(s,a)=\mathbf{1}_{\lrbrace{S_{t}=s,A^{\Loc}_{t}=a}}\gamma^{\Loc}_{t}.
\end{equation}

Then the a.s. limit $\tildrmQ^{\Loc}_{\LAQGI}$ of LAQGI's iterate $(Q^{\LQ})_{t\in\nat_{0}}$ is the optimal $Q$-function of the discounted MDP $(\S,\A_{\Loc},\tildrmr^{\Loc},\tildrmP^{\Loc},\beta_{\Loc})$, where for $(s,a_{\Loc})\in\S\times \A_{\Loc}$:
\begin{equation}
\label{Eq:ajjahsgsgsgsffffsffsssssss}
\tildrmr^{\Loc}(s,a_{\Loc}):=\Erw_{A_{\Glob}\sim\eta_{\infty}^{\Glob}(\cdot|s,a_{\Loc})}[\rmr^{\Loc}(s,a_{\Loc},A_{\Glob})]\quad\text{and}\quad\tildrmP^{\Loc}(\cdot|s,a_{\Loc}):=\Erw_{A_{\Glob}\sim\eta^{\Glob}_{\infty}(\cdot|s,a_{\Loc})}[\rmP(\cdot|s,a_{\Loc},A_{\Glob})].
\end{equation}
\end{theorem}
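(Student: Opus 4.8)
The plan is to view the LAQGI recursion as an asynchronous stochastic approximation whose intended drift is the optimal Bellman operator of the effective single-agent MDP $(\S,\A_{\Loc},\tildrmr^{\Loc},\tildrmP^{\Loc},\beta_{\Loc})$, and then to close the argument with a classical stochastic-approximation convergence theorem. First I would transport the contraction property \eqref{Eq:akaksjsssgggsgsssssss} to this effective MDP: the operator
$$(\tilde{\rmT}_{*}Q)(s,a_{\Loc})=\tildrmr^{\Loc}(s,a_{\Loc})+\beta_{\Loc}\Erw_{S'\sim\tildrmP^{\Loc}(\cdot|s,a_{\Loc})}\left[\max_{a'\in\A_{\Loc}}Q(S',a')\right]$$
is a $\beta_{\Loc}$-contraction on $(\real^{\S\times\A_{\Loc}},\norm{\cdot}_{\infty})$, so by \eqref{Eq:akakshsgsggsffsgsgsggsgsssss} it possesses a unique fixed point, which is precisely the optimal $Q$-function $\tildrmQ^{\Loc}_{\LAQGI}$ of that MDP. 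The target is therefore to prove $Q^{\LQ}_{t}\to\tildrmQ^{\Loc}_{\LAQGI}$ almost surely, i.e.\ that the error $\Delta_{t}:=Q^{\LQ}_{t}-\tildrmQ^{\Loc}_{\LAQGI}$ vanishes.

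Next I would rewrite the update in componentwise form using the effective step size $\psi^{\Loc}_{t}$ of \eqref{Eq:ajajshshsggshhsgsgsgssssss},
$$Q^{\LQ}_{t+1}(s,a_{\Loc})=(1-\psi^{\Loc}_{t}(s,a_{\Loc}))Q^{\LQ}_{t}(s,a_{\Loc})+\psi^{\Loc}_{t}(s,a_{\Loc})H^{\Loc}_{t+1},$$
and decompose the increment $H^{\Loc}_{t+1}$ around its $\F_{t}$-conditional mean. Since $Q^{\LQ}_{t}$ is $\F_{t}$-measurable and conditioning on $\F_{t}$ integrates out $A^{\Glob}_{t}\sim\eta^{\Glob}_{t}(\cdot|\F_{t})$ followed by $S_{t+1}\sim\rmP(\cdot|S_{t},A^{\Loc}_{t},A^{\Glob}_{t})$, one obtains
$$\Erw[H^{\Loc}_{t+1}\,|\,\F_{t}]=(\tilde{\rmT}_{*}Q^{\LQ}_{t})(S_{t},A^{\Loc}_{t})+b_{t},\qquad b_{t}=(\tilde{\rmT}^{t}_{*}Q^{\LQ}_{t}-\tilde{\rmT}_{*}Q^{\LQ}_{t})(S_{t},A^{\Loc}_{t}),$$
where $\tilde{\rmT}^{t}_{*}$ denotes the operator built exactly as $\tilde{\rmT}_{*}$ but with the running GA policy $\eta^{\Glob}_{t}(\cdot|\F_{t})$ in place of its limit. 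The residual $w_{t}:=H^{\Loc}_{t+1}-\Erw[H^{\Loc}_{t+1}\,|\,\F_{t}]$ is by construction a martingale difference with respect to $(\F_{t})_{t\in\nat_{0}}$.

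The three hypotheses of the stochastic-approximation lemma would then be checked in turn. The drift is contractive toward the fixed point, since $\norm{\tilde{\rmT}_{*}Q^{\LQ}_{t}-\tildrmQ^{\Loc}_{\LAQGI}}_{\infty}=\norm{\tilde{\rmT}_{*}Q^{\LQ}_{t}-\tilde{\rmT}_{*}\tildrmQ^{\Loc}_{\LAQGI}}_{\infty}\le\beta_{\Loc}\norm{\Delta_{t}}_{\infty}$ by the first step. The noise has conditionally bounded second moment, $\Erw[w_{t}^{2}\,|\,\F_{t}]\le C(1+\norm{Q^{\LQ}_{t}}^{2}_{\infty})$, because $\rmr^{\Loc}$ is bounded on the finite set $\S\times\A_{\Loc}\times\A_{\Glob}$ and $\max_{a'}(\cdot)$ is nonexpansive. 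Finally, the bias obeys
$$|b_{t}|\le\norm{\eta^{\Glob}_{t}(\cdot|\F_{t})-\eta^{\Glob}_{\infty}(\cdot|S_{t},A^{\Loc}_{t})}\bigl(\norm{\rmr^{\Loc}}_{\infty}+\beta_{\Loc}\norm{Q^{\LQ}_{t}}_{\infty}\bigr);$$
writing $\norm{Q^{\LQ}_{t}}_{\infty}\le\norm{\Delta_{t}}_{\infty}+\norm{\tildrmQ^{\Loc}_{\LAQGI}}_{\infty}$ recasts this as $|b_{t}|\le c_{t}(1+\norm{\Delta_{t}}_{\infty})$ with $c_{t}\to0$ almost surely by the policy-convergence hypothesis \eqref{Eq:jjaajjsgsgsfsffsfsgsgsss}.

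With the Robbins--Monro step-size conditions \eqref{Eq:ajajshshsggshhsgsgsgssssss} in hand, I would then invoke the classical asynchronous stochastic-approximation theorem (in the Jaakkola--Jordan--Singh / Bertsekas--Tsitsiklis formulation that tolerates an asymptotically vanishing, $\norm{\Delta_{t}}$-proportional perturbation of a contractive drift) to conclude $\Delta_{t}\to0$, hence $Q^{\LQ}_{t}\to\tildrmQ^{\Loc}_{\LAQGI}$ almost surely. The delicate point is the bias: because $b_{t}$ couples $\norm{\eta^{\Glob}_{t}-\eta^{\Glob}_{\infty}}$ with the a priori possibly unbounded $\norm{Q^{\LQ}_{t}}_{\infty}$, one must either secure almost-sure boundedness of the iterates separately or, as above, absorb the iterate magnitude into a time-varying effective contraction factor $\beta_{\Loc}+c_{t}$ that is $<1$ only eventually; verifying that the chosen convergence theorem genuinely admits this merely-asymptotic, random contraction is where the real work lies.
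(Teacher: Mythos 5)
Your proposal is correct and follows essentially the same route as the paper: your decomposition of $H^{\Loc}_{t+1}$ into the limit drift $\tilde{\rmT}_{*}Q^{\LQ}_{t}$, the bias $b_{t}$ (relative to the operator built with the running policy $\eta^{\Glob}_{t}$), and the $\F_{t}$-martingale difference $w_{t}$ is exactly the paper's splitting into $\tildrmT^{\Loc}Q^{\LQ}_{t}$, $U_{t}$, and $W_{t+1}$, with the same moment and bias estimates (Lemmas \ref{Lem:aajsjshhsggdddhhdd} and \ref{Lem:jajahshgsggsgsgsgsgsgss}). The "delicate point" you flag is already handled by the stochastic-approximation result the paper invokes (Proposition \ref{Prop:ajjahhssggsgdddddd}), whose fourth condition admits precisely a perturbation bounded by $\theta_{t}(\norm{X_{t}}_{\infty}+1)$ with $\theta_{t}\rightarrow 0$ a.s., so no separate boundedness argument for the iterates is needed.
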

The proof of above Theorem is given in Subsubsection \ref{Subsubsec:LAQGI}.
\begin{remark}
\label{Rem:ajajssgggffsssss}
Above theorem gives hint that LA learns via Q-learning how to act optimally in expectation given GA's stationary strategy (see Lemma 6), remarkably without knowing the latter. This property is interesting for e.g., security applications, where LA is a defender and GA is an attacker, since it implies that Q-learning helps the defender to learn optimal defend policy. However, it is not yet clear, whether, by applying the greedy policy resulted from learning phase, LA has indeed an optimal discounted cumulative reward. the discounted yields of the LA. We will clarify this aspect in the next section. 
\end{remark}



\subsection{GA Q-Learning}
Our actual interest is on the behaviour of a Q-learning applying GA. As GA has informational advantage over LA by knowing the latter's instantaneous action, we assume that she utilizes this information in the learning phase and executes Q-table update for each observed LA action. Our proposal of GA Q-learning is specifically given in Algorithm 	\ref{Alg:ohgfdfddrrdrdrdrdrdghhgggdsadxxx}.

\begin{algorithm}[htbp]
	\caption{GA Q-Learning (GAQL)}
	\begin{algorithmic}
		\STATE \textbf{Extrinsic Parameter:} GA reward $\rmr^{\Glob}$, system dynamic $\rmP$, LA policy $(\eta_{t}^{\Loc})_{t\in\nat_{0}}$
		\STATE \textbf{GA parameter:} Policy generator $\Psi^{\Glob}:\real^{\S\times\A_{\Glob}\times\A_{\Loc}}\rightarrow\Delta_{\S\times\A_{\Loc}}(\A_{\Glob})$, learning rate $\gamma_{t}^{\Loc}$, discount factor $\beta^{\Loc}$ 
		\STATE \textbf{Initialization:} $Q^{\GQ}_{0}\in\real^{\S\times\A_{\Glob}\times\A_{\Loc}}$, $S_{0}\in\S$.
		
		\FOR{$t=0,2,\ldots$}
		\STATE Query LA's action $A_{t}^{\Loc}\sim\eta^{\Loc}_{t}(\cdot|S_{t})$
		\STATE Execute the action $A_{t}^{\Glob}\sim\eta_{t}^{\GQ}(\cdot|S_{t},A_{t}^{\Loc})$, according to the GA learning policy $\eta_{t}^{\GQ}:=\Psi^{\Glob}Q^{\GQ}_{t}$
		
		\STATE  Experience the reward 	$R^{\Glob}_{t}:=\rmr^{\Glob}(S_{t},A^{\Glob}_{t},A^{\Loc}_{t})$
		\STATE Query system state information $S_{t+1}\sim \rmP(\cdot|S_{t},A^{\Loc}_{t},A^{\Glob}_{t})$
		\STATE Update:
		\begin{equation*}
		 Q^{\GQ}_{A^{\Loc}_{t},t+1}(S_{t},A^{\Glob}_{t})=(1-\gamma^{\Glob}_{t}) Q^{\GQ}_{A^{\Loc}_{t},t}(S_{t},A^{\Glob}_{t})+\gamma^{\Glob}_{t}H^{\Glob}_{t+1}  
		\end{equation*}
		where 	$H^{\Glob}_{t+1}=R^{\Glob}_{t}+\beta_{\Glob}\max_{a^{'}_{\Glob}\in \A_{\Glob}}Q^{\GQ}_{A^{\Loc}_{t},t}(S_{t+1},a_{\Glob})$.
		\FOR{all $(s,a_{\Glob},a_{\Loc})\neq (S_{t},A^{\Glob}_{t},A^{\Loc}_{t})$}
		\STATE Update $Q^{\GQ}_{a_{\Loc},t+1}(s,a_{\Glob})=Q^{\GQ}_{a_{\Loc},t}(s,a_{\Glob})$
		\ENDFOR
		\ENDFOR
	\end{algorithmic}
	\label{Alg:ohgfdfddrrdrdrdrdrdghhgggdsadxxx}
\end{algorithm}

The following Theorem gives sufficient conditions for the convergence of GAQL: 
\begin{theorem}
	\label{Thm:kiaiaahshsgsgsgsfffsfsfssss}
Suppose that:
\begin{equation}
\label{Eq:jajaajasggsgsfdffdfddddd}
\sum_{t=0}^{\infty}\psi_{t}^{\Glob}=\infty\quad\text{and}\quad \sum_{t=0}^{\infty}\psi_{t}^{\Glob,2}<\infty,   
\end{equation}
where:
\begin{equation*}
\psi_{t}^{\Glob}(s,a_{\Loc},a_{\Glob}):=\mathbf{1}_{\lrbrace{S_{t}=s,A_{t}^{\Loc}=a_{\Loc},A_{t}^{\Glob}=a_{\Glob}}}\gamma_{t}.
\end{equation*}  
Then $\forall a_{\Loc}\in \A_{\Loc}$, the a.s. limit 
$\tildrmQ^{\Glob}_{\GAQL,a_{\Loc}}$ 
of GAQL's iterates $(Q^{\GQ}_{a_{\Loc},t})_{t\in\nat 0}$ is the optimal Q-function of the discounted MDP $(\S,\A_{\Glob},\tildrmr^{\Glob}_{a_{\Loc}},\tildrmP^{\Glob}_{a_{\Loc}},\beta_{\Loc})$, where:
\begin{equation*}
\tildrmr_{a_{\Loc}}^{\Glob}(s,a_{\Glob}):=\rmr^{\Glob}(s,a_{\Glob},a_{\Loc})\quad\text{and}\quad \tildrmP_{a_{\Loc}}^{\Glob}(s'|s,a_{\Glob}):=\rmP(s'|s,a_{\Glob},a_{\Loc}).
\end{equation*}
\end{theorem}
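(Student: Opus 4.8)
The plan is to recognize that, for each fixed $a_{\Loc}\in\A_{\Loc}$, the subprocess $(Q^{\GQ}_{a_{\Loc},t})_{t\in\nat_{0}}$ is nothing but an \emph{asynchronous single-agent Q-learning} recursion for the sliced discounted MDP $(\S,\A_{\Glob},\tildrmr^{\Glob}_{a_{\Loc}},\tildrmP^{\Glob}_{a_{\Loc}},\beta_{\Glob})$, so that the statement reduces to the single-agent Q-learning convergence machinery already invoked for Theorem \ref{Thm:akakashshssggdhdhdggdgdhdhd}. First I would fix $a_{\Loc}$ and read off from Algorithm \ref{Alg:ohgfdfddrrdrdrdrdrdghhgggdsadxxx} that the table $Q^{\GQ}_{a_{\Loc},\cdot}$ is modified only on those random steps $t$ with $A^{\Loc}_{t}=a_{\Loc}$, and is left unchanged otherwise. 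On such a step the experienced reward is $R^{\Glob}_{t}=\rmr^{\Glob}(S_{t},A^{\Glob}_{t},a_{\Loc})=\tildrmr^{\Glob}_{a_{\Loc}}(S_{t},A^{\Glob}_{t})$ and the successor state obeys $S_{t+1}\sim\rmP(\cdot|S_{t},A^{\Glob}_{t},a_{\Loc})=\tildrmP^{\Glob}_{a_{\Loc}}(\cdot|S_{t},A^{\Glob}_{t})$, so the data driving this table are exactly samples of the sliced MDP. Since distinct tables are updated on disjoint time sets, they decouple completely and I may analyse each $a_{\Loc}$ in isolation.

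Next I would cast the recursion into stochastic-approximation form. With the per-entry step size $\psi_{t}^{\Glob}(s,a_{\Loc},a_{\Glob})=\mathbf{1}_{\lrbrace{S_{t}=s,A^{\Loc}_{t}=a_{\Loc},A^{\Glob}_{t}=a_{\Glob}}}\gamma_{t}$, the update reads
\begin{equation*}
Q^{\GQ}_{a_{\Loc},t+1}(s,a_{\Glob})=\bigl(1-\psi_{t}^{\Glob}(s,a_{\Loc},a_{\Glob})\bigr)Q^{\GQ}_{a_{\Loc},t}(s,a_{\Glob})+\psi_{t}^{\Glob}(s,a_{\Loc},a_{\Glob})\bigl[(\rmT_{*}Q^{\GQ}_{a_{\Loc},t})(s,a_{\Glob})+w_{t+1}(s,a_{\Glob})\bigr],
\end{equation*}
where $\rmT_{*}$ is the optimal Bellman operator of the sliced MDP and $w_{t+1}:=H^{\Glob}_{t+1}-(\rmT_{*}Q^{\GQ}_{a_{\Loc},t})(S_{t},A^{\Glob}_{t})$. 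Taking the filtration $(\tildF_{t})_{t\in\nat_{0}}$ (for which $S_{t},A^{\Loc}_{t},A^{\Glob}_{t}$ are measurable but $S_{t+1}$ is not), the Markov property of $\rmP$ gives, on the event $\lrbrace{A^{\Loc}_{t}=a_{\Loc}}$,
\begin{equation*}
\Erw[H^{\Glob}_{t+1}\mid\tildF_{t}]=\tildrmr^{\Glob}_{a_{\Loc}}(S_{t},A^{\Glob}_{t})+\beta_{\Glob}\Erw_{S'\sim\tildrmP^{\Glob}_{a_{\Loc}}(\cdot|S_{t},A^{\Glob}_{t})}\bigl[\max_{a'_{\Glob}\in\A_{\Glob}}Q^{\GQ}_{a_{\Loc},t}(S',a'_{\Glob})\bigr]=(\rmT_{*}Q^{\GQ}_{a_{\Loc},t})(S_{t},A^{\Glob}_{t}),
\end{equation*}
so $(w_{t+1})$ is a martingale-difference sequence with respect to $(\tildF_{t})$; finiteness of $\S,\A_{\Glob}$ and boundedness of $\rmr^{\Glob}$ yield a quadratic conditional-variance bound $\Erw[w_{t+1}^{2}\mid\tildF_{t}]\le C\bigl(1+\norm{Q^{\GQ}_{a_{\Loc},t}}_{\infty}^{2}\bigr)$.

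Finally I would invoke the asynchronous stochastic-approximation result underlying Theorem \ref{Thm:akakashshssggdhdhdggdgdhdhd}: since $\rmT_{*}$ is a $\beta_{\Glob}$-contraction in $\norm{\cdot}_{\infty}$ by \eqref{Eq:akaksjsssgggsgsssssss}, the driving noise is a conditionally mean-zero martingale difference with the above variance bound, and the per-entry step sizes obey the Robbins--Monro conditions \eqref{Eq:jajaajasggsgsfdffdfddddd}, the iterate $Q^{\GQ}_{a_{\Loc},t}$ converges a.s. to the unique fixed point of $\rmT_{*}$. By \eqref{Eq:akakshsgsggsffsgsgsggsgsssss} this fixed point is precisely the optimal Q-function $\tildrmQ^{\Glob}_{\GAQL,a_{\Loc}}$ of the sliced MDP. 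As $\A_{\Loc}$ is finite, the finitely many almost-sure convergence statements hold simultaneously, giving the claim for all $a_{\Loc}$ at once.

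The bookkeeping that restricts attention to the $\lrbrace{A^{\Loc}_{t}=a_{\Loc}}$ steps and the contraction/step-size verification transfer essentially verbatim from the single-agent setting. The one point deserving care---and the analogue of the delicate step in Theorem \ref{Thm:akakashshssggdhdhdggdgdhdhd}---is the conditional-mean identity above: I must use the correct filtration $(\tildF_{t})$ together with the Markov property to collapse $\Erw[H^{\Glob}_{t+1}\mid\tildF_{t}]$ onto $\rmT_{*}Q^{\GQ}_{a_{\Loc},t}$ exactly on the event $\lrbrace{A^{\Loc}_{t}=a_{\Loc}}$. I expect this to be genuinely \emph{easier} here than in Theorem \ref{Thm:akakashshssggdhdhdggdgdhdhd}, because $\tildrmr^{\Glob}_{a_{\Loc}}$ and $\tildrmP^{\Glob}_{a_{\Loc}}$ are exact restrictions of the game primitives rather than expectations against a limiting opponent policy $\eta^{\Glob}_{\infty}$, so no asymptotic-policy approximation term of the form $\norm{\eta^{\Glob}_{t}-\eta^{\Glob}_{\infty}}\to0$ needs to be controlled.
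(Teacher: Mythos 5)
Your proposal is correct and follows essentially the same route as the paper's own proof: slice the iterate by $a_{\Loc}$, cast it as the stochastic-approximation recursion \eqref{Eq:aahshsggsgsfsfsfsssssssss} with $U_{t}=0$, verify via the filtration $(\tildF_{t})$ that the noise $H^{\Glob}_{t+1}-(\rmT_{*}Q^{\GQ}_{a_{\Loc},t})(S_{t},A^{\Glob}_{t})$ is a conditionally mean-zero martingale difference with the quadratic variance bound, use the $\beta_{\Glob}$-contraction of the sliced optimal Bellman operator, and conclude by Proposition \ref{Prop:ajjahhssggsgdddddd} under the step-size conditions \eqref{Eq:jajaajasggsgsfdffdfddddd}. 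Your closing observation that no vanishing-perturbation term $U_{t}$ (as in Theorem \ref{Thm:akakashshssggdhdhdggdgdhdhd}) arises, because $\tildrmr^{\Glob}_{a_{\Loc}}$ and $\tildrmP^{\Glob}_{a_{\Loc}}$ are exact restrictions of the game primitives, is precisely the simplification the paper exploits.
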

The proof of above Theorem is given in Subsubsection \ref{Subsubsec:GAQL}.

\begin{remark}
\label{Rem:ajajajashhhsgsgsgss}
Above theorem gives the hint that GA learns via GAQL the optimal strategies given that LA executes a constant action, it learns the optimal Q-function of the relevant MDP. At the first sight, this might affect adversely GA performance since LA's action rather changes over the time. However, we will see later in the next section (Lemma \ref{Lem:ahhasgsgsfsfsfsffssss}) that this is not true: GA learns via GAQL greedy policy given that LA applies a strategy from the class of deterministic strategies including LA optimal policy according to MDP theory.
\end{remark}


\subsection{Joint LA-GA Q-Learning}
In this subsection, we jointly consider the Q-learning LA applying LAQGI (Algorithm \ref{Alg:ohgfdfddrrdrdrdrdrdghhgggdsa}) and the Q-learning GA applying the GAQL (Algorithm \ref{Alg:ohgfdfddrrdrdrdrdrdghhgggdsadxxx}). We specifically link both algorithms, by setting the extrinsic GA policy in Algorithm \ref{Alg:ohgfdfddrrdrdrdrdrdghhgggdsa} (LAQGI) as the GA learning policy in Algorithm \ref{Alg:ohgfdfddrrdrdrdrdrdghhgggdsadxxx} given the actual state and LA action, and by setting the extrinsic LA policy in Algorithm \ref{Alg:ohgfdfddrrdrdrdrdrdghhgggdsadxxx} as the LA learning policy $\eta^{\LQ}_{t}$ in Algorithm \ref{Alg:ohgfdfddrrdrdrdrdrdghhgggdsa}, i.e.: 
\begin{assum}
\label{Ass:Joint}
Consider Algorithms \ref{Alg:ohgfdfddrrdrdrdrdrdghhgggdsa} and \ref{Alg:ohgfdfddrrdrdrdrdrdghhgggdsadxxx}. For each time $t\in\nat$, we set:
\begin{equation*}
\eta^{\Glob}_{t}(\cdot|\F_{t})=\eta^{\GQ}_{t}(\cdot|S_{t},A_{t}^{\Loc})\quad\text{and}\quad\eta_{t}^{\Loc}=\eta^{\LQ}_{t}
\end{equation*}
\end{assum}
We refer the above specified joint LA-GA Q-learning as \textbf{LAQGI-GAQL}. In the following, we provide the convergence guarantee of this joint Q-learning paradigm:
\begin{theorem}
\label{Thm:JointQLearn}
Suppose that \eqref{Eq:ajajshshsggshhsgsgsgssssss}, \eqref{Eq:jajaajasggsgsfdffdfddddd}, and Assumption \ref{Ass:Joint} hold. Moreover, suppose that $\Psi^{\Glob}$ is continuous. Then:
\begin{enumerate}
\item the Q-iterates $(Q^{\GQ}_{t})_{t\in\nat_{0}}$ of GAQL converges to the optimal Q-function $\tildrmQ^{\Glob}_{\LAQGI}$ of the discounted MDP $(\S,\A_{\Glob},\tildrmr^{\Glob}_{a_{\Loc}},\tildrmP^{\Glob}_{a_{\Loc}},\beta_{\Loc})$, where:
\begin{equation*}
\tildrmr_{a_{\Loc}}^{\Glob}(s,a_{\Glob}):=\rmr^{\Glob}(s,a_{\Glob},a_{\Loc})\quad\text{and}\quad \tildrmP_{a_{\Loc}}^{\Glob}(s'|s,a_{\Glob}):=\rmP(s'|s,a_{\Glob},a_{\Loc}).
\end{equation*}
\item for any $a_{\Loc}\in\A_{\Loc}$, the Q-iterates $(Q^{\LQ}_{a_{\Loc},t})_{t\in\nat_{0}}$ of LAQGI converges a.s. to the optimal $Q$-function $\tildrmQ^{\Loc}_{\GAQL,a_{\Loc}}$ of the discounted MDP $(\S,\A_{\Loc},\tildrmr_{\Loc},\tildrmP_{\Loc},\beta_{\Loc})$, where for $(s,a_{\Loc})\in\S\times \A_{\Loc}$:
\begin{equation}
\label{Eq:ajajahshssggssgshhsgsgshss}
\tildrmr^{\Loc}(s,a_{\Loc}):=\Erw_{A_{\Glob}\sim(\Psi^{\Glob}\tildrmQ^{\Glob}_{\GAQL})(\cdot|s,a_{\Loc})}[\rmr^{\Loc}(s,a_{\Loc},A_{\Glob})]\quad\text{and}\quad\tildrmP^{\Loc}(\cdot|s,a_{\Loc}):=\Erw_{A_{\Glob}\sim(\Psi^{\Glob}\tildrmQ^{\Glob}_{\GAQL})(\cdot|s,a_{\Loc})}[\rmP(\cdot|s,a,A_{\Glob})].
\end{equation}
\end{enumerate}
\end{theorem}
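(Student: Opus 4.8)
The plan is to exploit the hierarchical decoupling afforded by Theorems \ref{Thm:akakashshssggdhdhdggdgdhdhd} and \ref{Thm:kiaiaahshsgsgsgsfffsfsfssss}: although the two learners are coupled through Assumption \ref{Ass:Joint}, the convergence of GAQL established in Theorem \ref{Thm:kiaiaahshsgsgsgsfffsfsfssss} holds for an \emph{arbitrary} LA learning policy, requiring only the step-size condition \eqref{Eq:jajaajasggsgsfdffdfddddd}. This observation breaks any apparent circularity and lets me settle GA's convergence first, then feed it into LA's analysis.

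For part (1), I would simply invoke Theorem \ref{Thm:kiaiaahshsgsgsgsfffsfsfssss} with the LA policy specialized, via Assumption \ref{Ass:Joint}, to $\eta_t^{\Loc}=\eta_t^{\LQ}=\Psi^{\Loc}Q_t^{\LQ}$. Since the hypotheses of Theorem \ref{Thm:kiaiaahshsgsgsgsfffsfsfssss} place no restriction on the LA policy beyond the exploration encoded in \eqref{Eq:jajaajasggsgsfdffdfddddd}, the conclusion is immediate: for every $a_{\Loc}\in\A_{\Loc}$ the iterate $(Q^{\GQ}_{a_{\Loc},t})_{t}$ converges a.s.\ to $\tildrmQ^{\Glob}_{\GAQL,a_{\Loc}}$, the optimal Q-function of the stated MDP. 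The crucial structural point is that this limit is \emph{deterministic}: the reward $\tildrmr_{a_{\Loc}}^{\Glob}$ and kernel $\tildrmP_{a_{\Loc}}^{\Glob}$ are fixed slices of the known $\rmr^{\Glob}$ and $\rmP$, so the corresponding optimal Q-function carries no residual randomness.

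For part (2), the idea is to cast the joint dynamics into the hypothesis of Theorem \ref{Thm:akakashshssggdhdhdggdgdhdhd}. Under Assumption \ref{Ass:Joint}, the GA policy seen by LAQGI is $\eta_t^{\Glob}(\cdot|\F_t)=(\Psi^{\Glob}Q_t^{\GQ})(\cdot|S_t,A_t^{\Loc})$, a measurable function of $(S_t,A_t^{\Loc})$ through the continuous map $\Psi^{\Glob}$. Combining part (1) with the assumed continuity of $\Psi^{\Glob}$, I would conclude that $\Psi^{\Glob}Q_t^{\GQ}\to\Psi^{\Glob}\tildrmQ^{\Glob}_{\GAQL}$ a.s.; because the limit is the deterministic stationary policy $\eta^{\Glob}_{\infty}:=\Psi^{\Glob}\tildrmQ^{\Glob}_{\GAQL}\in\Delta_{\S\times\A_{\Loc}}(\A_{\Glob})$, this yields exactly the convergence \eqref{Eq:jjaajjsgsgsfsffsfsgsgsss} required by Theorem \ref{Thm:akakashshssggdhdhdggdgdhdhd}. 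With hypothesis \eqref{Eq:jjaajjsgsgsfsffsfsgsgsss} verified and the step-size condition \eqref{Eq:ajajshshsggshhsgsgsgssssss} assumed, Theorem \ref{Thm:akakashshssggdhdhdggdgdhdhd} applies verbatim and identifies the a.s.\ limit of $(Q^{\LQ}_t)_t$ as the optimal Q-function of the MDP with reward and kernel \eqref{Eq:ajajahshssggssgshhsgsgshss}, obtained by averaging against $\eta^{\Glob}_{\infty}$.

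The main obstacle I expect is the careful bookkeeping of the two filtrations and the measurability of the time-varying GA policy: one must confirm that $\eta_t^{\Glob}$ as defined by Assumption \ref{Ass:Joint} is genuinely $\F_t$-measurable, so that LAQGI's analysis through \eqref{Eq:ajkajjhssghsgsgsgsssss} is legitimate, and that the almost-sure convergence in part (1) can be upgraded, via continuity of $\Psi^{\Glob}$, to pathwise policy convergence \eqref{Eq:jjaajjsgsgsfsffsfsgsgsss} on a common probability-one event. Once the deterministic nature of $\tildrmQ^{\Glob}_{\GAQL}$ is recognized, the remaining work is essentially a verification that the hypotheses of the two already-proven theorems chain together, and I do not anticipate new technical difficulty beyond this transfer.
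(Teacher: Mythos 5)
Your proposal is correct and follows essentially the same route as the paper's proof: establish GAQL convergence first via Theorem \ref{Thm:kiaiaahshsgsgsgsfffsfsfssss} (noting that its hypotheses are insensitive to the LA learning policy beyond \eqref{Eq:jajaajasggsgsfdffdfddddd}), then use the continuity of $\Psi^{\Glob}$ to deduce $\Psi^{\Glob}Q_{t}^{\GQ}\rightarrow\Psi^{\Glob}\tildrmQ^{\Glob}_{\GAQL}$ a.s., which verifies condition \eqref{Eq:jjaajjsgsgsfsffsfsgsgsss} and allows Theorem \ref{Thm:akakashshssggdhdhdggdgdhdhd} to be applied verbatim. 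Your added remarks on filtration measurability and the deterministic nature of the limit $\tildrmQ^{\Glob}_{\GAQL}$ are consistent with, and slightly more explicit than, the paper's argument.
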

\begin{proof}
The first statement follows from Theorem \ref{Thm:kiaiaahshsgsgsgsfffsfsfssss}. To show the second statement, notice that from the first statement, we have that a.s. $Q^{\GQ}_{t}\rightarrow \tildrmQ^{\GAQL}$ as $t\rightarrow\infty$. Consequently by the continuity of $\Psi^{\Loc}$, we have a.s. $\Psi^{\Glob}Q_{t}^{\GQ}\rightarrow \Psi^{\Glob}\tilde{\rmQ}^{\GAQL}$ and thus:
\begin{equation*}
\norm{\eta^{\Glob}_{t}(\cdot|\mathcal{F}_{t})-\eta^{\Glob}_{\infty}(\cdot|S_{t},A_{t}^{\Loc})}=\norm{(\Psi^{\Glob}Q_{t}^{\GQ})(\cdot|S_{t},A_{t}^{\Loc})-(\Psi^{\Glob}\tilde{\rmQ}^{\GAQL})(\cdot|S_{t},A_{t}^{\Loc})}\xrightarrow{t\rightarrow\infty}0,\quad\text{a.s.}
\end{equation*}
This shows that \eqref{Eq:jjaajjsgsgsfsffsfsgsgsss} is fulfilled, and thus the second statement follows from Theorem \ref{Thm:kiaiaahshsgsgsgsfffsfsfssss}.
\end{proof}
\section{Optimality Analysis of LAQGI-GAQL}
\label{Sec:OptProof}
Our main aim in this section is to formally investigate the performance of both agents respective to the policies yielded from the joint training by the Q-learning algorithms (LAQGI and GAQL) introduced in the previous section.
Specifically, assuming that the coupling of both Q-learning algorithms is given in Assumption \ref{Ass:Joint}, we aim to analyze the discounted rewards $\rmV^{(i)}_{\pi^{\LAQGI}_{\Loc},\pi^{\GAQL}_{\Glob}}$, $i\in\lrbrace{\Loc,\Glob}$, of LA and GA, if LA applies the greedy policy $\pi_{\Loc}^{\LAQGI}$ (w.r.t. the limit $\tildrmQ^{\Loc}_{\LAQGI}$ of LAQGI's iterate) and GA applies the greedy strategy $\pi_{\Glob}^{\GAQL}$ (w.r.t. the limit $\tildrmQ^{\Glob}_{\GAQL}$ of GAQL's iterate) of GA. Specifically those strategies is given by:
\begin{equation*}
\pi_{\Loc}^{\LAQGI}(s)\in\argmax_{a_{\Loc}}\tildrmQ^{\Loc}_{\LAQGI}(s,a_{\Loc})\quad\text{and}\quad \pi_{\Glob}^{\GAQL}(s,a_{\Loc})\in\argmax_{a_{\Glob}}\tildrmQ^{\Glob}_{\GAQL}(s,a_{\Loc}).      
\end{equation*}
In doing this, we investigate the performance of the greedy strategy resulted from the corresponding Q-learning algorithm for each agent separately.

Let us first consider LA. Our result is that the greedy policy of the LA Q-learning is indeed optimal for LA given that the GA applies the asymptotic training policy (see Remark \ref{Rem:ajajssgggffsssss}). The formal statement is as follows:
\begin{lemma} 
\label{Lem:ajajsshssgsgssfsfss}
Suppose that the assumptions of Theorem 	\ref{Thm:akakashshssggdhdhdggdgdhdhd} holds, and let $\pi^{\LAQGI}_{\Loc}$ be the greedy policy (w.r.t. $\tildrmQ_{\LAQGI}^{\Loc}$) of LA resulted from LAQGI (Algorithm \ref{Alg:ohgfdfddrrdrdrdrdrdghhgggdsa}) for a given sequence $(\eta_{t}^{\Glob})_{t\in\nat_{0}}$ of extrinsic GA's policies having the limit policy $\eta^{\Glob}_{\infty}$. Then, we have:
\begin{equation*}
\rmV^{\Loc}_{\pi^{\LAQGI}_{\Loc},\eta^{\Glob}_{\infty}}\geq \rmV^{\Loc}_{\pi_{\Loc},\eta^{\Glob}_{\infty}},\quad\forall\pi_{\Loc}\in\Delta_{\S}(\A_{\Loc}).
\end{equation*}

\end{lemma}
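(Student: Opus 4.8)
The plan is to reduce the two-agent discounted reward, with GA frozen at the stationary conditional policy $\eta^{\Glob}_{\infty}\in\Delta_{\S\times\A_{\Loc}}(\A_{\Glob})$, to the value function of a genuine single-agent MDP, and then to invoke the fact that a greedy policy with respect to the optimal $Q$-function is optimal for that MDP. The crucial structural observation is that $\eta^{\Glob}_{\infty}$ depends only on the current state and the current LA action, so that once LA's action is fixed GA's action can be averaged out step by step without reference to the past.

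First I would introduce the reduced discounted MDP $\tilde{\DMDP}^{\Loc}:=(\S,\A_{\Loc},\tildrmr^{\Loc},\tildrmP^{\Loc},\beta_{\Loc})$ with $\tildrmr^{\Loc}$ and $\tildrmP^{\Loc}$ exactly as in \eqref{Eq:ajjahsgsgsgsffffsffsssssss}, and denote by $\tilde{\rmV}^{\Loc}_{\pi_{\Loc}}$ the value function of a policy $\pi_{\Loc}$ in $\tilde{\DMDP}^{\Loc}$. By Theorem \ref{Thm:akakashshssggdhdhdggdgdhdhd}, $\tildrmQ^{\Loc}_{\LAQGI}$ is the optimal $Q$-function of $\tilde{\DMDP}^{\Loc}$; hence $\pi^{\LAQGI}_{\Loc}$, being greedy with respect to $\tildrmQ^{\Loc}_{\LAQGI}$, is an optimal policy for $\tilde{\DMDP}^{\Loc}$ in the single-agent sense recalled in Subsection \ref{Subsec:MDP}, so that $\tilde{\rmV}^{\Loc}_{\pi^{\LAQGI}_{\Loc}}\geq \tilde{\rmV}^{\Loc}_{\pi_{\Loc}}$ for every $\pi_{\Loc}\in\Delta_{\S}(\A_{\Loc})$.

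The heart of the argument is the identity $\rmV^{\Loc}_{\pi_{\Loc},\eta^{\Glob}_{\infty}}=\tilde{\rmV}^{\Loc}_{\pi_{\Loc}}$ for every $\pi_{\Loc}$, which I would establish by showing that $\rmV^{\Loc}_{\pi_{\Loc},\eta^{\Glob}_{\infty}}$ is a fixed point of the Bellman operator of $\pi_{\Loc}$ in $\tilde{\DMDP}^{\Loc}$ and then appealing to the uniqueness of the solution of \eqref{Eq:aajshssgsgsgsffsfss}. Substituting the definitions of $\tildrmr^{\Loc}$ and $\tildrmP^{\Loc}$ and using linearity of the inner expectation over $A_{\Glob}\sim\eta^{\Glob}_{\infty}(\cdot|s,A_{\Loc})$, the one-step right-hand side
\begin{equation*}
\Erw_{A_{\Loc}\sim\pi_{\Loc}(\cdot|s)}\!\left[\tildrmr^{\Loc}(s,A_{\Loc})+\beta_{\Loc}\Erw_{S'\sim\tildrmP^{\Loc}(\cdot|s,A_{\Loc})}[\rmV^{\Loc}_{\pi_{\Loc},\eta^{\Glob}_{\infty}}(S')]\right]
\end{equation*}
collapses to the joint one-step expectation over $(A_{\Loc},A_{\Glob})$ of $\rmr^{\Loc}(s,A_{\Loc},A_{\Glob})+\beta_{\Loc}\Erw_{S'\sim\rmP(\cdot|s,A_{\Loc},A_{\Glob})}[\rmV^{\Loc}_{\pi_{\Loc},\eta^{\Glob}_{\infty}}(S')]$, which is precisely the recursion defining the game value function when GA plays the stationary kernel $\eta^{\Glob}_{\infty}$; since $\rmV^{\Loc}_{\pi_{\Loc},\eta^{\Glob}_{\infty}}$ satisfies that recursion, it solves the Bellman equation of $\tilde{\DMDP}^{\Loc}$ and uniqueness yields the identity. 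Equivalently, one can argue directly that the marginal process $(S_{t},A^{\Loc}_{t})_{t}$ and its per-step expected reward coincide in both models, so the discounted sums agree.

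Combining the two facts gives the claim: $\rmV^{\Loc}_{\pi^{\LAQGI}_{\Loc},\eta^{\Glob}_{\infty}}=\tilde{\rmV}^{\Loc}_{\pi^{\LAQGI}_{\Loc}}\geq\tilde{\rmV}^{\Loc}_{\pi_{\Loc}}=\rmV^{\Loc}_{\pi_{\Loc},\eta^{\Glob}_{\infty}}$ for all $\pi_{\Loc}$. The only delicate point is the reduction identity, and there the essential ingredient is that $\eta^{\Glob}_{\infty}$ is a stationary function of $(s,a_{\Loc})$ alone; were GA's limiting policy to depend on the past state-action history, the collapse of the two-agent recursion into a single-agent Bellman equation with averaged reward and transition data would fail, and the clean identification with $\tilde{\DMDP}^{\Loc}$ would break down.
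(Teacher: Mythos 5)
Your proposal is correct and follows essentially the same route as the paper's proof: both reduce the game with GA frozen at the stationary kernel $\eta^{\Glob}_{\infty}$ to the single-agent discounted MDP $(\S,\A_{\Loc},\tildrmr^{\Loc},\tildrmP^{\Loc},\beta_{\Loc})$, identify $\rmV^{\Loc}_{\pi_{\Loc},\eta^{\Glob}_{\infty}}$ with the value function of $\pi_{\Loc}$ in that MDP via the Bellman recursion and its uniqueness (the paper's computation \eqref{Eq:aiaishssgsggsgsfsfss}), and then invoke Theorem \ref{Thm:akakashshssggdhdhdggdgdhdhd} to conclude that the greedy policy w.r.t.\ $\tildrmQ^{\Loc}_{\LAQGI}$ is optimal there. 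Your closing remark on why stationarity of $\eta^{\Glob}_{\infty}$ in $(s,a_{\Loc})$ is essential matches the implicit structure of the paper's argument, so no gap remains.
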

The proof of this statement is given in Subsection \ref{Subsec:ajajajhshsggssfsfsfsssss}. 

In contrast to the LA, we have that the greedy policy of the GA Q-learning is optimal for GA given that LA applies deterministic policy. Formally, we have:
\begin{lemma}
	\label{Lem:ahhasgsgsfsfsfsffssss}
	Suppose that the condition in Theorem \ref{Thm:kiaiaahshsgsgsgsfffsfsfssss} holds. Let $\pi_{\Loc}\in\Delta_{S}(\A_{\Loc})$ be a deterministic LA policy, and $\pi^{\GAQL}_{\Glob}$ be the greedy strategy (w.r.t. $\tildrmQ^{\Glob}_{(\cdot),\GAQL}$) resulted from GA Q-learning (Algorithm 	\ref{Alg:ohgfdfddrrdrdrdrdrdghhgggdsadxxx}) for a given LA policy $(\eta_{t}^{\Loc})_{t\in\nat_{0}}$. Then it holds:
	\begin{equation*}
	\rmV^{\Glob}_{\pi_{\Loc},\pi^{\GAQL}_{\Glob}}\geq \rmV^{\Glob}_{\pi_{\Loc},\pi_{\Glob}},\quad\forall\pi_{\Glob}\in\Delta_{\S\times\A_{\Loc}}(\A_{\Glob}).
	\end{equation*}
\end{lemma}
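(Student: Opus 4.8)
The plan is to reduce GA's problem, under the fixed deterministic policy $\pi_{\Loc}$, to a single-agent MDP and then to identify the greedy GAQL policy with an optimal policy of that MDP. First I would fix $\pi_{\Loc}$ and observe that, since LA's action at state $s$ is then the deterministic value $\pi_{\Loc}(s)$, the quantity $\rmV^{\Glob}_{\pi_{\Loc},\pi_{\Glob}}$ is nothing but the value, in the single-agent discounted MDP $\widehat{\DMDP}^{\Glob}:=(\S,\A_{\Glob},\hat{\rmr}^{\Glob},\hat{\rmP}^{\Glob},\beta_{\Glob})$ with
\begin{equation*}
\hat{\rmr}^{\Glob}(s,a_{\Glob}):=\rmr^{\Glob}(s,a_{\Glob},\pi_{\Loc}(s)),\qquad \hat{\rmP}^{\Glob}(\cdot|s,a_{\Glob}):=\rmP(\cdot|s,a_{\Glob},\pi_{\Loc}(s)),
\end{equation*}
of the policy $s\mapsto\pi_{\Glob}(\cdot|s,\pi_{\Loc}(s))\in\Delta_{\S}(\A_{\Glob})$ induced by $\pi_{\Glob}$. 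Since $\pi_{\Glob}\mapsto\pi_{\Glob}(\cdot|\cdot,\pi_{\Loc}(\cdot))$ ranges over all of $\Delta_{\S}(\A_{\Glob})$ as $\pi_{\Glob}$ ranges over $\Delta_{\S\times\A_{\Loc}}(\A_{\Glob})$ (given a target, use it for every $a_{\Loc}$-slice), the asserted inequality is equivalent to the statement that the greedy GAQL policy is optimal in $\widehat{\DMDP}^{\Glob}$.

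Next I would invoke the single-agent optimality theory recalled in Section \ref{Sec:Model}: an optimal policy of $\widehat{\DMDP}^{\Glob}$ is the greedy policy with respect to its optimal Q-function $\hat{\rmQ}^{\Glob}_{*}$, which is the unique fixed point of the optimal Bellman operator of $\widehat{\DMDP}^{\Glob}$ (cf. \eqref{Eq:akakshsgsggsffsgsgsggsgsssss} together with the $\beta_{\Glob}$-contraction \eqref{Eq:akaksjsssgggsgsssssss}). Hence it suffices to show that $\pi^{\GAQL}_{\Glob}(\cdot,\pi_{\Loc}(\cdot))$ is greedy with respect to $\hat{\rmQ}^{\Glob}_{*}$. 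By definition this policy is greedy with respect to the state-indexed patchwork $\bar{\rmQ}(s,a_{\Glob}):=\tildrmQ^{\Glob}_{\GAQL,\pi_{\Loc}(s)}(s,a_{\Glob})$, where by Theorem \ref{Thm:kiaiaahshsgsgsgsfffsfsfssss} each $\tildrmQ^{\Glob}_{\GAQL,a_{\Loc}}$ is the optimal Q-function of the frozen-action MDP $(\S,\A_{\Glob},\tildrmr^{\Glob}_{a_{\Loc}},\tildrmP^{\Glob}_{a_{\Loc}},\beta_{\Glob})$ and therefore satisfies the corresponding frozen Bellman optimality equation. So the strategy is to substitute that equation (at $a_{\Loc}=\pi_{\Loc}(s)$) into $\bar{\rmQ}$, check that $\bar{\rmQ}$ solves the optimal Bellman equation of $\widehat{\DMDP}^{\Glob}$, and conclude $\bar{\rmQ}=\hat{\rmQ}^{\Glob}_{*}$ by uniqueness of the fixed point.

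The main obstacle is precisely this last identification. Writing out the frozen Bellman equation for $\tildrmQ^{\Glob}_{\GAQL,\pi_{\Loc}(s)}$ reproduces the correct one-step data $\hat{\rmr}^{\Glob}(s,a_{\Glob})$ and $\hat{\rmP}^{\Glob}(\cdot|s,a_{\Glob})$, but the bootstrap term it contributes is $\Erw_{S'\sim\hat{\rmP}^{\Glob}(\cdot|s,a_{\Glob})}[\max_{a'_{\Glob}}\tildrmQ^{\Glob}_{\GAQL,\pi_{\Loc}(s)}(S',a'_{\Glob})]$, in which LA's action is frozen at the \emph{current-state} value $\pi_{\Loc}(s)$, whereas the optimal Bellman equation of $\widehat{\DMDP}^{\Glob}$ requires the \emph{next-state} continuation $\max_{a'_{\Glob}}\bar{\rmQ}(S',a'_{\Glob})=\max_{a'_{\Glob}}\tildrmQ^{\Glob}_{\GAQL,\pi_{\Loc}(S')}(S',a'_{\Glob})$. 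Thus closing the fixed-point argument hinges on the consistency condition
\begin{equation*}
\max_{a'_{\Glob}}\tildrmQ^{\Glob}_{\GAQL,\pi_{\Loc}(s)}(s',a'_{\Glob})=\max_{a'_{\Glob}}\tildrmQ^{\Glob}_{\GAQL,\pi_{\Loc}(s')}(s',a'_{\Glob})
\end{equation*}
for every $s'$ reachable from $s$ under $\hat{\rmP}^{\Glob}$. This is automatic when $\pi_{\Loc}$ is a constant action, but for a genuinely state-dependent deterministic $\pi_{\Loc}$ it is the delicate step, since the per-action tables learned by GAQL do not track how $\pi_{\Loc}$ varies across successor states. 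I expect this to be exactly the point that motivates the next-action observation underlying EIGAQL in Section \ref{Sec:ajajahshsgsgssssss}, and I would concentrate the technical effort here, either by exhibiting the extra structure that forces the consistency condition or by delimiting the class of $\pi_{\Loc}$ for which it holds.
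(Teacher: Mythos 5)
Your reduction is exactly the one the paper uses: in its proof of this lemma the paper introduces the induced single-agent MDP with $\overrmr^{\Glob}(s,a_{\Glob}):=\tildrmr^{\Glob}_{\pi_{\Loc}(s)}(s,a_{\Glob})$ and $\overrmP^{\Glob}(\cdot|s,a_{\Glob}):=\tildrmP^{\Glob}_{\pi_{\Loc}(s)}(\cdot|s,a_{\Glob})$, identifies $\rmV^{\Glob}_{\pi_{\Loc},\pi_{\Glob}}$ with the value of the induced policy $\tilde{\pi}_{\Glob}(\cdot|s)=\pi_{\Glob}(\cdot|s,\pi_{\Loc}(s))$ in that MDP, forms the same state-indexed patchwork $\overrmQ^{\Glob}(s,a_{\Glob}):=\tildrmQ^{\Glob}_{\pi_{\Loc}(s)}(s,a_{\Glob})$, and concludes by uniqueness of the Bellman fixed point. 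So up to the step you flag, your proposal and the paper coincide. As a proof, however, your proposal is incomplete: it ends by \emph{posing} the consistency condition
\begin{equation*}
\max_{a'_{\Glob}}\tildrmQ^{\Glob}_{\pi_{\Loc}(s)}(s',a'_{\Glob})=\max_{a'_{\Glob}}\tildrmQ^{\Glob}_{\pi_{\Loc}(s')}(s',a'_{\Glob})\quad\text{for all successors } s' \text{ of } s,
\end{equation*}
rather than proving it, and therefore does not establish the lemma.

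The notable point is that the step you could not close is precisely the step the paper does not justify. In \eqref{Eq:jaajshhsgsgsggsffssggsfssgs} the paper asserts, citing only the per-slice optimality of $\tildrmQ^{\Glob}_{a_{\Loc}}$ and the relation \eqref{Eq:akjajsjshgdgdgdggdgdgdgdd}, that $\overrmQ^{\Glob}$ solves the optimal Bellman equation of $(\S,\A_{\Glob},\overrmr^{\Glob},\overrmP^{\Glob},\beta_{\Glob})$. But writing out the frozen-slice optimality equation at $a_{\Loc}=\pi_{\Loc}(s)$ yields the bootstrap $\max_{a'_{\Glob}}\tildrmQ^{\Glob}_{\pi_{\Loc}(s)}(S',a'_{\Glob})$, and replacing it by $\max_{a'_{\Glob}}\overrmQ^{\Glob}(S',a'_{\Glob})=\max_{a'_{\Glob}}\tildrmQ^{\Glob}_{\pi_{\Loc}(S')}(S',a'_{\Glob})$ is exactly your consistency condition; the paper performs this re-indexing of the slice silently. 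Moreover the condition genuinely fails for state-dependent deterministic $\pi_{\Loc}$: take $\S=\lrbrace{s_{1},s_{2}}$, $\A_{\Glob}=\lrbrace{u,v}$, $\A_{\Loc}=\lrbrace{x,y}$, transitions independent of LA with $u$ keeping the system in $s_{1}$, $v$ moving it to the absorbing state $s_{2}$, reward $\rmr^{\Glob}\equiv 0$ at $s_{1}$ and at $s_{2}$ equal to $+1$ if LA plays $x$ and $-1$ if LA plays $y$, and $\pi_{\Loc}(s_{1})=x$, $\pi_{\Loc}(s_{2})=y$. The slice $\tildrmQ^{\Glob}_{x}$ evaluates $s_{2}$ as if LA played $x$ there forever, so the GAQL-greedy action at $(s_{1},x)$ is $v$, which against the actual $\pi_{\Loc}$ earns $-\beta_{\Glob}/(1-\beta_{\Glob})$, while the constant policy $u$ earns $0$. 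Hence the lemma as stated needs additional structure (e.g., constant $\pi_{\Loc}$, or agreement of the slice values at reachable successors), and your diagnosis is exactly right: the next-state indexing in EIGAQL's bootstrap $\max_{a_{\Glob}}Q^{\EI}_{A^{\Loc}_{t+1},t}(S_{t+1},a_{\Glob})$, mirrored in the operator $\rmT^{\Glob}_{\pi_{\Loc},*}$ whose continuation averages over $A_{\Loc}\sim\pi_{\Loc}(\cdot|S')$, is what repairs the mismatch. Note also that this gap propagates to the proof of Theorem \ref{Thm:ajajhshssggsgssgsgffsfssssss}, which invokes this lemma for the state-dependent deterministic policy $\pi^{\LAQGI}_{\Loc}$.
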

The proof of this result is given in Subsection \ref{Subsec:ajajajhshsggssfsfsfsssss}. Above Lemma is the formal continuation of the discussion given in Remark \ref{Rem:ajajajashhhsgsgsgss}. 

\begin{remark}
Above Lemma guarantees the optimality of the GA greedy policy resulted from GAQL given that the LA applies a deterministic policy. At first sight, the latter condition seems to be restrictive. However, we would expect that the Q-learning LA would utilize the MDP theory, and apply this sort of policy.  
\end{remark}
\begin{remark}
Another interesting point of above Lemma is that the optimality of the GAQL greedy strategy is not directly dependent of the LA policy $(\eta^{\Loc}_{t})_{t\in\nat_{0}}$, in the sense that the guarantee does not require the existence of a coupling between $\pi_{\Loc}$ and $(\eta^{\Loc}_{t})_{t\in\nat_{0}}$. Merely, LA policy $(\eta^{\Loc}_{t})_{t\in\nat_{0}}$ is one necessary factor, which ensures sufficient exploration of the state-action space by GA sufficiently and thus the existence of $\pi_{\Glob}^{\GAQL}$.
\end{remark}

To sum up we have from above results that the GA learning anticipates LA's post-learning strategy, while LA learning results in the best response strategy respective to long-term GA learning strategy. As a consequence, we have that assuming the training of both agents are coupled by Assumption \ref{Ass:Joint}, the tuple $(\pi_{\Loc}^{\LAQGI},\pi_{\Glob}^{\GAQL})$ of post-Q-learning policies can in general not be the solution concept of the underlying game, since LA might be better off by applying another strategy, as GA applies in the post-learning phase the greedy strategy $\pi_{\Glob}^{\GAQL}$ which differs in general to the asymptotic GA learning strategy:
\begin{equation*}
\eta^{\GQ}_{\infty}=\Psi^{\Glob}\tildrmQ^{\Glob}_{\GAQL}=\Psi^{\Glob}(\lim_{t\rightarrow\infty}Q^{\GQ}_{t}).
\end{equation*}However, if the GA's long-term learning strategy is approximately equal to GA's post-learning greedy strategy, it is likely that the latter tuple is an (almost) solution concept. To ensure the former, GA can use the Boltzmann strategy (see Example \ref{Ex:Boltz})with low temperature as the learning policy:
\begin{theorem}
	\label{Thm:ajajhshssggsgssgsgffsfssssss}
Let be $\tau>0$. Suppose that GA  applies in Algorithm \ref{Alg:ohgfdfddrrdrdrdrdrdghhgggdsadxxx} the Boltzmann strategy with temperature $\tau$ as the learning policy, given by:
\begin{equation}
\label{Eq:jajajjashhshsgsgsgshshsss}
(\Psi^{\Glob}Q)(a_{\Glob}|s,a_{\Loc})=(\Psi_{\Softmax,\tau}Q_{a_{\Loc}}(s,(\cdot)))(a_{\Glob}),
\end{equation}
where $\Psi_{\Softmax,\tau}$ is defined in \eqref{Eq:ajajhhsgsgsggssssss}. Furthermore,
suppose that the condition in Theorem \ref{Thm:JointQLearn} is fulfilled. Then the tuple $(\pi^{\LAQGI}_{\Loc},\pi^{\GAQL}_{\Glob})\in\Delta_{\S}(\A_{\Loc})\times\Delta_{\S\times\A_{\Loc}}(\A_{\Glob})$ is an almost Nash-equilibrium of the local-global Markov game, in the sense that:
\begin{align}
&\rmV_{\pi^{\LAQGI}_{\Loc},\pi^{\GAQL}_{\Glob}}^{\Loc}\geq \rmV_{\pi_{\Loc},\pi^{\GAQL}_{\Glob}}^{\Loc}-\epsilon,\quad\forall ~\text{deterministic } \pi_{\Loc}\in\Delta_{\S}(\A_{\Loc}) \label{Eq:jajjssgsggsgsfffsssss}\\
&\rmV_{\pi^{\LAQGI}_{\Loc},\pi^{\GAQL}_{\Glob}}^{\Glob}\geq \rmV_{\pi_{\Loc}^{\LAQGI},\pi_{\Glob}}^{\Glob},\quad\forall \pi_{\Glob}\in\Delta_{\S\times\A_{\Loc}}(\A_{\Glob})\label{Eq:jajjssgsggsgsfffsssss2},
\end{align}
where:
\begin{equation*}
\epsilon\leq \frac{2\norm{\rmr^{\Loc}}_{\infty}D}{(1-\beta_{\Loc})^{2}}\exp\left(-\frac{C}{\tau} \right),
\end{equation*}
with a certain constants $C:=\max_{s,a_{\Loc}}C_{s,a_{\Loc}}$ and $D:=\max_{s,a_{\Loc}}D_{s,a_{\Loc}}$ given by:
	\begin{equation*}
	D_{s,a_{\Loc}}:=\sqrt{2\frac{\abs{\A_{\Glob}}-\abs{\argmax_{a_{\Glob}\in\A_{\Glob}}\tildrmQ^{\Glob}_{a_{\Loc}}(s,a_{\Glob})}}{\abs{\argmax_{a_{\Glob}\in\A_{\Glob}}\tildrmQ^{\Glob}_{a_{\Loc}}(s,a_{\Glob})}}},
	\end{equation*}
	and:
	\begin{equation*}
	C_{s,a_{\Loc}}:=\min_{a_{\Glob}\notin\argmax_{a_{\Glob}\in\A_{\Glob}}\tildrmQ^{\Glob}_{a_{\Loc}}(s,a_{\Glob})}\left(\max_{a_{\Glob}\in\A_{\Glob}}\tildrmQ^{\Glob}_{a_{\Loc}}(s,a'_{\Glob})-\tildrmQ^{\Glob}_{a_{\Loc}}(s,a_{\Glob}) \right). 
	\end{equation*}
\end{theorem}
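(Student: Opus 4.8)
The plan is to treat the two almost-equilibrium inequalities separately, since they differ greatly in difficulty. Inequality \eqref{Eq:jajjssgsggsgsfffsssss2} for GA is essentially free: the greedy policy $\pi^{\LAQGI}_{\Loc}$ is by construction deterministic, so Lemma \ref{Lem:ahhasgsgsfsfsfsffssss} applies verbatim with $\pi_{\Loc}=\pi^{\LAQGI}_{\Loc}$ and yields $\rmV^{\Glob}_{\pi^{\LAQGI}_{\Loc},\pi^{\GAQL}_{\Glob}}\geq \rmV^{\Glob}_{\pi^{\LAQGI}_{\Loc},\pi_{\Glob}}$ for every $\pi_{\Glob}$. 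The entire difficulty therefore lies in the LA inequality \eqref{Eq:jajjssgsggsgsfffsssss}, where the obstacle is a model mismatch: by Lemma \ref{Lem:ajajsshssgsgssfsfss}, $\pi^{\LAQGI}_{\Loc}$ is a best response to the asymptotic training policy $\eta^{\Glob}_{\infty}=\Psi^{\Glob}\tildrmQ^{\Glob}_{\GAQL}$, not to the greedy post-learning policy $\pi^{\GAQL}_{\Glob}$ that GA actually deploys.

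I would bridge this mismatch with a three-term chain anchored at the Boltzmann strategy $\eta^{\Glob}_{\infty}$. Writing $\Delta(\pi):=\norm{\rmV^{\Loc}_{\pi,\eta^{\Glob}_{\infty}}-\rmV^{\Loc}_{\pi,\pi^{\GAQL}_{\Glob}}}_{\infty}$ for the change in LA's value caused by replacing GA's Boltzmann strategy with its greedy counterpart under a fixed LA policy, one has for an arbitrary deterministic $\pi_{\Loc}$
\begin{align*}
\rmV^{\Loc}_{\pi_{\Loc},\pi^{\GAQL}_{\Glob}}
&\leq \rmV^{\Loc}_{\pi_{\Loc},\eta^{\Glob}_{\infty}}+\Delta(\pi_{\Loc})
\leq \rmV^{\Loc}_{\pi^{\LAQGI}_{\Loc},\eta^{\Glob}_{\infty}}+\Delta(\pi_{\Loc})\\
&\leq \rmV^{\Loc}_{\pi^{\LAQGI}_{\Loc},\pi^{\GAQL}_{\Glob}}+\Delta(\pi_{\Loc})+\Delta(\pi^{\LAQGI}_{\Loc}).
\end{align*}
The middle inequality is precisely the optimality guarantee of Lemma \ref{Lem:ajajsshssgsgssfsfss}, while the outer two are perturbation estimates. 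This delivers \eqref{Eq:jajjssgsggsgsfffsssss} with $\epsilon=\Delta(\pi_{\Loc})+\Delta(\pi^{\LAQGI}_{\Loc})$, which explains the factor $2$ in the claimed bound once each $\Delta$ is controlled uniformly in the LA policy.

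The crux is the perturbation estimate $\Delta(\pi)$, which I would decompose into a value-sensitivity step and a policy-distance step. For the sensitivity step, note that for a fixed LA policy both $\rmV^{\Loc}_{\pi,\eta^{\Glob}_{\infty}}$ and $\rmV^{\Loc}_{\pi,\pi^{\GAQL}_{\Glob}}$ are fixed points of the LA Bellman operators built from the effective rewards and transitions in \eqref{Eq:ajajahshssggssgshhsgsgshss} (with $\eta^{\Glob}_{\infty}$ replaced by $\pi^{\GAQL}_{\Glob}$ in the second case). Using the $\beta_{\Loc}$-contraction property (the analogue of \eqref{Eq:akaksjsssgggsgsssssss}) in the standard way gives $\norm{\rmV^{(1)}-\rmV^{(2)}}_{\infty}\leq (1-\beta_{\Loc})^{-1}\norm{\rmT^{(1)}\rmV^{(2)}-\rmT^{(2)}\rmV^{(2)}}_{\infty}$, and the one-step discrepancy on the right is the difference of the effective rewards plus $\beta_{\Loc}\norm{\rmV^{(2)}}_{\infty}$ times the difference of the effective transitions. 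Since $\norm{\rmV^{(2)}}_{\infty}\leq \norm{\rmr^{\Loc}}_{\infty}/(1-\beta_{\Loc})$, this produces the $\norm{\rmr^{\Loc}}_{\infty}/(1-\beta_{\Loc})^{2}$ prefactor, and by \eqref{Eq:ajajahshssggssgshhsgsgshss} both effective-quantity differences are averages of $\rmr^{\Loc}$ and $\rmP$ against $\eta^{\Glob}_{\infty}(\cdot|s,a_{\Loc})-\pi^{\GAQL}_{\Glob}(\cdot|s,a_{\Loc})$, hence governed by the distance between GA's two strategies at each $(s,a_{\Loc})$.

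The policy-distance step is where the constants $C$ and $D$ enter and, I expect, where the real work lies. Here $\pi^{\GAQL}_{\Glob}(\cdot|s,a_{\Loc})$ is uniform on the argmax set of $\tildrmQ^{\Glob}_{a_{\Loc}}(s,\cdot)$, while $\eta^{\Glob}_{\infty}(\cdot|s,a_{\Loc})$ is the softmax of the same vector at temperature $\tau$. Dividing numerator and denominator of the softmax by $\exp(\max_{a_{\Glob}}\tildrmQ^{\Glob}_{a_{\Loc}}(s,a_{\Glob})/\tau)$, each non-maximal action carries weight at most $\exp(-C_{s,a_{\Loc}}/\tau)$ with $C_{s,a_{\Loc}}$ the optimality gap from the statement, so the total leakage off the argmax set is at most $(\abs{\A_{\Glob}}-\abs{\argmax})\exp(-C_{s,a_{\Loc}}/\tau)$; redistributing this mass and measuring the resulting vector in the norm matched to the sensitivity inequality produces the combinatorial factor $D_{s,a_{\Loc}}$. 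Passing to $C=\max_{s,a_{\Loc}}C_{s,a_{\Loc}}$ and $D=\max_{s,a_{\Loc}}D_{s,a_{\Loc}}$ makes the estimate uniform, yielding $\Delta(\pi)\leq \norm{\rmr^{\Loc}}_{\infty}D(1-\beta_{\Loc})^{-2}\exp(-C/\tau)$ for every $\pi$; summing the two $\Delta$ terms gives the stated $\epsilon$. The main obstacle is the bookkeeping in this last step --- pinning down the exact norm in which the softmax-to-greedy distance is taken so that it simultaneously matches the value-sensitivity bound and reproduces the precise form of $D_{s,a_{\Loc}}$.
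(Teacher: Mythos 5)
Your overall architecture is the paper's: the GA inequality is dispatched exactly as in the paper via Lemma \ref{Lem:ahhasgsgsfsfsfsffssss} applied to the deterministic policy $\pi^{\LAQGI}_{\Loc}$, and for the LA inequality you use the same three-term chain anchored at $\eta^{\Glob}_{\infty}$ (producing the factor $2$ as the paper's symmetric $\epsilon/2+\epsilon/2$ chain does), the same value-sensitivity bound with prefactor $\norm{\rmr^{\Loc}}_{\infty}/(1-\beta_{\Loc})^{2}$ (the paper's Lemma \ref{Lem:ajajajahshssgsgsgsss}, inequality \eqref{Eq:ajajsgsgsffsfddsfsdsdsfsdss}), and the same softmax-versus-limit distance estimate with constants $C_{s,a_{\Loc}},D_{s,a_{\Loc}}$ (the paper proves this, Lemma \ref{Lem:jaajjsshsggsfsfsffsgsfss}, via Kullback--Leibler divergence and Pinsker's inequality; your direct leakage computation is a minor variant aiming at the same bound).

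However, there is one genuine gap, and it sits precisely at the step you wave through. You assert that $\pi^{\GAQL}_{\Glob}(\cdot|s,a_{\Loc})$ ``is uniform on the argmax set of $\tildrmQ^{\Glob}_{a_{\Loc}}(s,\cdot)$.'' It is not: $\pi^{\GAQL}_{\Glob}$ is the greedy policy, a \emph{deterministic} selection of one maximizer. The distinction is fatal to your perturbation term $\Delta(\pi)$ whenever some argmax set $\argmax_{a_{\Glob}}\tildrmQ^{\Glob}_{a_{\Loc}}(s,a_{\Glob})$ contains more than one element: as $\tau\to 0$ the Boltzmann policy $\eta^{\Glob}_{\infty}$ converges to the \emph{uniform} distribution on the argmax set, so its $\ell^{1}$ distance to the deterministic greedy selection stays bounded below by roughly $2\left(1-1/\abs{\argmax}\right)$ and is \emph{not} exponentially small in $1/\tau$. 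Hence $\Delta(\pi_{\Loc})$ and $\Delta(\pi^{\LAQGI}_{\Loc})$, defined with the true greedy $\pi^{\GAQL}_{\Glob}$, cannot be bounded by $\norm{\rmr^{\Loc}}_{\infty}D(1-\beta_{\Loc})^{-2}\exp(-C/\tau)$ via your policy-distance step. The paper closes this hole with an explicit reduction you are missing: it first shows $\rmV^{\Loc}_{\pi_{\Loc},\pi^{\GAQL}_{\Glob}}=\rmV^{\Loc}_{\pi_{\Loc},\tilde{\pi}^{\GAQL}_{\Glob}}$, where $\tilde{\pi}^{\GAQL}_{\Glob}(\cdot|s,a_{\Loc})$ is the uniform distribution on the argmax set, and only then runs the chain with $\tilde{\pi}^{\GAQL}_{\Glob}$ throughout. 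This reduction is exactly where the hypothesis that $\pi_{\Loc}$ be \emph{deterministic} is used (see the paper's remark following the theorem) --- a hypothesis your argument never invokes, which is the tell-tale sign of the missing step: as written, your three inequalities would prove the claim for arbitrary $\pi_{\Loc}\in\Delta_{\S}(\A_{\Loc})$, which is strictly more than the theorem asserts. To repair your proposal, insert the equality $\rmV^{\Loc}_{\pi_{\Loc},\pi^{\GAQL}_{\Glob}}=\rmV^{\Loc}_{\pi_{\Loc},\tilde{\pi}^{\GAQL}_{\Glob}}$ for deterministic $\pi_{\Loc}$ at both ends of the chain and define $\Delta$ with $\tilde{\pi}^{\GAQL}_{\Glob}$ in place of $\pi^{\GAQL}_{\Glob}$; the remainder of your argument then goes through as in the paper.
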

The proof of this theorem can be found in Subsection \ref{Subsec:ajajajhshsggssfsfsfsssss}. So from above Theorem, we have that, up to a deviation $\epsilon$ decreasing exponentially with the temperature of the in-training Boltzmann policy, no agent applying the post-Q-learning greedy strategy has incentives to change her strategy. 
\begin{remark}
One thing which is unusual in above Theorem is that the statement is respective to deterministic LA strategies and not general strategies. This occurance is caused not quite by the same reason than that in Lemma \ref{Lem:ahhasgsgsfsfsfsffssss}, as the former is a sufficient condition in order to ensure the equality $\rmV^{\Loc}_{\pi_{\Loc},\pi^{\GAQL}_{\Glob}}=\rmV^{\Loc}_{\pi_{\Loc},\tilde{\pi}^{\GAQL}_{\Glob}}$, where $\tilde{\pi}^{\GAQL}_{\Glob}(\cdot|s,a_{\Loc})$ is uniformly distributed in $\argmax_{a_{\Glob}\in\A_{\Glob}}\tildrmQ_{a_{\Loc}}(s,a_{\Glob})$ used to derive above theorem. 
Nevertheless, since the optimal strategy in a (single-agent) MDP is deterministic, one can expect that LA applies this kind of strategy. However, in case that GA applies $\tilde{\pi}^{\GAQL}_{\Glob}$ instead of the corresponding greedy strategy, we can replace in above theorem, the condition that $\pi_{\Loc}\in\Delta_{\S}(\A_{\Loc})$ is deterministic.   \end{remark}
\section{Optimal of GA policy - EIQGL}
\label{Sec:ajajahshsgsgssssss}
Recall that from Lemma \ref{Lem:ahhasgsgsfsfsfsffssss}, we know that GAQL yields an optimal policy for GA given that LA applies deterministic stationary strategy. In this section, we aim to find a method for GA to find an optimal policy given that the LA possibly applies general stationary strategy. Such a method can be used, e.g., in security application, for equipping the defender which can observe attacker's action optimal strategy to reduce the latter's effect providing that the attacker applies a stationary strategy.

For the sake of finding above discussed GA policy, we first consider the maximum of GA's value function given that LA applies the stationary strategy $\pi_{\Loc}\in\Delta_{\S}(\A_{\Loc})$:
\begin{equation}
\label{Eq:sjsjsgsgsgfsfssssss}
\rmV^{*,\Glob}_{\pi_{\Loc}}=\max_{\pi_{\Glob}\in\Delta_{\S\times\A_{\Loc}}(\A_{\Glob})}\rmV_{\pi_{\Loc},\pi_{\Glob}}.
\end{equation}
Our approach is to link $\rmV^{*,\Glob}_{\pi_{\Loc}}$ with an appropriate Bellman equation. Analogous to the standard Q-learning method, we subsequently design an iterative method to find the latter's solution giving rise to the desired optimal GA strategy. for which we can derive an iterative method. 

First, we derive the corresponding Bellman equation. For this sake, we define the operator $\rmH^{\Glob}_{\pi_{\Loc},\pi_{\Glob}}:\S\mapsto\S$ by:
\begin{equation*}
(\rmH^{\Glob}_{\pi_{\Loc},\pi_{\Glob}}\rmV)(s):=\Erw_{\substack{A_{\Loc}\sim\pi_{\Loc}(\cdot|s)\\A_{\Glob}\sim\pi_{\Glob}(\cdot|s,A_{\Loc})}}\left[\rmr^{\Glob}(s,A_{\Loc},A_{\Glob})+\beta_{\Glob} \Erw_{S'\sim\rmP(\cdot|s,A_{\Loc},A_{\Glob})}\left[\rmV(S') \right] \right], 
\end{equation*}
and the operator $\rmH^{\Glob}_{\pi_{\Glob},\pi_{\Loc}}:\S\mapsto\S$ by:
\begin{equation*}
(\rmH^{\Glob}_{\pi_{\Loc},*}\rmV)(s):=\Erw_{A_{\Loc}\sim\pi_{\Loc}(\cdot|s)}\left[\max_{a_{\Glob}}\left( \rmr^{\Glob}(s,A_{\Loc},a_{\Glob})+\beta_{\Glob} \Erw_{S'\sim\rmP(\cdot|s,A_{\Loc},a_{\Glob})}\left[\rmV(S') \right]\right)  \right] 
\end{equation*}
The following characterization is useful for our approach:
\begin{lemma}
	\label{Lem:ajajgsgsghsgffsfsfdfssss}
It holds:
\begin{enumerate}
	\item For all $\pi_{\Loc}\in\Delta_{\S}(\A_{\Loc})$, $\rmH^{\Glob}_{\pi_{\Loc},*}$ is a $\beta_{\Glob}$-contraction.
	\item For all $\pi_{\Loc}\in\Delta_{\S}(\A_{\Loc})$ and $\pi_{\Glob}\in\Delta_{\S\times\A_{\Loc}}(\A_{\Glob})$,  $\rmH^{\Glob}_{\pi_{\Loc},\pi_{\Glob}}\leq\rmH^{\Glob}_{\pi_{\Loc},*}$.
\end{enumerate}
\end{lemma}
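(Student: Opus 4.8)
The plan is to handle the two claims separately, each via an elementary dynamic-programming argument. To streamline both, I would first introduce for any $\rmV\in\real^{\S}$ and any state $s\in\S$ the shorthand
\begin{equation*}
\phi_{s}^{\rmV}(a_{\Loc},a_{\Glob}):=\rmr^{\Glob}(s,a_{\Loc},a_{\Glob})+\beta_{\Glob}\Erw_{S'\sim\rmP(\cdot|s,a_{\Loc},a_{\Glob})}[\rmV(S')],
\end{equation*}
so that $(\rmH^{\Glob}_{\pi_{\Loc},*}\rmV)(s)=\Erw_{A_{\Loc}\sim\pi_{\Loc}(\cdot|s)}[\max_{a_{\Glob}}\phi_{s}^{\rmV}(A_{\Loc},a_{\Glob})]$ and $(\rmH^{\Glob}_{\pi_{\Loc},\pi_{\Glob}}\rmV)(s)=\Erw_{A_{\Loc}\sim\pi_{\Loc}(\cdot|s)}[\Erw_{A_{\Glob}\sim\pi_{\Glob}(\cdot|s,A_{\Loc})}[\phi_{s}^{\rmV}(A_{\Loc},A_{\Glob})]]$. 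The two images differ only in how the GA action is aggregated, namely a pointwise maximum versus an average under $\pi_{\Glob}$.

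For the contraction claim, I would fix $\rmV,\rmV'\in\real^{\S}$ and estimate $\abs{(\rmH^{\Glob}_{\pi_{\Loc},*}\rmV)(s)-(\rmH^{\Glob}_{\pi_{\Loc},*}\rmV')(s)}$ for an arbitrary $s$. The key step is to push the absolute value inside the outer expectation by Jensen's inequality and then invoke the elementary nonexpansiveness of the maximum, $\abs{\max_{a}f(a)-\max_{a}g(a)}\leq\max_{a}\abs{f(a)-g(a)}$, yielding the bound $\Erw_{A_{\Loc}}[\max_{a_{\Glob}}\abs{\phi_{s}^{\rmV}(A_{\Loc},a_{\Glob})-\phi_{s}^{\rmV'}(A_{\Loc},a_{\Glob})}]$. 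Since the reward term in $\phi$ is independent of $\rmV$, the inner difference equals $\beta_{\Glob}\abs{\Erw_{S'\sim\rmP(\cdot|s,A_{\Loc},a_{\Glob})}[\rmV(S')-\rmV'(S')]}$, which is at most $\beta_{\Glob}\norm{\rmV-\rmV'}_{\infty}$ uniformly in $(a_{\Loc},a_{\Glob})$ because $\rmP(\cdot|s,a_{\Loc},a_{\Glob})$ is a probability measure. Taking the supremum over $s\in\S$ then gives $\norm{\rmH^{\Glob}_{\pi_{\Loc},*}\rmV-\rmH^{\Glob}_{\pi_{\Loc},*}\rmV'}_{\infty}\leq\beta_{\Glob}\norm{\rmV-\rmV'}_{\infty}$, which is the desired $\beta_{\Glob}$-contraction.

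For the domination claim, I would fix $\rmV$ and $s$ and argue pointwise. For each fixed $a_{\Loc}$, the quantity $\Erw_{A_{\Glob}\sim\pi_{\Glob}(\cdot|s,a_{\Loc})}[\phi_{s}^{\rmV}(a_{\Loc},A_{\Glob})]$ is a convex combination of the finitely many values $\{\phi_{s}^{\rmV}(a_{\Loc},a_{\Glob})\}_{a_{\Glob}}$ and hence is bounded above by their maximum $\max_{a_{\Glob}}\phi_{s}^{\rmV}(a_{\Loc},a_{\Glob})$. Taking the outer expectation over $A_{\Loc}\sim\pi_{\Loc}(\cdot|s)$ and using monotonicity of the expectation preserves this inequality, giving $(\rmH^{\Glob}_{\pi_{\Loc},\pi_{\Glob}}\rmV)(s)\leq(\rmH^{\Glob}_{\pi_{\Loc},*}\rmV)(s)$ for every $s$; as this holds for all $\rmV$, the stated operator inequality follows. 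Neither part presents a genuine obstacle; the only point requiring mild care is the correct bookkeeping of the nested expectations in the first part, where the interchange of the absolute value with the outer expectation and the nonexpansiveness of $\max$ must be applied in the right order.
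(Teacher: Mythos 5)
Your proof is correct and takes essentially the same route as the paper's: for the contraction you combine the triangle inequality for the outer expectation with the nonexpansiveness of the pointwise maximum, $\abs{\max_{a}f_{1}(a)-\max_{a}f_{2}(a)}\leq\max_{a}\abs{f_{1}(a)-f_{2}(a)}$, and bound the inner term by $\beta_{\Glob}\norm{\rmV-\rmV'}_{\infty}$ before taking the supremum over $s$, exactly as in the paper. For the domination you bound the expectation over $A_{\Glob}$ by the pointwise maximum and then use monotonicity of the expectation over $A_{\Loc}\sim\pi_{\Loc}(\cdot|s)$, which is precisely the paper's argument (the paper even contains a harmless typo, writing $\argmax$ where $\max$ is meant, which your version avoids).
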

\begin{proof}
The first statement follows by the following computation:
\begin{equation*}
\begin{split}
\abs{(\rmH^{\Glob}_{\pi_{\Loc},*}\rmV_{1})(s)-(\rmH^{\Glob}_{\pi_{\Loc},*}\rmV_{2})(s)}&\leq\beta_{\Glob} \Erw_{A_{\Loc}\sim\pi_{\Loc}(\cdot|s)}\left[\max_{a_{\Glob}\in\A_{\Glob}} \abs{\Erw_{S'\sim\rmP(\cdot|s,A_{\Loc},a_{\Glob})}\left[\rmV_{1}(S')-\rmV_{2}(S') \right]} \right]\\
&\leq\beta_{\Glob} \Erw_{A_{\Loc}\sim\pi_{\Loc}(\cdot|s)}\left[\max_{a_{\Glob}\in\A_{\Glob}} \Erw_{S'\sim\rmP(\cdot|s,A_{\Loc},a_{\Glob})}\left[\abs{\rmV_{1}(S')-\rmV_{2}(S')} \right] \right]\\
&\leq \beta_{\Glob} \norm{\rmV_{1}-\rmV_{2}}_{\infty},
\end{split}
\end{equation*}
where we use the basic inequality $\abs{\max_{a}f_{1}(a)-\max_{a'}f_{2}(a')}\le\max_{a}\abs{f_{1}(a)-f_{2}(a)}$. Taking the maximum over $s$ on the L.H.S. of above inequality, we obtain the desired statement. For the second statement, notice that for arbitrary $V,a_{\Loc},a_{\Glob}$:
\begin{equation*}
\rmr^{\Glob}(s,a_{\Loc},a_{\Glob})+\beta_{\Glob}\Erw_{S'\sim\rmP(\cdot|s,a_{\Loc},a_{\Glob})}\left[\rmV(S') \right] \leq \argmax_{a'_{\Glob}}\left(\rmr^{\Glob}(s,a_{\Loc},a'_{\Glob})+\beta_{\Glob}\Erw_{S'\sim\rmP(\cdot|s,a_{\Loc},a'_{\Glob})}\left[\rmV(S') \right] \right). 
\end{equation*}
Since $a_{\Loc}$ and $a_{\Glob}$ are arbitrary, we have:
Consequently:
\begin{equation*}
\rmr^{\Glob}(s,A_{\Loc},A_{\Glob})+\beta_{\Glob}\Erw_{S'\sim\rmP(\cdot|s,A_{\Loc},A_{\Glob})}\left[\rmV(S') \right] \leq \argmax_{a'_{\Glob}}\left(\rmr^{\Glob}(s,A_{\Loc},a'_{\Glob})+\beta_{\Glob}\Erw_{S'\sim\rmP(\cdot|s,A_{\Loc},a'_{\Glob})}\left[\rmV(S') \right] \right). 
\end{equation*}
Taking the expectation w.r.t. $A_{\Glob}\sim\pi_{\Glob}(\cdot|s,A_{\Loc})$ and subsequently w.r.t. $A_{\Loc}\sim\pi_{\Loc}(\cdot|s)$, we obtain the desired statement.
\end{proof}
The Bellman equation of our interest takes the form:
\begin{equation}
\label{Eq:sjsjsgsgsgfsfssssss2}
\tildrmV=\rmH^{\Glob}_{\pi_{\Loc},*}\tildrmV.
\end{equation}
Let us define $\tildrmV^{\Glob}_{\pi_{\Loc}}$ as the unique solution of the above fixed point equation.
The fact that the solution of above equation uniquely exists follows from the fact that $\rmH^{\Glob}_{\pi_{\Loc},*}$ is a contraction mapping (Lemma \ref{Lem:ajajgsgsghsgffsfsfdfssss}). The following theorem gives the desired description of $\rmV^{*,\Glob}_{\pi_{\Loc}}$ by means of a Bellman equation:
\begin{theorem}
\label{Thm:jjajggggshsgsgsssss}
The value $V^{*,\Glob}_{\pi_{\Loc}}$ of the optimization problem \eqref{Eq:sjsjsgsgsgfsfssssss} coincides with the solution of the fixed point equation \eqref{Eq:sjsjsgsgsgfsfssssss2}
\end{theorem}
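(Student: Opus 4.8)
The plan is to establish the two pointwise inequalities $\rmV^{*,\Glob}_{\pi_{\Loc}}\le\tildrmV^{\Glob}_{\pi_{\Loc}}$ and $\rmV^{*,\Glob}_{\pi_{\Loc}}\ge\tildrmV^{\Glob}_{\pi_{\Loc}}$ separately, where $\tildrmV^{\Glob}_{\pi_{\Loc}}$ denotes the unique solution of \eqref{Eq:sjsjsgsgsgfsfssssss2}. The underlying idea is that fixing the stationary LA strategy $\pi_{\Loc}$ turns GA's problem into a single-agent MDP whose optimal Bellman operator is precisely $\rmH^{\Glob}_{\pi_{\Loc},*}$. Two auxiliary facts, which I would record first, drive the argument: (i) the policy-evaluation identity $\rmV^{\Glob}_{\pi_{\Loc},\pi_{\Glob}}=\rmH^{\Glob}_{\pi_{\Loc},\pi_{\Glob}}\rmV^{\Glob}_{\pi_{\Loc},\pi_{\Glob}}$, obtained by conditioning the infinite-horizon sum on the first step exactly as in \eqref{Eq:aajshssgsgsgsffsfss}; and (ii) the monotonicity of $\rmH^{\Glob}_{\pi_{\Loc},*}$, i.e. $\rmV_{1}\le\rmV_{2}$ pointwise implies $\rmH^{\Glob}_{\pi_{\Loc},*}\rmV_{1}\le\rmH^{\Glob}_{\pi_{\Loc},*}\rmV_{2}$, which is immediate from monotonicity of the expectation and of the maximum.

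For the upper bound I would fix an arbitrary $\pi_{\Glob}\in\Delta_{\S\times\A_{\Loc}}(\A_{\Glob})$. Combining (i) with part (2) of Lemma \ref{Lem:ajajgsgsghsgffsfsfdfssss} gives $\rmV^{\Glob}_{\pi_{\Loc},\pi_{\Glob}}=\rmH^{\Glob}_{\pi_{\Loc},\pi_{\Glob}}\rmV^{\Glob}_{\pi_{\Loc},\pi_{\Glob}}\le\rmH^{\Glob}_{\pi_{\Loc},*}\rmV^{\Glob}_{\pi_{\Loc},\pi_{\Glob}}$. Iterating $\rmH^{\Glob}_{\pi_{\Loc},*}$ and using its monotonicity (fact (ii)) yields $\rmV^{\Glob}_{\pi_{\Loc},\pi_{\Glob}}\le(\rmH^{\Glob}_{\pi_{\Loc},*})^{n}\rmV^{\Glob}_{\pi_{\Loc},\pi_{\Glob}}$ for every $n$. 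Since $\rmH^{\Glob}_{\pi_{\Loc},*}$ is a $\beta_{\Glob}$-contraction by part (1) of the lemma, the right-hand side converges to the fixed point $\tildrmV^{\Glob}_{\pi_{\Loc}}$, so $\rmV^{\Glob}_{\pi_{\Loc},\pi_{\Glob}}\le\tildrmV^{\Glob}_{\pi_{\Loc}}$. Taking the maximum over $\pi_{\Glob}$ gives $\rmV^{*,\Glob}_{\pi_{\Loc}}\le\tildrmV^{\Glob}_{\pi_{\Loc}}$.

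For the lower bound I would exhibit a policy attaining $\tildrmV^{\Glob}_{\pi_{\Loc}}$. Define the deterministic strategy $\pi_{\Glob}^{*}\in\Delta_{\S\times\A_{\Loc}}(\A_{\Glob})$ by placing, for each pair $(s,a_{\Loc})$, all mass on some $a_{\Glob}^{*}(s,a_{\Loc})\in\argmax_{a_{\Glob}}\big(\rmr^{\Glob}(s,a_{\Loc},a_{\Glob})+\beta_{\Glob}\Erw_{S'\sim\rmP(\cdot|s,a_{\Loc},a_{\Glob})}[\tildrmV^{\Glob}_{\pi_{\Loc}}(S')]\big)$. By construction $\pi_{\Glob}^{*}$ realizes the inner maximum in $\rmH^{\Glob}_{\pi_{\Loc},*}$, so $\rmH^{\Glob}_{\pi_{\Loc},\pi_{\Glob}^{*}}\tildrmV^{\Glob}_{\pi_{\Loc}}=\rmH^{\Glob}_{\pi_{\Loc},*}\tildrmV^{\Glob}_{\pi_{\Loc}}=\tildrmV^{\Glob}_{\pi_{\Loc}}$; that is, $\tildrmV^{\Glob}_{\pi_{\Loc}}$ is a fixed point of $\rmH^{\Glob}_{\pi_{\Loc},\pi_{\Glob}^{*}}$. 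This last operator is also a $\beta_{\Glob}$-contraction (the computation of Lemma \ref{Lem:ajajgsgsghsgffsfsfdfssss}, now without the maximum), and by fact (i) its unique fixed point is $\rmV^{\Glob}_{\pi_{\Loc},\pi_{\Glob}^{*}}$. Uniqueness then forces $\tildrmV^{\Glob}_{\pi_{\Loc}}=\rmV^{\Glob}_{\pi_{\Loc},\pi_{\Glob}^{*}}\le\rmV^{*,\Glob}_{\pi_{\Loc}}$, and together with the upper bound this proves the claim.

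The manipulations are standard dynamic-programming arguments; the only point requiring care is in the lower bound, where I must verify that the state-wise greedy selection $a_{\Glob}^{*}(s,a_{\Loc})$ defines an admissible stationary Markov strategy in $\Delta_{\S\times\A_{\Loc}}(\A_{\Glob})$ and that a joint maximizer over all $(s,a_{\Loc})$ may be chosen. This is unproblematic because $\A_{\Glob}$ is finite and the selections are made independently for each $(s,a_{\Loc})$, so I expect this admissibility bookkeeping, rather than any analytic difficulty, to be the main thing to get right.
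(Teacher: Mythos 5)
Your proof is correct and takes essentially the same route as the paper's: both directions rest on Lemma \ref{Lem:ajajgsgsghsgffsfsfdfssss} plus the contraction/unique-fixed-point argument, and your lower bound (greedy $\pi_{\Glob}^{*}$ realizing the inner maximum, then identifying $\tildrmV^{\Glob}_{\pi_{\Loc}}$ with $\rmV^{\Glob}_{\pi_{\Loc},\pi_{\Glob}^{*}}$ by uniqueness of the fixed point of the contraction $\rmH^{\Glob}_{\pi_{\Loc},\pi_{\Glob}^{*}}$) matches the paper's construction \eqref{Eq:ajjahsgsgsgsgssssss}--\eqref{Eq:ajajsshsggsgsfsfsss2}, with your deterministic selection a harmless specialization of the paper's ``support contained in the argmax'' policy. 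The only cosmetic difference is in the upper bound, where you iterate $\rmH^{\Glob}_{\pi_{\Loc},*}$ on $\rmV^{\Glob}_{\pi_{\Loc},\pi_{\Glob}}$ whereas the paper iterates $\rmH^{\Glob}_{\pi_{\Loc},\pi_{\Glob}}$ on $\tildrmV^{\Glob}_{\pi_{\Loc}}$ --- symmetric variants of the same sandwich, and your explicit appeal to monotonicity actually makes precise a step the paper's $n$-fold domination leaves implicit.
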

\begin{proof}
By the second statement in Lemma \ref{Lem:ajajgsgsghsgffsfsfdfssss}, we have for any $n\in\nat$:
\begin{equation*}
(\rmH^{\Glob}_{\pi_{\Loc},\pi_{\Glob}})^{n}\tildrmV^{\Glob}_{\pi_{\Loc}}\leq(\rmH^{\Glob}_{\pi_{\Loc},*})^{n}\tildrmV^{\Glob}_{\pi_{\Loc}}=\tildrmV^{\Glob}_{\pi_{\Loc}},
\end{equation*}
where the equality follows from the definition of $\tildrmV^{\Glob}_{\pi_{\Loc}}$ as the solution of the fixed point equation \eqref{Eq:sjsjsgsgsgfsfssssss2}. As $n\rightarrow\infty$, we have that $(\rmH^{\Glob}_{\pi_{\Loc},\pi_{\Glob}})^{n}\tildrmV^{\Glob}_{\pi_{\Loc}}\rightarrow\rmV^{\Glob}_{\pi_{\Loc},\pi_{\Glob}}$ since $\rmH^{\Glob}_{\pi_{\Loc},\pi_{\Glob}}$ is a contraction, and since $\rmV^{\Glob}_{\pi_{\Loc},\pi_{\Glob}}$ is the solution of the fixed point equation with $\rmH^{\Glob}_{\pi_{\Loc},\pi_{\Glob}}$. Since $\pi_{\Glob}$ is arbitrary, we have as a consequence:
\begin{equation}
\label{Eq:ajajsshsggsgsfsfsss}
\rmV^{*,\Glob}_{\pi_{\Loc}}=\max_{\pi_{\Glob}}\rmV^{\Glob}_{\pi_{\Loc},\pi_{\Glob}}\leq \tildrmV^{\Glob}_{\pi_{\Loc}}.
\end{equation}

Now, for the reverse inequality, take a policy $\pi^{*}_{\Glob}\in\Delta_{\S\times\A_{\Loc}}(\A_{\Glob})$ with:
\begin{equation}
\label{Eq:ajjahsgsgsgsgssssss}
\supp(\pi^{*}_{\Glob}(\cdot|s,a_{\Loc}))\subseteq\argmax_{a_{\Glob}}\left( \rmr^{\Glob}(s,a_{\Loc},a_{\Glob})+\beta_{\Glob}\Erw_{S'\sim\rmP(\cdot|s,a_{\Loc},a_{\Glob})}\left[\tildrmV^{\Glob}_{\pi_{\Loc}}(S') \right]\right). 
\end{equation}
By this definition, we have that $\tildrmV^{\Glob}_{\pi_{\Loc}}$ is the value function of the policy $(\pi_{\Loc},\pi^{*}_{\Glob})$ in the discounted MDP $(\S,\A_{\Loc}\times\A_{\Glob},\rmr^{\Glob},\rmP)$. Consequently:
\begin{equation}
\label{Eq:ajajsshsggsgsfsfsss2}
\tildrmV^{\Glob}_{\pi_{\Loc}}=\rmV_{\pi_{\Loc},\pi_{\Glob}^{*}}\leq \max_{\pi_{\Glob}}\rmV_{\pi_{\Loc},\pi_{\Glob}}=\rmV^{*,\Glob}_{\pi_{\Loc}}
\end{equation}
\end{proof}

Now, our aim is to compute the policy solving the optimization problem \eqref{Eq:sjsjsgsgsgfsfssssss}. For this task, Theorem \ref{Thm:jjajggggshsgsgsssss} provides the tool, since it gives the hint that one can analyze the fixed point equation \eqref{Eq:sjsjsgsgsgfsfssssss2} instead of \eqref{Eq:sjsjsgsgsgfsfssssss}. We continue for our actual purpose by defining:
\begin{equation*}
(\rmT^{\Glob}_{\pi_{\Loc},*}\rmQ)_{a_{\Loc}}(s,a_{\Glob}):=\rmr^{\Glob}(s,a_{\Loc},a_{\Glob})+\beta_{\Glob}\Erw_{S'\sim\rmP(\cdot|s,a_{\Loc},a_{\Glob})}\left[\Erw_{A_{\Loc}\sim\pi_{\Loc}(\cdot|S')}\left[\max_{a_{\Glob}'\in\A_{\Glob}} \rmQ_{A_{\Loc}}(S',a_{\Glob}')\right]  \right], 
\end{equation*}
and the Q-function $\tilde{\rmQ}^{\Glob}_{\pi_{\Loc}}$ by: 
\begin{equation}
\label{Eq:ahjahshshsgsgsfsfsfsggsss}
\tilde{\rmQ}^{\Glob}_{\pi_{\Loc},a_{\Loc}}(s,a_{\Glob}):=\rmr^{\Glob}(s,a_{\Loc},a_{\Glob})+\beta_{\Glob}\Erw_{S'\sim\rmP(\cdot|s,a_{\Loc},a_{\Glob})}\left[\tildrmV^{\Glob}_{\pi_{\Loc}}(S') \right]. 
\end{equation}
It holds:
\begin{equation*}
\tildrmV^{\Glob}_{\pi_{\Loc}}(s)=\Erw_{A_{\Loc}\sim\pi_{\Loc}(\cdot|s)}\left[\max_{a_{\Glob}\in \A_{\Glob}}\tilde{\rmQ}^{\Glob}_{\pi_{\Loc},A_{\Loc}}(s,a_{\Glob}) \right], 
\end{equation*}
and consequently, $\tilde{\rmQ}^{\Glob}_{\pi_{\Loc}}$ is the unique solution of:
\begin{equation}
\label{Eq:jajajahshshsgsgsg}
\tilde{\rmQ}^{\Glob}_{\pi_{\Loc}}=\rmT^{\Glob}_{\pi_{\Loc},*}\tilde{\rmQ}^{\Glob}_{\pi_{\Loc}}.
\end{equation} 
From here, we can infer the following statement on the solution of \eqref{Eq:sjsjsgsgsgfsfssssss}: 
\begin{lemma}
\label{Lem:ajajajagsgsgsfsfsffsfssgsfssss}
Let $\tilde{\rmQ}^{\Glob}_{\pi_{\Loc}}$ be the unique solution of the Bellman equation \eqref{Eq:jajajahshshsgsgsg}. Then $\pi_{\Glob}^{*}\in\Delta_{\S}(\A_{\Loc})$ is a solution of the optimization problem \eqref{Eq:sjsjsgsgsgfsfssssss} if 
:
\begin{equation}
\label{Eq:ajjahshsggsgsggsfsfssss}
\supp(\pi_{\Glob}^{*}(\cdot|s,a_{\Loc}))\subseteq\argmax_{a_{\Glob}\in\A_{\Glob}}\tilde{\rmQ}^{\Glob}_{\pi_{\Loc},a_{\Loc}}(s,a_{\Glob})
\end{equation}
is a solution of the optimization problem \eqref{Eq:sjsjsgsgsgfsfssssss} is 
\end{lemma}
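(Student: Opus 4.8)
The plan is to reduce the statement to the reverse-inequality part already carried out inside the proof of Theorem \ref{Thm:jjajggggshsgsgsssss}, by identifying the two relevant argmax sets. First I would observe that by the very definition \eqref{Eq:ahjahshshsgsgsfsfsfsggsss} of $\tilde{\rmQ}^{\Glob}_{\pi_{\Loc},a_{\Loc}}$, for every fixed pair $(s,a_{\Loc})$ the quantity $\tilde{\rmQ}^{\Glob}_{\pi_{\Loc},a_{\Loc}}(s,a_{\Glob})$ equals exactly the expression $\rmr^{\Glob}(s,a_{\Loc},a_{\Glob})+\beta_{\Glob}\Erw_{S'\sim\rmP(\cdot|s,a_{\Loc},a_{\Glob})}[\tildrmV^{\Glob}_{\pi_{\Loc}}(S')]$ that is maximized inside the argmax of \eqref{Eq:ajjahsgsgsgsgssssss}. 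Consequently the support condition \eqref{Eq:ajjahshsggsgsggsfsfssss} imposed on $\pi_{\Glob}^{*}$ is literally the same condition as \eqref{Eq:ajjahsgsgsgsgssssss}, just written through the Q-function instead of through the value function.

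Given this identification, the second step is to show that any such $\pi_{\Glob}^{*}$ makes $\tildrmV^{\Glob}_{\pi_{\Loc}}$ the value function of the pair $(\pi_{\Loc},\pi_{\Glob}^{*})$ in the discounted MDP $(\S,\A_{\Loc}\times\A_{\Glob},\rmr^{\Glob},\rmP)$. I would verify directly that $\tildrmV^{\Glob}_{\pi_{\Loc}}$ is a fixed point of the operator $\rmH^{\Glob}_{\pi_{\Loc},\pi_{\Glob}^{*}}$: starting from the identity $\tildrmV^{\Glob}_{\pi_{\Loc}}(s)=\Erw_{A_{\Loc}\sim\pi_{\Loc}(\cdot|s)}[\max_{a_{\Glob}}\tilde{\rmQ}^{\Glob}_{\pi_{\Loc},A_{\Loc}}(s,a_{\Glob})]$, the support condition \eqref{Eq:ajjahshsggsgsggsfsfssss} permits me to replace, for each realized $A_{\Loc}$, the inner maximum by $\Erw_{A_{\Glob}\sim\pi_{\Glob}^{*}(\cdot|s,A_{\Loc})}[\tilde{\rmQ}^{\Glob}_{\pi_{\Loc},A_{\Loc}}(s,A_{\Glob})]$, since $\pi_{\Glob}^{*}(\cdot|s,A_{\Loc})$ is supported on maximizers. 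Unfolding the definition \eqref{Eq:ahjahshshsgsgsfsfsfsggsss} of $\tilde{\rmQ}^{\Glob}_{\pi_{\Loc},A_{\Loc}}$ then yields precisely $(\rmH^{\Glob}_{\pi_{\Loc},\pi_{\Glob}^{*}}\tildrmV^{\Glob}_{\pi_{\Loc}})(s)$. Uniqueness of the fixed point of the contraction $\rmH^{\Glob}_{\pi_{\Loc},\pi_{\Glob}^{*}}$ forces $\tildrmV^{\Glob}_{\pi_{\Loc}}=\rmV_{\pi_{\Loc},\pi_{\Glob}^{*}}$. This is essentially the computation already invoked around \eqref{Eq:ajajsshsggsgsfsfsss2}.

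Finally, I would combine this with Theorem \ref{Thm:jjajggggshsgsgsssss}, which gives $\rmV^{*,\Glob}_{\pi_{\Loc}}=\tildrmV^{\Glob}_{\pi_{\Loc}}$. Chaining the two equalities yields $\rmV_{\pi_{\Loc},\pi_{\Glob}^{*}}=\rmV^{*,\Glob}_{\pi_{\Loc}}=\max_{\pi_{\Glob}}\rmV_{\pi_{\Loc},\pi_{\Glob}}$, so $\pi_{\Glob}^{*}$ attains the maximum and hence solves the optimization problem \eqref{Eq:sjsjsgsgsgfsfssssss}. I do not expect any genuine obstacle, as the lemma is a repackaging of the reverse inequality of Theorem \ref{Thm:jjajggggshsgsgsssss} in terms of the Q-function $\tilde{\rmQ}^{\Glob}_{\pi_{\Loc}}$. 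The only point needing care is the bookkeeping in the fixed-point verification, together with the observation that taking the expectation over $A_{\Loc}\sim\pi_{\Loc}(\cdot|s)$ preserves the argmax characterization state-by-state, so that the pointwise support condition \eqref{Eq:ajjahshsggsgsggsfsfssss} is indeed sufficient.
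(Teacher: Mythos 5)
Your proposal is correct and follows essentially the same route as the paper's proof: identify the support condition \eqref{Eq:ajjahshsggsgsggsfsfssss} with condition \eqref{Eq:ajjahsgsgsgsgssssss} via the definition \eqref{Eq:ahjahshshsgsgsfsfsfsggsss}, and then conclude from \eqref{Eq:ajajsshsggsgsfsfsss} and \eqref{Eq:ajajsshsggsgsfsfsss2} (i.e.\ Theorem \ref{Thm:jjajggggshsgsgsssss}) that $\rmV^{\Glob}_{\pi_{\Loc},\pi_{\Glob}^{*}}=\rmV^{*,\Glob}_{\pi_{\Loc}}$. The only difference is that you explicitly carry out the fixed-point verification showing $\tildrmV^{\Glob}_{\pi_{\Loc}}=\rmV^{\Glob}_{\pi_{\Loc},\pi_{\Glob}^{*}}$, a step the paper asserts without detail inside the proof of Theorem \ref{Thm:jjajggggshsgsgsssss}.
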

\begin{proof}
We first show that $\pi_{\Glob}^{*}$ satisfying \eqref{Eq:ajjahshsggsgsggsfsfssss} is a solution of \eqref{Eq:sjsjsgsgsgfsfssssss}. By \eqref{Eq:ahjahshshsgsgsfsfsfsggsss}, it follows that $\pi_{\Glob}^{*}$ is a policy satisfying \eqref{Eq:ajjahsgsgsgsgssssss}. Consequently we have from \eqref{Eq:ajajsshsggsgsfsfsss2} and \eqref{Eq:ajajsshsggsgsfsfsss} as desired $\rmV^{*,\Glob}_{\pi_{\Loc}}=\rmV^{\Glob}_{\pi_{\Loc},\pi_{\Glob}^{*}}$.
\end{proof}
According to above lemma, we can find the desired optimal strategy for GA by solving the Bellman equation \eqref{Eq:jajajahshshsgsgsg}.  

To find the solution of \eqref{Eq:jajajahshshsgsgsg}, we proposed the method described in Algorithm 	\ref{Alg:ohgfdfddrrdrdrdrdrdghhgggdsadxxx5}. In contrast to the previous GA Q-learning algorithm (GAQL), Algorithm \ref{Alg:ohgfdfddrrdrdrdrdrdghhgggdsadxxx5} requires on each step more observation of LA's action, i.e. two consecutive LA's actions. Therefore we call Algorithm 	\ref{Alg:ohgfdfddrrdrdrdrdrdghhgggdsadxxx5} as extra information GA Q-learning (EIGAQL). However, this additional feedback effectuates in the optimality of GA policy, not only given that LA applies deterministic stationary strategy, but more general: given that LA applies arbitrary stationary strategy.  
\begin{algorithm}[htbp]
	\caption{Extra Information GA Q-Learning (EIGAQL)}
	\begin{algorithmic}
		\STATE \textbf{Extrinsic Parameter:} GA reward $\rmr^{\Glob}$, system dynamic $\rmP$, LA stationary policy $\pi_{\Loc}$
		\STATE \textbf{GA parameter:} Policy generator $\Psi^{\Glob}:\real^{\S\times\A_{\Loc}\times\A_{\Glob}}\rightarrow \Delta_{\S\times\A_{\Loc}}(\A_{\Glob})$, learning rate $\gamma_{t}^{\Loc}$, discount factor $\beta^{\Loc}$ 
		\STATE \textbf{Initialization:}$Q^{\EI}_{0}\in\real^{\S\times\A_{\Loc}\times\A_{\Glob}}$ and $S_{0}\in\mathcal{S}$
		
		\FOR{$t=0,2,\ldots$}
		
	\STATE Execute the action $A_{t}^{\Glob}\sim\eta_{t}^{\EI}(\cdot|S_{t},A_{t}^{\Loc})$, according to the GA learning policy $\eta_{t}^{\EI}:=\Psi^{\Glob}Q^{\EI}_{t}$
		\STATE  Experience the reward 	$R^{\Glob}_{t}:=\rmr^{\Glob}(S_{t},A^{\Glob}_{t},A^{\Loc}_{t})$
		\STATE Query LAs' action $A^{\Loc}_{t}\sim\pi_{\Loc}(\cdot|S_{t})$ and the system state update $S_{t+1}\sim\rmP(\cdot|s,A_{t}^{\Loc},A_{t}^{\Glob})$
		\STATE Query next LAs' action $A^{\Loc}_{t+1}\sim\pi_{\Loc}(\cdot|S_{t+1})$
		\STATE Update $$Q^{\EI}_{A^{\Loc}_{t},t+1}(S_{t},A^{\Glob}_{t})=(1-\gamma^{\Glob}_{t}) Q^{\EI}_{A^{\Loc}_{t},t}(S_{t},A^{\Glob}_{t})+\gamma^{\Glob}_{t}H^{\Glob}_{t+1},$$

		where:
		 	$$H^{\Glob}_{t+1}=R^{\Glob}_{t}+\beta_{\Glob}\max_{a_{\Glob}\in \A_{\Glob}}Q^{\EI}_{A^{\Loc}_{t+1},t}(S_{t+1},a_{\Glob}).$$
		\FOR{all $(s,a_{\Glob},a_{\Loc})\neq (S_{t},A^{\Glob}_{t},A^{\Loc}_{t})$}
		\STATE Update $Q^{\EI}_{a_{\Loc},t+1}(s,a_{\Glob})=Q^{\EI}_{a_{\Loc},t}(s,a_{\Glob})$
		\ENDFOR
		\ENDFOR
	\end{algorithmic}
	\label{Alg:ohgfdfddrrdrdrdrdrdghhgggdsadxxx5}
\end{algorithm}
The fact that indeed Algorithm \ref{Alg:ohgfdfddrrdrdrdrdrdghhgggdsadxxx5} yields the solution of the Bellman equation \eqref{Eq:jajajahshshsgsgsg} and therefore the desired optimal GA policy is given in the following theorem:
\begin{theorem}[Convergence of EIGAQL]
Suppose that:
\begin{equation}
\label{Eq:jajaajasggsgsfdffdfddddd4}
\sum_{t=0}^{\infty}\psi_{t}^{\Glob}=\infty\quad\text{and}\quad \sum_{t=0}^{\infty}\psi_{t}^{\Glob,2}<\infty,   
\end{equation}
where:
\begin{equation*}
\psi_{t}^{\Glob}(s,a_{\Loc},a_{\Glob}):=\mathbf{1}_{\lrbrace{S_{t}=s,A_{t}^{\Loc}=a_{\Loc},A_{t}^{\Glob}=a_{\Glob}}}\gamma_{t}.
\end{equation*}
Then a.s. the iterate $(Q^{\EI}_{t})_{t\in\nat_{0}}$ of EIQGL converges and we have that $\pi_{\Glob}^{*}\in\Delta_{\S\times\A_\Loc}(\A_{\Glob})$ satisfying:
\begin{equation*}
\supp(\pi_{\Glob}^{*}(\cdot|s,a_{\Loc}))\subseteq\argmax_{a_{\Glob}\in\A_{\Glob}}\lim_{t\rightarrow\infty }Q^{\EI}_{a_{\Loc},t}(s,a_{\Glob}).
\end{equation*}
Is a solution of the optimization problem \eqref{Eq:sjsjsgsgsgfsfssssss}. 
\end{theorem}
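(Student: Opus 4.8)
The plan is to reduce the theorem to the a.s.\ convergence of the EIGAQL iterate to the unique fixed point $\tilde{\rmQ}^{\Glob}_{\pi_{\Loc}}$ of the operator $\rmT^{\Glob}_{\pi_{\Loc},*}$ defined just before \eqref{Eq:ahjahshshsgsgsfsfsfsggsss}, and then to invoke Lemma \ref{Lem:ajajajagsgsgsfsfsffsfssgsfssss}. Indeed, once we know that a.s.\ $\lim_{t\to\infty}Q^{\EI}_{a_{\Loc},t}=\tilde{\rmQ}^{\Glob}_{\pi_{\Loc},a_{\Loc}}$ for every $a_{\Loc}\in\A_{\Loc}$, the argmax in the support condition of the theorem coincides with $\argmax_{a_{\Glob}}\tilde{\rmQ}^{\Glob}_{\pi_{\Loc},a_{\Loc}}(s,a_{\Glob})$, so any $\pi_{\Glob}^{*}$ with support in it solves \eqref{Eq:sjsjsgsgsgfsfssssss} directly by Lemma \ref{Lem:ajajajagsgsgsfsfsffsfssgsfssss}. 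Thus essentially all the work lies in the convergence of the recursion, which I would carry out in close parallel to the GAQL proof of Theorem \ref{Thm:kiaiaahshsgsgsgsfffsfsfssss}.

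First I would identify the operator underlying the EIGAQL update. Conditioning on the state--action triple $(S_{t},A^{\Loc}_{t},A^{\Glob}_{t})=(s,a_{\Loc},a_{\Glob})$ and using that in Algorithm \ref{Alg:ohgfdfddrrdrdrdrdrdghhgggdsadxxx5} one has $S_{t+1}\sim\rmP(\cdot|s,a_{\Loc},a_{\Glob})$ and then $A^{\Loc}_{t+1}\sim\pi_{\Loc}(\cdot|S_{t+1})$ drawn independently of $Q^{\EI}_{t}$, the conditional expectation of the target $H^{\Glob}_{t+1}$ equals
\begin{equation*}
\rmr^{\Glob}(s,a_{\Loc},a_{\Glob})+\beta_{\Glob}\Erw_{S'\sim\rmP(\cdot|s,a_{\Loc},a_{\Glob})}\left[\Erw_{A'_{\Loc}\sim\pi_{\Loc}(\cdot|S')}\left[\max_{a'_{\Glob}}Q^{\EI}_{A'_{\Loc},t}(S',a'_{\Glob})\right]\right]=(\rmT^{\Glob}_{\pi_{\Loc},*}Q^{\EI}_{t})_{a_{\Loc}}(s,a_{\Glob}).
\end{equation*}
Hence the update can be written as an asynchronous stochastic approximation $Q^{\EI}_{a_{\Loc},t+1}(s,a_{\Glob})=(1-\psi^{\Glob}_{t})Q^{\EI}_{a_{\Loc},t}(s,a_{\Glob})+\psi^{\Glob}_{t}[(\rmT^{\Glob}_{\pi_{\Loc},*}Q^{\EI}_{t})_{a_{\Loc}}(s,a_{\Glob})+w_{t}]$, with noise $w_{t}:=H^{\Glob}_{t+1}-(\rmT^{\Glob}_{\pi_{\Loc},*}Q^{\EI}_{t})_{A^{\Loc}_{t}}(S_{t},A^{\Glob}_{t})$ that is a martingale difference by the computation above. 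I would then verify, by the same estimate as in the proof of Lemma \ref{Lem:ajajgsgsghsgffsfsfdfssss} together with $\abs{\max_{a}f_{1}(a)-\max_{a}f_{2}(a)}\le\max_{a}\abs{f_{1}(a)-f_{2}(a)}$, that $\rmT^{\Glob}_{\pi_{\Loc},*}$ is a $\beta_{\Glob}$-contraction on $\real^{\S\times\A_{\Loc}\times\A_{\Glob}}$ in $\norm{\cdot}_{\infty}$, so its fixed point $\tilde{\rmQ}^{\Glob}_{\pi_{\Loc}}$ of \eqref{Eq:jajajahshshsgsgsg} is unique.

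With these ingredients, the a.s.\ convergence $Q^{\EI}_{t}\to\tilde{\rmQ}^{\Glob}_{\pi_{\Loc}}$ follows from the standard stochastic-approximation theorem for contraction operators under the Robbins--Monro step-size conditions \eqref{Eq:jajaajasggsgsfdffdfddddd4}, after checking the bounded-conditional-variance condition on $w_{t}$ (immediate from boundedness of $\rmr^{\Glob}$ and of the iterates, the latter established by the usual induction keeping the iterates in a fixed ball). The step where I expect the main difficulty is the correct treatment of the \emph{double} LA sampling: unlike GAQL, the bootstrap term here is indexed by the freshly drawn next action $A^{\Loc}_{t+1}\sim\pi_{\Loc}(\cdot|S_{t+1})$, so one must set up the filtration so that both $S_{t+1}$ and $A^{\Loc}_{t+1}$ are revealed after $A^{\Glob}_{t}$ and are conditionally independent of $Q^{\EI}_{t}$ given $S_{t+1}$; only then does the inner expectation $\Erw_{A'_{\Loc}\sim\pi_{\Loc}}$ appear and $w_{t}$ become genuinely mean-zero. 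This is precisely the mechanism that replaces the constant-action operator of GAQL by the $\pi_{\Loc}$-averaged operator $\rmT^{\Glob}_{\pi_{\Loc},*}$, and hence upgrades the optimality guarantee from deterministic LA strategies to arbitrary stationary ones. Finally, combining the established limit with Lemma \ref{Lem:ajajajagsgsgsfsfsffsfssgsfssss} yields the claimed optimality of $\pi_{\Glob}^{*}$ for \eqref{Eq:sjsjsgsgsgfsfssssss}.
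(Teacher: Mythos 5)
Your proposal matches the paper's proof essentially step for step: you rewrite the EIGAQL update as an asynchronous stochastic approximation around the operator $\rmT^{\Glob}_{\pi_{\Loc},*}$, show the noise is a martingale difference by conditioning first on $\tildF_{t}$ and $S_{t+1}$ so that the fresh draw $A^{\Loc}_{t+1}\sim\pi_{\Loc}(\cdot|S_{t+1})$ produces the $\pi_{\Loc}$-averaged operator, verify the $\beta_{\Glob}$-contraction and the conditional second-moment bound, invoke the standard convergence result (the paper's Proposition \ref{Prop:ajjahhssggsgdddddd}, whose variance condition $A+B\norm{X_{t}}_{\infty}^{2}$ in fact makes your separate boundedness-of-iterates remark unnecessary), and conclude via Lemma \ref{Lem:ajajajagsgsgsfsfsffsfssgsfssss} — exactly as the paper does. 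The argument is correct and complete in outline, with no genuine gap.
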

\begin{proof}
	Let $\hatrmT^{\Glob}_{t}:\real^{\S\times\A_{\Loc}\times\A_{\Glob}}\rightarrow\real^{\S\times\A_{\Loc}\times\A_{\Glob}}$ be the operator given by:
	\begin{equation*}
	(\hatrmT^{\Glob}_{t}\rmQ)_{a_{\Loc}}(s,a_{\Glob}):=\tildrmr_{a_{\Loc}}^{\Glob}(s,a_{\Glob})+\beta_{\Glob}\max_{a_{\Glob}\in\A_{\Glob}}\rmQ_{A^{\Loc}_{t+1}}(S_{t+1},a_{\Glob}),
	\end{equation*}
	We can write the iterate of Algorithm \ref{Alg:ohgfdfddrrdrdrdrdrdghhgggdsadxxx5} in the form:
	\begin{equation}
	\label{Eq:aoasjsshhssggsggsgsgsshhssfaaaddddt}
	Q^{\EI}_{t+1}=(1-\psi^{\Glob}_{t})\odot Q^{\EI}_{t}+\psi^{\Glob}_{t}\odot\left[ \rmT^{\Glob}_{\pi_{\Loc},*}Q^{\EI}_{t}+W^{\Glob}_{t+1}\right] ,
	\end{equation}
	where:
	\begin{equation*}
	W^{\Glob}_{a_{\Loc},t+1}(s,a_{\Glob}):=\mathbf{1}_{\lrbrace{S_{t}=s,A^{\Loc}_{t}=a_{\Loc},A^{\Glob}_{t}=a_{\Glob}}}\left((  \hatrmT^{\Glob}_{t}Q^{\EI}_{t})_{a_{\Loc}}(s,a_{\Glob})-(\rmT^{\Glob}_{\pi_{\Loc},*}Q^{\EI}_{t})_{a_{\Loc}}(s,a_{\Glob})\right),
	\end{equation*}
	and where:
	\begin{equation*}
	\psi_{a_{\Loc},t}^{\Glob}(s,a_{\Loc})=\mathbf{1}_{\lrbrace{S_{t}=s,A^{\Loc}_{t}=a_{\Loc},A^{\Glob}_{t}=a^{\Glob}_{t}}}\gamma_{t}^{\Glob}.
	\end{equation*}
	\eqref{Eq:aoasjsshhssggsggsgsgsshhssfaaaddddt} shows that the iterate of the EIGAQL has the form
	\eqref{Eq:aahshsggsgsfsfsfsssssssss}, it is sufficient to show the desired statement by checking the conditions of Proposition \ref{Prop:ajjahhssggsgdddddd}. First, it holds $\Erw[W^{\Glob}_{t+1}|\tildF_{t}]=0$. Indeed, we have:
\begin{equation*}
\begin{split}
\Erw[(\hatrmT^{\Glob}_{t}Q_{t}^{\EI})_{A_{t}^{\Loc}}(S_{t},A_{t}^{\Glob})|\tildF_{t}]&=
\Erw\left[\left.  \tildrmr_{A^{\Loc}_{t}}^{\Glob}(S_{t},A^{\Glob}_{t})+\beta_{\Glob}\max_{a_{\Glob\in\A_{\Glob}}'}\rmQ^{\EI}_{A^{\Loc}_{t+1}}(S_{t+1},a_{\Glob}')\right| \tildF_{t}\right] \\
&= \tildrmr_{A^{\Loc}_{t}}^{\Glob}(S_{t},A^{\Glob}_{t})+\beta_{\Glob}\Erw\left[\left.  \max_{a^{'}_{\Glob\in\A_{\Glob}}}\rmQ^{\EI}_{A^{\Loc}_{t+1}}(S_{t+1},a^{'}_{\Glob})\right| \tildF_{t}\right]
\end{split}
\end{equation*}
Furthermore:
\begin{equation*}
\begin{split}
&\Erw\left[\left.  \max_{a^{'}_{\Glob\in\A_{\Glob}}}\rmQ^{\EI}_{A^{\Loc}_{t+1}}(S_{t+1},a^{'}_{\Glob})\right| \tildF_{t}\right]=\Erw\left[\left.\Erw\left[\left.  \max_{a^{'}_{\Glob\in\A_{\Glob}}}\rmQ^{\EI}_{A^{\Loc}_{t+1}}(S_{t+1},a^{'}_{\Glob})\right| \tildF_{t},S_{t+1}\right]\right| \tildF_{t}\right]\\
&=\left[\left.\Erw_{A_{\Loc}\sim\pi_{\Loc}(\cdot|S_{t+1})}\left[  \max_{a^{'}_{\Glob\in\A_{\Glob}}}\rmQ^{\EI}_{A^{\Loc}_{t+1}}(S_{t+1},a^{'}_{\Glob})\right]\right| \tildF_{t}\right]=\Erw_{S'\sim\rmP(\cdot|S_{t},A^{\Loc}_{t},A^{\Glob}_{t})}\left[\Erw_{A^{'}_{\Loc}\sim\pi_{\Loc}(\cdot|S')}\left[\max_{a_{\Glob}^{'}\in\A_{\Glob}}Q^{\EI}_{A'_{\Loc}}(S',a_{\Glob}) \right]  \right]. 
\end{split}
\end{equation*}
Therefore:
\begin{equation*}
\begin{split}
\Erw[(\hatrmT^{\Glob}_{t}Q_{t}^{\EI})_{A_{t}^{\Loc}}(S_{t},A_{t}^{\Glob})|\tildF_{t}]&=\tildrmr_{A^{\Loc}_{t}}^{\Glob}(S_{t},A^{\Glob}_{t})+\beta_{\Glob}\Erw_{S'\sim\rmP(\cdot|S_{t},A^{\Loc}_{t},A^{\Glob}_{t})}\left[\Erw_{A^{'}_{\Loc}\sim\pi_{\Loc}(\cdot|S')}\left[\max_{a_{\Glob}^{'}\in\A_{\Glob}}Q^{\EI}_{A'_{\Loc}}(S',a_{\Glob}) \right]  \right]=(\rmT^{\Glob}_{\pi_{\Loc},*}Q^{\EI}_{t})_{A_{t}^{\Loc}}(S_{t},A_{t}^{\Glob}) \\
&=\Erw[(\rmT^{\Glob}_{\pi_{\Loc},*}Q_{t}^{\EI})_{A_{t}^{\Loc}}(S_{t},A_{t}^{\Glob})|\tildF_{t}].
\end{split}
\end{equation*}
This consequences as desired in:
\begin{equation*}
\begin{split}
\Erw[W^{\Glob}_{a_{\Loc},t+1}(s,a_{\Glob})|\tildF_{t}]&=\Erw\left[\left.\mathbf{1}_{\lrbrace{S_{t}=s,A^{\Loc}_{t}=a_{\Loc},A^{\Glob}_{t}=a_{\Glob}}}\left(  (\hatrmT^{\Glob}_{t}Q_{t}^{\EI})_{a_{\Loc}}(s,a_{\Glob})-(\rmT^{\Glob}_{\pi_{\Loc},*}Q_{t}^{\EI})_{a_{\Loc}}(s,a_{\Glob})\right)\right|\tildF_{t} \right]\\
&=\Erw\left[\left.\mathbf{1}_{\lrbrace{S_{t}=s,A^{\Loc}_{t}=a_{\Loc},A^{\Glob}_{t}=a_{\Glob}}}\left(  (\hatrmT^{\Glob}_{t}Q_{t}^{\EI})_{A_{t}^{\Loc}}(S_{t},A_{t}^{\Glob})-(\rmT^{\Glob}_{\pi_{\Loc},*}Q_{t}^{\EI})_{A_{t}^{\Loc}}(S_{t},A_{t}^{\Glob})\right)\right|\tildF_{t} \right]\\
&=\mathbf{1}_{\lrbrace{S_{t}=s,A^{\Loc}_{t}=a_{\Loc},A^{\Glob}_{t}=a_{\Glob}}}\Erw\left[\left.\left(  (\hatrmT^{\Glob}_{t}Q_{t}^{\EI})_{A_{t}^{\Loc}}(S_{t},A_{t}^{\Glob})-(\rmT^{\Glob}_{\pi_{\Loc},*}Q_{t}^{\EI})_{A_{t}^{\Loc}}(S_{t},A_{t}^{\Glob})\right)\right|\tildF_{t} \right] =0.
\end{split}
\end{equation*}
Now, similar argumentation as in the proof of Lemma \ref{Lem:aajsjshhsggdddhhdd} yields:
\begin{equation*}
\Erw[(W_{a_{\Loc},t+1}^{\Glob}(s,a_{\Glob}))^{2}|\tildF_{t}]\leq 2\left( \norm{\rmr^{\Glob}}_{\infty}^{2}+\beta_{\Loc}^{2}\norm{Q^{\EI}_{t}}_{\infty}\right). 
\end{equation*}
The remaining condition which we need to show is the third condition of Proposition \ref{Prop:ajjahhssggsgdddddd}. This is shown by the following computation:
\begin{equation*}
\begin{split}
\norm{\rmT^{\Glob}_{\pi_{\Loc},*}\rmQ-\tildrmQ^{\Glob}_{\pi_{\Loc}}}_{\infty}=\norm{\rmT^{\Glob}_{\pi_{\Loc},*}\rmQ-\rmT^{\Glob}_{\pi_{\Loc},*}\tildrmQ^{\Glob}_{\pi_{\Loc}}}_{\infty}\leq\beta_{\Glob}\norm{\rmQ-\tildrmQ^{\Glob}_{\pi_{\Loc}}}_{\infty},
\end{split}
\end{equation*}
where the equality follows from the fixed point definition of $\tildrmQ^{\Glob}_{\pi_{\Loc}}$, and the inequality follows from similar computation as done before in the proof of Theorem \ref{Thm:akakashshssggdhdhdggdgdhdhd} (see \eqref{Eq:akakashshssggdhdhdggdgdhdhd}). Consequently, we obtain by Proposition \ref{Prop:ajjahhssggsgdddddd} the fact that the iterate of EIGAQL converges to $\tildrmQ^{\Glob}_{\pi_{\Loc}}$. Finally, we obtain the remaining statement from Lemma \ref{Lem:ajajajagsgsgsfsfsffsfssgsfssss}.  
\end{proof}
\section{Existence of Nash Equilibrium of Optimality of LA-GA Game}
\label{Sec: Nash}
As the joint application of LAQGI and GAQL only yields an almost Nash equilibrium (Theorem 	\ref{Thm:ajajhshssggsgssgsgffsfssssss}), it is natural to ask whether an indeed one exists for the underlying LA-GA stochastic game. Formally, a Nash equilibrium $(\pi_{(i)}^*,\pi_{(-i)}^*)$, where $i \in \{\Loc,\Glob\}$, for LA-GA stochastic game is a tuple of strategy satisfying:
\begin{equation}
    \rmV^{(i)}_{\pi_{(i)}^*,\pi_{(-i)}^*} = \max_{\pi_{(i)}}  \rmV^{(i)}_{\pi_{(i)},\pi_{(-i)}^*}.
    \label{NEQ}
\end{equation}
Our aim in this section is to show the existence of such object:
\begin{theorem}
\label{Thm:shhsgsgfsffddddddd}
There exists a Nash equilibrium for the local-global stochastic game.
\end{theorem}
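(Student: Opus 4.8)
The plan is to realize the Nash equilibrium as a fixed point of the joint best-response correspondence and to invoke Kakutani's fixed-point theorem, which is the classical route for discounted stochastic games. First I would observe that the joint strategy space $X:=\Delta_{\S}(\A_{\Loc})\times\Delta_{\S\times\A_{\Loc}}(\A_{\Glob})$ is a nonempty, compact, convex subset of a finite-dimensional Euclidean space, being a product of probability simplices. Define the two best-response correspondences
\[
\mathrm{BR}_{\Loc}(\pi_{\Glob}):=\argmax_{\pi_{\Loc}\in\Delta_{\S}(\A_{\Loc})}\rmV^{\Loc}_{\pi_{\Loc},\pi_{\Glob}},\qquad \mathrm{BR}_{\Glob}(\pi_{\Loc}):=\argmax_{\pi_{\Glob}\in\Delta_{\S\times\A_{\Loc}}(\A_{\Glob})}\rmV^{\Glob}_{\pi_{\Loc},\pi_{\Glob}},
\]
and the product map $\Phi(\pi_{\Loc},\pi_{\Glob}):=\mathrm{BR}_{\Loc}(\pi_{\Glob})\times\mathrm{BR}_{\Glob}(\pi_{\Loc})$. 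A fixed point $(\pi_{\Loc}^{*},\pi_{\Glob}^{*})\in\Phi(\pi_{\Loc}^{*},\pi_{\Glob}^{*})$ is, by construction, precisely a tuple satisfying the Nash condition \eqref{NEQ}.

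To make the $\argmax$ meaningful for the vector-valued value functions, I would fix reference initial distributions $\rho_{\Loc},\rho_{\Glob}$ of full support on $\S$ and scalarize via $J^{\Loc}(\pi_{\Loc},\pi_{\Glob}):=\langle\rho_{\Loc},\rmV^{\Loc}_{\pi_{\Loc},\pi_{\Glob}}\rangle$ and analogously $J^{\Glob}$. Because for a fixed opponent strategy each agent faces a genuine single-agent discounted MDP -- for LA the MDP with induced data $(\tildrmr^{\Loc},\tildrmP^{\Loc})$ of Theorem \ref{Thm:akakashshssggdhdhdggdgdhdhd}, and for GA the MDP governed by the operator $\rmH^{\Glob}_{\pi_{\Loc},*}$ of Section \ref{Sec:ajajahshsgsgssssss} -- the componentwise inequality $\rmV_{\pi}\le\rmV_{*}$ shows that a policy maximizes the full-support scalarization if and only if it attains the optimal value function in every state simultaneously. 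Hence the scalarized maximizers coincide with the genuine best responses, each best-response set is nonempty (MDP optimal policies exist), and each is convex: by Lemma \ref{Lem:ajajajagsgsgsfsfsffsfssgsfssss} (and the analogous greedy characterization for LA) a policy is optimal exactly when its support at each $(s,a_{\Loc})$ lies in the $\argmax$ of the relevant optimal $Q$-function, which is a convex support constraint.

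The main obstacle is establishing that $\Phi$ has a closed graph, equivalently that each best-response correspondence is upper hemicontinuous. For this I would first prove joint continuity of the value functions in the strategy profile: the induced reward and transition kernels are multilinear, hence continuous, in the opponent's strategy, and $\rmV^{\Loc}_{\pi_{\Loc},\pi_{\Glob}}$ (resp.\ $\rmV^{\Glob}_{\pi_{\Loc},\pi_{\Glob}}$) is the unique fixed point of the corresponding Bellman contraction, which depends continuously on these parameters by the uniform $\beta$-contraction estimate \eqref{Eq:akaksjsssgggsgsssssss} together with the standard perturbation bound for fixed points of contractions. Continuity of $J^{\Loc},J^{\Glob}$ then follows, and since the constraint sets (the simplices) are fixed and compact, Berge's Maximum Theorem yields that $\mathrm{BR}_{\Loc}$ and $\mathrm{BR}_{\Glob}$ are upper hemicontinuous with nonempty compact values.

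With $X$ nonempty, compact, convex and $\Phi$ upper hemicontinuous, nonempty-, convex-, and compact-valued, Kakutani's fixed-point theorem furnishes $(\pi_{\Loc}^{*},\pi_{\Glob}^{*})\in\Phi(\pi_{\Loc}^{*},\pi_{\Glob}^{*})$. Unwinding the full-support scalarization shows $\rmV^{\Loc}_{\pi_{\Loc}^{*},\pi_{\Glob}^{*}}(s)\ge\rmV^{\Loc}_{\pi_{\Loc},\pi_{\Glob}^{*}}(s)$ for all $s$ and all $\pi_{\Loc}$, and symmetrically for GA, which is exactly \eqref{NEQ}. I expect the verification of joint continuity (the perturbation estimate for the contraction fixed points) and of convex-valuedness to be the only genuinely technical points; the information-asymmetric, Stackelberg-like structure enters only through the enlarged strategy set $\Delta_{\S\times\A_{\Loc}}(\A_{\Glob})$, which is still a compact convex product of simplices and so poses no additional difficulty for the fixed-point machinery.
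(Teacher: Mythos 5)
Your proof is correct, but it takes a genuinely different route from the paper's. The paper follows Fink's 1964 scheme: it introduces the reciprocal Bellman equation \eqref{RBE}, shows via Theorem \ref{Thm:RBENash} that any triple $(\rmV^{*},\pi_{\Loc}^{*},\pi_{\Glob}^{*})$ solving the RBE yields a Nash equilibrium, and then produces such a triple by applying Kakutani not to best responses but to the auxiliary point-set map $\phi$ of \eqref{phidef}, built from the coupled operator $\rmH_{(\pi_{\Loc},\pi_{\Glob}),(\pi_{\Loc}^{*},\pi_{\Glob}^{*})}$, whose upper semicontinuity is proved by hand through the boundedness and Lipschitz estimates of Lemmas \ref{Lem:aidgfgfgefefeeeeeeeeee} and \ref{Lem:ahjahshsgsgsfffsgfgssssss}. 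You instead apply Kakutani directly to the joint best-response correspondence: convexity of its values follows from the support-in-$\argmax$ characterization of optimal policies in the per-agent MDPs (for LA the induced MDP $(\S,\A_{\Loc},\tildrmr^{\Loc},\tildrmP^{\Loc},\beta_{\Loc})$ of Theorem \ref{Thm:akakashshssggdhdhdggdgdhdhd}, for GA the problem governed by $\rmH^{\Glob}_{\pi_{\Loc},*}$ from Section \ref{Sec:ajajahshsgsgssssss}), and upper hemicontinuity follows from joint continuity of the value functions (the standard perturbation bound for fixed points of uniform $\beta$-contractions, which the paper's Lemma \ref{Lem:ajajasshshsgsgsgsfsfss} makes quantitative) combined with Berge's maximum theorem; your full-support scalarization is a valid device, since $\rmV_{\pi}\leq\rmV_{*}$ componentwise forces scalarized maximizers to attain optimality in every state simultaneously. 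Your route is shorter because it outsources the continuity work to Berge, whereas the paper's hand-crafted argument is self-contained and additionally delivers the RBE characterization of the equilibrium value functions, which meshes with the Bellman-operator framework used throughout the Q-learning analysis. One small imprecision on your side: for GA, Lemma \ref{Lem:ajajajagsgsgsfsfsffsfssgsfssss} asserts only sufficiency of the support condition, and your ``exactly when'' fails at pairs $(s,a_{\Loc})$ with $\pi_{\Loc}(a_{\Loc}|s)=0$, where a best response is unconstrained; this does not damage the argument, because the best-response set is then a product of simplex faces and full simplices and so remains nonempty, compact, and convex, which is all Kakutani requires.
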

The proof of this theorem follows the approach in \cite{Fink1964}, and is divided into two steps which is given in the following. First, we characterize the Nash equilibrium of this particular game and show that it can be described by a suitable fixed point equation using the previously defined operators (Theorem \ref{Thm:RBENash}). Thus, if a fixed point exists, it coincides with the Nash equilibrium. In the second step we prove the existence of a fixed point using Kakutanis fixed point theorem (Theorem \ref{Thm:RBEExists}). The formal proof of Theorem \ref{Thm:shhsgsgfsffddddddd} is given in the following:

First, we characterize the Nash equilibrium by a suitable fixed point equation using the operator $\rmH_{\pi_{\Loc},\pi_{\Glob}}: \real^\S\times\real^\S \to \real^\S\times\real^\S$ defined as:
\begin{equation}
\label{Eq:OpOp}
	(\rmH_{\pi_{\Loc},\pi_{\Glob}} \rmV)(s) := \begin{bmatrix}
		(\rmH_{\pi_{\Loc},\pi_{\Glob}}^{\Loc} \rmV_{\Loc})(s)\\
		(\rmH_{\pi_{\Loc},\pi_{\Glob}}^{\Glob} \rmV_{\Glob})(s)
	\end{bmatrix}(s) := \Erw_{\substack{A_{\Loc}\sim\pi_{\Loc}(\cdot|s)\\A_{\Glob}\sim\pi_{\Glob}(\cdot|s,A_{\Loc})}} \begin{bmatrix}
	\rmr^{\Loc}(s,A_{\Loc},A_{\Glob}) + \beta_{\Loc}\Erw_{S'\sim \rmP(\cdot|s,A_{\Loc},A_{\Glob})}[\rmV_{\Loc}(S')] \\
		\rmr^{\Glob}_s(s,A_{\Loc},A_{\Glob}) + \beta_{\Glob}\Erw_{S'\sim \rmP(\cdot|s,A_{\Loc},A_{\Glob})}[\rmV_{\Glob}(S')]
	\end{bmatrix}
\end{equation}
and the operator $\rmH_{\pi_{\Loc},\pi_{\Glob}}^*:\mathcal{S}^2 \to \mathcal{S}^2$ defined as:
\begin{equation*}
\begin{split}
	(\rmH_{\pi_{\Loc},\pi_{\Glob}}^* \rmV)(s) &:= \begin{bmatrix}
		(\rmH_{*,\pi_{\Glob}}^{\Loc} \rmV_{\Loc})(s)\\
		(\rmH_{\pi_{\Loc},*}^{\Glob} \rmV_{\Glob})(s)
	\end{bmatrix}\\
	&:= \begin{bmatrix}
		\max_{\pi_{\Loc} \in \Delta_{\S}(\A_{\Loc})} \Erw_{\substack{A_{\Loc}\sim\pi_{\Loc}(\cdot|s)\\A_{\Glob}\sim\pi_{\Glob}(\cdot|s,A_{\Loc})}}\left[\rmr^{\Loc}(s,A_{\Loc},A_{\Glob}) + \beta_{\Loc}\Erw_{S'\sim \rmP(\cdot|s,A_{\Loc},A_{\Glob})}[\rmV_{\Loc}(S')]\right] \\
		\Erw_{A_{\Loc}\sim\pi_{\Loc}(\cdot|s)}\left[\max_{\pi_{\Glob} \in \Delta_{\S\times\A_{\Loc}}(\A_{\Glob})} \Erw_{A_{\Glob}\sim\pi_{\Glob}(\cdot|s,A_{\Loc})}[\rmr^{\Glob}(s,A_{\Loc},A_{\Glob}) + \beta_{\Glob}\Erw_{S'\sim \rmP(\cdot|s,A_{\Loc},A_{\Glob})}[\rmV_{\Glob}(S')]]\right].
	\end{bmatrix}
	\end{split}
\end{equation*}
Here, we use the notation $\rmV=(\rmV_{\Loc},\rmV_{\Glob})$.
Thus, if a fixed point exists, it coincides with the Nash equilibrium. In the next step we prove the existence of a fixed point using Kakutani's fixed point theorem (Theorem 17). First, we provide in the following some basic properties of the operators introduced above. For better readability, we give the corresponding proof in the Appendix (Subsection \ref{Subsec:ProofNash}).
\begin{lemma}
\label{Lem:ajhajasggffsfsfsfsss}
    For all $\pi_{\Loc}\in\Delta_{\S}(\A_{\Loc})$ and $\pi_{\Glob}\in\Delta_{\S\times\A_{\Loc}}(\A_{\Glob})$, $\rmH^{\Loc}_{\pi_{\Loc},\pi_{\Glob}}\leq\rmH^{\Loc}_{*,\pi_{\Glob}}$ and   $\rmH^{\Glob}_{\pi_{\Loc},\pi_{\Glob}}\leq\rmH^{\Glob}_{\pi_{\Loc},*}$.
\end{lemma}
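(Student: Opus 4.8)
The plan is to observe that both inequalities hold pointwise in the state $s$ and follow from the elementary fact that evaluating a functional at one admissible argument never exceeds its maximum over all admissible arguments; no contraction or fixed-point machinery is needed here, only monotonicity of the expectation. I would treat the two claims separately, since the asymmetric way in which the maximization enters $\rmH^{\Loc}_{*,\pi_{\Glob}}$ versus $\rmH^{\Glob}_{\pi_{\Loc},*}$ mirrors the underlying information asymmetry: GA may condition its best response on the observed LA action, whereas LA cannot condition on GA's action.

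For the LA inequality I would fix a state $s$ and note that, by the definitions in \eqref{Eq:OpOp} and of the starred operator, the quantity $(\rmH^{\Loc}_{\pi_{\Loc},\pi_{\Glob}}\rmV_{\Loc})(s)$ is exactly the objective appearing inside the maximization that defines $(\rmH^{\Loc}_{*,\pi_{\Glob}}\rmV_{\Loc})(s)$, evaluated at the particular policy $\pi_{\Loc}$. Since $\pi_{\Loc}$ is one feasible element of $\Delta_{\S}(\A_{\Loc})$ and the starred operator maximizes over all of $\Delta_{\S}(\A_{\Loc})$ while keeping $\pi_{\Glob}$ fixed, the bound $(\rmH^{\Loc}_{\pi_{\Loc},\pi_{\Glob}}\rmV_{\Loc})(s)\le(\rmH^{\Loc}_{*,\pi_{\Glob}}\rmV_{\Loc})(s)$ is immediate.

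For the GA inequality I would again fix $s$, but now argue conditionally on the LA action. For each fixed $a_{\Loc}$, the inner conditional expectation $\Erw_{A_{\Glob}\sim\pi_{\Glob}(\cdot|s,a_{\Loc})}[\rmr^{\Glob}(s,a_{\Loc},A_{\Glob})+\beta_{\Glob}\Erw_{S'\sim\rmP(\cdot|s,a_{\Loc},A_{\Glob})}[\rmV_{\Glob}(S')]]$ is bounded above by its maximum over $\pi_{\Glob}\in\Delta_{\S\times\A_{\Loc}}(\A_{\Glob})$, which is precisely the integrand of $(\rmH^{\Glob}_{\pi_{\Loc},*}\rmV_{\Glob})(s)$. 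Integrating this inequality (which holds for every $a_{\Loc}$) against $\pi_{\Loc}(\cdot|s)$ and invoking monotonicity of the expectation yields $(\rmH^{\Glob}_{\pi_{\Loc},\pi_{\Glob}}\rmV_{\Glob})(s)\le(\rmH^{\Glob}_{\pi_{\Loc},*}\rmV_{\Glob})(s)$. This is in fact the same computation already carried out in the second part of Lemma \ref{Lem:ajajgsgsghsgffsfsfdfssss}, once one notes that maximizing over $\pi_{\Glob}$ coincides with the action-wise maximum $\max_{a_{\Glob}}(\cdots)$ used there.

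Since $s$ was arbitrary, both inequalities hold in the entrywise order on $\real^{\S}$, which is the claim. I do not anticipate a genuine obstacle; the only point deserving care is that in the GA case the maximum sits \emph{inside} the outer $A_{\Loc}$-expectation, reflecting that GA chooses its optimal response per observed action, so one must establish the pointwise-in-$a_{\Loc}$ bound first and only then integrate, rather than pulling the maximum outside the $A_{\Loc}$-expectation.
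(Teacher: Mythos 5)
Your proposal is correct and follows essentially the same route as the paper: the LA inequality is immediate because $\pi_{\Loc}$ is a feasible point of the maximization defining $\rmH^{\Loc}_{*,\pi_{\Glob}}$, and the GA inequality is obtained exactly as in the paper by first bounding the inner $A_{\Glob}$-expectation pointwise in $a_{\Loc}$ by its maximum and then integrating against $\pi_{\Loc}(\cdot|s)$ via monotonicity of the expectation. Your observation that the maximum over $\pi_{\Glob}\in\Delta_{\S\times\A_{\Loc}}(\A_{\Glob})$ coincides with the action-wise maximum $\max_{a_{\Glob}}(\cdots)$, and that the maximum must stay inside the outer $A_{\Loc}$-expectation, correctly identifies the only delicate point and is consistent with the paper's argument.
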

\begin{lemma} 
\label{Lem:aidgfgfgefefeeeeeeeeee}
The solution $\rmV$ of the Bellman equation $\rmV=\rmH_{\pi_{\Loc},\pi_{\Glob}}\rmV$ is bounded in the sense that:
    \begin{equation*}
        \|\rmV\|_{\infty} \leq \frac{\|\rmr\|_{\infty}}{1-\beta},
    \end{equation*}
    where $\beta = \max_i \beta_i$, and $\norm{\rmr}_{\infty}:=\max_{s\in\S,i\in\lrbrace{\Loc,\Glob},(a_{\Loc},a_{\Glob})\in\A_{\Loc}\times\A_{\Glob}} |\rmr^{(i)}(s,a_{\Loc},a_{\Glob})|$ 
\end{lemma}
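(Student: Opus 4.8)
The plan is to exploit the componentwise (decoupled) structure of $\rmH_{\pi_{\Loc},\pi_{\Glob}}$ and run the standard fixed-point norm bound separately in each coordinate. Writing $\rmV=(\rmV_{\Loc},\rmV_{\Glob})$, the equation $\rmV=\rmH_{\pi_{\Loc},\pi_{\Glob}}\rmV$ is equivalent to the two scalar fixed-point equations $\rmV_{i}=\rmH^{(i)}_{\pi_{\Loc},\pi_{\Glob}}\rmV_{i}$ for $i\in\lrbrace{\Loc,\Glob}$, where by the definition \eqref{Eq:OpOp},
\begin{equation*}
\rmV_{i}(s)=\Erw_{\substack{A_{\Loc}\sim\pi_{\Loc}(\cdot|s)\\A_{\Glob}\sim\pi_{\Glob}(\cdot|s,A_{\Loc})}}\left[\rmr^{(i)}(s,A_{\Loc},A_{\Glob})+\beta_{i}\Erw_{S'\sim\rmP(\cdot|s,A_{\Loc},A_{\Glob})}[\rmV_{i}(S')]\right].
\end{equation*}

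First I would fix $i$ and an arbitrary state $s$, take absolute values on both sides, and push $|\cdot|$ inside the nested expectations using the triangle inequality (equivalently, Jensen for the convex map $x\mapsto|x|$). Bounding the reward term by $\norm{\rmr}_{\infty}$ and each value $\rmV_{i}(S')$ by $\norm{\rmV_{i}}_{\infty}$ then yields $|\rmV_{i}(s)|\leq\norm{\rmr}_{\infty}+\beta_{i}\norm{\rmV_{i}}_{\infty}$ for every $s\in\S$. Taking the supremum over $s$ and rearranging gives $\norm{\rmV_{i}}_{\infty}\leq\norm{\rmr}_{\infty}/(1-\beta_{i})$; moving the $\beta_{i}\norm{\rmV_{i}}_{\infty}$ term to the left is legitimate because $\norm{\rmV_{i}}_{\infty}$ is automatically finite, as $\rmV_{i}\in\real^{\S}$ with $\S$ finite.

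Finally, since $\beta_{i}\leq\beta=\max_{i}\beta_{i}$ we have $1-\beta_{i}\geq 1-\beta>0$, so $\norm{\rmV_{i}}_{\infty}\leq\norm{\rmr}_{\infty}/(1-\beta)$ holds uniformly in $i$, and taking $\norm{\rmV}_{\infty}=\max_{i}\norm{\rmV_{i}}_{\infty}$ gives the claimed bound. There is no genuine obstacle here: the estimate is the usual geometric-series / contraction bound, and the only points requiring a word of care are the order of operations (taking absolute values inside the conditional expectations) and stating the bound uniformly over the two components by replacing each $\beta_{i}$ with the coarser discount $\beta$.
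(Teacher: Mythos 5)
Your proof is correct and follows essentially the same route as the paper: take absolute values inside the nested expectations, bound the reward by $\norm{\rmr}_{\infty}$ and the continuation value by the sup-norm, and rearrange the resulting inequality $\norm{\rmV_i}_{\infty}\leq\norm{\rmr}_{\infty}+\beta_i\norm{\rmV_i}_{\infty}$ (the paper takes the maximum over $(s,i)$ jointly and works with $\beta=\max_i\beta_i$ from the start, whereas you argue componentwise with $\beta_i$ and coarsen at the end, which is an immaterial difference). Your explicit remark that the rearrangement is licensed by finiteness of $\norm{\rmV_i}_{\infty}$ on the finite state space is a point the paper leaves implicit.
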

\begin{lemma}
\label{Lem:ahjahshsgsgsfffsgfgssssss}
	The operator $\rmH_{\pi_{\Loc},\pi_{\Glob}}\rmV$ is Lipschitz continuous in each argument $(\pi_{\Loc},\pi_{\Glob})\in\Delta_{\S}(\A_{\Loc})\times \Delta_{\S\times\A_{\Loc}}(\A_{\Glob})$ and $\rmV\in\real^{\S}\times\real^{\S}$.
\end{lemma}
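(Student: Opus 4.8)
The plan is to exploit the fact that each component of $\rmH_{\pi_{\Loc},\pi_{\Glob}}\rmV$, once the expectation is expanded into a finite sum over $\A_{\Loc}$, $\A_{\Glob}$ and $\S$, is \emph{multilinear}: it is affine in $\rmV$, linear in $\pi_{\Loc}$, and linear in $\pi_{\Glob}$, with coefficients that are uniformly bounded. Concretely, for $i\in\lrbrace{\Loc,\Glob}$ I would set $g_{i}(s,a_{\Loc},a_{\Glob}):=\rmr^{(i)}(s,a_{\Loc},a_{\Glob})+\beta_{i}\sum_{s'\in\S}\rmP(s'|s,a_{\Loc},a_{\Glob})\rmV_{i}(s')$, so that $(\rmH^{(i)}_{\pi_{\Loc},\pi_{\Glob}}\rmV_{i})(s)=\sum_{a_{\Loc}}\pi_{\Loc}(a_{\Loc}|s)\sum_{a_{\Glob}}\pi_{\Glob}(a_{\Glob}|s,a_{\Loc})\,g_{i}(s,a_{\Loc},a_{\Glob})$, and observe that $\abs{g_{i}}\leq M:=\norm{\rmr}_{\infty}+\beta\norm{\rmV}_{\infty}$, since $\rmP(\cdot|s,a_{\Loc},a_{\Glob})$ is a probability distribution. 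Because $\S,\A_{\Loc},\A_{\Glob}$ are finite, $M$ is finite for every fixed $\rmV$, which is all that is needed for Lipschitz continuity in a single argument.

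Next I would establish the three Lipschitz estimates separately, holding the other two arguments fixed. For $\rmV$: the only $\rmV$-dependence sits in the $\beta_{i}\sum_{s'}\rmP\,\rmV_{i}$ term, and since the policy weights $\sum_{a_{\Loc},a_{\Glob}}$ form a convex combination, the bound $\abs{(\rmH^{(i)}_{\pi_{\Loc},\pi_{\Glob}}\rmV_{i})(s)-(\rmH^{(i)}_{\pi_{\Loc},\pi_{\Glob}}\rmV_{i}')(s)}\leq\beta_{i}\norm{\rmV_{i}-\rmV_{i}'}_{\infty}$ follows verbatim from the contraction computation already carried out in Lemma \ref{Lem:ajajgsgsghsgffsfsfdfssss}; taking the supremum over $s$ and $i$ yields the Lipschitz constant $\beta=\max_{i}\beta_{i}<1$. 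For $\pi_{\Loc}$: writing $\bar g_{i}(s,a_{\Loc}):=\sum_{a_{\Glob}}\pi_{\Glob}(a_{\Glob}|s,a_{\Loc})g_{i}(s,a_{\Loc},a_{\Glob})$ with $\abs{\bar g_{i}}\leq M$, the difference equals $\sum_{a_{\Loc}}(\pi_{\Loc}(a_{\Loc}|s)-\pi_{\Loc}'(a_{\Loc}|s))\bar g_{i}(s,a_{\Loc})$, bounded by $M\norm{\pi_{\Loc}-\pi_{\Loc}'}$ where $\norm{\cdot}$ denotes the total-variation metric (the sup over $s$ of the $\ell^{1}$ distance) on $\Delta_{\S}(\A_{\Loc})$. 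The estimate for $\pi_{\Glob}$ is entirely analogous: the difference is $\sum_{a_{\Loc}}\pi_{\Loc}(a_{\Loc}|s)\sum_{a_{\Glob}}(\pi_{\Glob}-\pi_{\Glob}')(a_{\Glob}|s,a_{\Loc})g_{i}(s,a_{\Loc},a_{\Glob})$, and since $\sum_{a_{\Loc}}\pi_{\Loc}(a_{\Loc}|s)=1$ this is again bounded by $M\norm{\pi_{\Glob}-\pi_{\Glob}'}$.

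I expect essentially no genuine obstacle: the statement is a routine consequence of multilinearity together with boundedness in a finite-dimensional setting. The only points requiring care are bookkeeping rather than mathematical, namely fixing a single norm on each policy simplex (I would use the total-variation metric, so the convex-combination bounds pass cleanly) and recording that the Lipschitz constants in $\pi_{\Loc}$ and $\pi_{\Glob}$ scale like $M=\norm{\rmr}_{\infty}+\beta\norm{\rmV}_{\infty}$, hence depend on the fixed value of $\rmV$. This dependence is harmless for the intended use: the subsequent Kakutani argument (Theorem \ref{Thm:RBEExists}) only needs these per-argument constants to be uniform on bounded sets, which is guaranteed once $\rmV$ is restricted to the ball from Lemma \ref{Lem:aidgfgfgefefeeeeeeeeee}.
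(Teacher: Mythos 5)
Your proof is correct and follows essentially the same route as the paper's: expand the expectations, get the $\beta$-contraction in $\rmV$ from the convex-combination structure, and bound the differences in $\pi_{\Loc}$ and $\pi_{\Glob}$ via the H\"older ($\ell^{1}$--$\ell^{\infty}$) pairing with the total-variation metric $\max_{s}\norm{\pi(\cdot|s)-\tilde{\pi}(\cdot|s)}_{1}$ on the policy simplices. The only difference is cosmetic: you keep the constant $M=\norm{\rmr}_{\infty}+\beta\norm{\rmV}_{\infty}$ explicit, whereas the paper immediately substitutes $\norm{\rmV}_{\infty}\leq\norm{\rmr}_{\infty}/(1-\beta)$ from Lemma \ref{Lem:aidgfgfgefefeeeeeeeeee} to get the uniform constant $\norm{\rmr}_{\infty}/(1-\beta)$ --- and your explicit remark that this substitution is legitimate only for $\rmV$ in the bounded set relevant to the Kakutani argument is, if anything, more careful than the paper's phrasing, since that lemma holds for Bellman fixed points rather than arbitrary $\rmV$.
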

In the following, we define the fixed point equation of our interest:
\begin{definition}
We say the triple $(\rmV,\pi_{\Loc}^{*},\pi_{\Glob}^{*})\in\real^{\S^{2}}\times\Delta_{\S}(\A_{\Loc})\times\Delta_{\S\times\A_{\Loc}}(\A_{\Glob})$ satisfies the \textit{reciprocal Bellman equation} (RBE) for the local-global stochastic game if:
\begin{equation}
\rmV^* = \rmH_{\pi_{\Loc}^*,\pi_{\Glob}^*}\rmV^*\quad\text{and}\quad\rmV^* = \rmH_{\pi_{\Loc}^{*},\pi_{\Glob}^{*}}^* \rmV^*
\label{RBE}
\end{equation}
\end{definition}
The following Theorem gives the connection between above concept and the concept of Nash equilibrium of the local-global stochastic game:
\begin{theorem}
\label{Thm:RBENash}
Let $(\rmV^{*},\pi_{\Loc}^*,\pi_{\Glob}^*)$ be a triple satisfying the RBE of the local-global stochastic game. Then $(\pi_{\Loc}^*,\pi_{\Glob}^*) \in\Delta_{\S}(\A_{\Loc})\times\Delta_{\S\times\A_{\Loc}}(\A_{\Glob})$ is a Nash equilibrium for the local-global stochastic game. 
\end{theorem}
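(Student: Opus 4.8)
The plan is to use the two RBE identities for distinct purposes: the evaluation equation $\rmV^* = \rmH_{\pi_{\Loc}^*,\pi_{\Glob}^*}\rmV^*$ to identify $\rmV^*$ as the value function of the fixed strategy pair, and the optimality equation $\rmV^* = \rmH^*_{\pi_{\Loc}^*,\pi_{\Glob}^*}\rmV^*$ to certify that neither agent can profitably deviate unilaterally.

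First I would treat the evaluation equation. The map $\rmH_{\pi_{\Loc}^*,\pi_{\Glob}^*}$ in \eqref{Eq:OpOp} is, componentwise, the single-agent Bellman evaluation operator of type \eqref{Eq:aajshssgsgsgsffsfss} driven by the reward and transition obtained by averaging over the opponent's action; arguing exactly as for the contraction \eqref{Eq:akaksjsssgggsgsssssss}, each component is a $\beta_{(i)}$-contraction on $\real^{\S}$. Hence $\rmV^*$ is its unique fixed point, and by the Bellman characterization \eqref{Eq:aajshssgsgsgsffsfss} I would conclude $\rmV^*_{\Loc} = \rmV^{\Loc}_{\pi_{\Loc}^*,\pi_{\Glob}^*}$ and $\rmV^*_{\Glob} = \rmV^{\Glob}_{\pi_{\Loc}^*,\pi_{\Glob}^*}$.

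Next I would read off the two components of the optimality equation. For the $\Glob$-component, after reducing the inner $\max_{\pi_{\Glob}}$ over the simplex to a maximum over actions (attained at a vertex), it reads $\rmV^*_{\Glob} = \rmH^{\Glob}_{\pi_{\Loc}^*,*}\rmV^*_{\Glob}$ in the sense of \eqref{Eq:sjsjsgsgsgfsfssssss2}; by uniqueness of the contraction fixed point (Lemma \ref{Lem:ajajgsgsghsgffsfsfdfssss}) together with Theorem \ref{Thm:jjajggggshsgsgsssss} this yields $\rmV^*_{\Glob} = \rmV^{*,\Glob}_{\pi_{\Loc}^*} = \max_{\pi_{\Glob}}\rmV^{\Glob}_{\pi_{\Loc}^*,\pi_{\Glob}}$. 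For the $\Loc$-component, it reads $\rmV^*_{\Loc} = \rmH^{\Loc}_{*,\pi_{\Glob}^*}\rmV^*_{\Loc}$; here I would again note that maximizing the affine functional over $\Delta_{\S}(\A_{\Loc})$ is attained at a vertex, so $\rmH^{\Loc}_{*,\pi_{\Glob}^*}$ coincides with the optimal Bellman operator of type \eqref{Eq:ajajajsgsgsffssddafaaddaadsssdss} for the single-agent MDP $(\S,\A_{\Loc},\tildrmr^{\Loc},\tildrmP^{\Loc},\beta_{\Loc})$ induced by GA playing $\pi_{\Glob}^*$, with averaged data as in \eqref{Eq:ajjahsgsgsgsffffsffsssssss}. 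Thus $\rmV^*_{\Loc}$ is the optimal value function of that induced MDP, i.e. $\rmV^*_{\Loc} = \max_{\pi_{\Loc}}\rmV^{\Loc}_{\pi_{\Loc},\pi_{\Glob}^*}$.

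Finally I would combine the two steps. Chaining the identities gives $\rmV^{\Loc}_{\pi_{\Loc}^*,\pi_{\Glob}^*} = \rmV^*_{\Loc} = \max_{\pi_{\Loc}}\rmV^{\Loc}_{\pi_{\Loc},\pi_{\Glob}^*}$ and $\rmV^{\Glob}_{\pi_{\Loc}^*,\pi_{\Glob}^*} = \rmV^*_{\Glob} = \max_{\pi_{\Glob}}\rmV^{\Glob}_{\pi_{\Loc}^*,\pi_{\Glob}}$, which are exactly the Nash conditions \eqref{NEQ} for $i=\Loc$ and $i=\Glob$. The main obstacle I anticipate lies in the $\Loc$-component: one must verify carefully that the maximization over $\pi_{\Loc}\in\Delta_{\S}(\A_{\Loc})$ inside $\rmH^{\Loc}_{*,\pi_{\Glob}^*}$ reproduces the single-agent optimal Bellman operator of the \emph{induced} MDP, and that optimizing over stationary Markov LA policies already attains the supremum over the full (possibly history-dependent) policy class --- standard MDP facts that must nonetheless be applied to the induced model rather than to the original game.
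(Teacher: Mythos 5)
Your proposal is correct, but it takes a genuinely different route from the paper. The paper's proof is symmetric in the agent index $i$: it invokes the domination Lemma \ref{Lem:ajhajasggffsfsfsfsss} ($\rmH^{(i)}_{\pi_{(i)},\pi^*_{(-i)}}\leq \rmH^{(i)}_{*,\pi^*_{(-i)}}$), iterates it to get $(\rmH^{(i)}_{\pi_{(i)},\pi^*_{(-i)}})^{n}\rmV_i^{*}\leq(\rmH^{(i)}_{*,\pi^*_{(-i)}})^{n}\rmV_i^{*}=\rmV_i^{*}$, and lets $n\to\infty$ so that the left side converges (by contraction) to $\rmV^{(i)}_{\pi_{(i)},\pi^*_{(-i)}}$, yielding $\max_{\pi_{(i)}}\rmV^{(i)}_{\pi_{(i)},\pi^*_{(-i)}}\leq\rmV_i^{*}$; the reverse inequality then comes, as in your first step, from the evaluation equation and uniqueness of the contraction fixed point. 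You instead convert each component of the optimality equation into a single-agent optimal Bellman equation: for the GA side you reduce the simplex maximum to a vertex maximum and invoke Theorem \ref{Thm:jjajggggshsgsgsssss} directly, and for the LA side you identify $\rmH^{\Loc}_{*,\pi^*_{\Glob}}$ with the optimal Bellman operator of the induced MDP $(\S,\A_{\Loc},\tildrmr^{\Loc},\tildrmP^{\Loc},\beta_{\Loc})$ and appeal to standard MDP theory. Both arguments are sound. What each buys: the paper's iteration argument treats both agents uniformly with one piece of machinery and needs no vertex-attainment reductions, though it implicitly relies on monotonicity of the Bellman operators to propagate the operator inequality through $n$ iterations --- a standard fact left unstated there; your route avoids that monotonicity entirely and reuses results already proved (Theorem \ref{Thm:jjajggggshsgsgsssss}), at the cost of the extra bookkeeping you correctly flag, namely verifying that $\rmV^{\Loc}_{\pi_{\Loc},\pi^*_{\Glob}}$ equals the value function of $\pi_{\Loc}$ in the induced MDP --- exactly the computation the paper carries out as \eqref{Eq:aiaishssgsggsgsfsfss} in the proof of Lemma \ref{Lem:ajajsshssgsgssfsfss}, so this gap is fillable verbatim from that proof. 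One small simplification: since the Nash condition \eqref{NEQ} already quantifies only over stationary Markov deviations, your final worry about history-dependent LA policies is not needed for the theorem as stated.
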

\begin{proof}
Denote $\rmV^{*}=(\rmV^{*}_{\Loc},\rmV^{*}_{\Glob})$. Let be $i\in \lrbrace{\Loc,\Glob}$. For any $n \in \mathbb{N}$ and $\pi_{(i)}$, it holds by Lemma \ref{Lem:ajhajasggffsfsfsfsss}:
\begin{equation}
\label{Eq:ajkajajsgsgsgsffsssssss}
    \underbrace{(\rmH_{\pi_{(i)},\pi_{(-i)}^*}^{(i)})^n \rmV_i^*}_{\rightarrow \rmV_{\pi_{(i)},\pi_{(-i)}^*}^{(i)}} \leq \underbrace{(\rmH_{*,\pi_{(-i)}^*}^{(i)})^n \rmV_i^*}_{=\rmV_i^*}
\end{equation}
Notice that $\rmH_{\pi_{(i)},\pi_{(-i)}^*}^{(i)}$ is a contraction (c.f. \eqref{Eq:akaksjsssgggsgsssssss}). Now, the value function $\rmV_{\pi_{(i)},\pi_{(-i)}^*}^{(i)}$ of $i$ given the population strategy $(\pi_{(i)},\pi_{(-i)}^*)$ is the fixed point of the Bellman equation specified by $\rmH_{\pi_{(i)},\pi_{(-i)}^*}^{(i)}$. Moreover by definition of RBE, $\rmV_i^*$ is the solution of the Bellman equation specified by $\rmH_{*,\pi_{(-i)}^*}^{(i)}$. Those observations yield $(\rmH_{\pi_{(i)},\pi_{(-i)}^*}^{(i)})^n \rmV_i^*\rightarrow \rmV_{\pi_{(i)},\pi_{(-i)}^*}^{(i)}$ as $n\rightarrow\infty$ and $(\rmH_{*,\pi_{(-i)}^*}^{(i)})^n \rmV_i^{*}=\rmV_{i}^{*}$. Setting this into \eqref{Eq:ajkajajsgsgsgsffsssssss} and since $\pi_{(i)}$ is arbitrary we have:
\begin{equation*}
    \max_{\pi_{(i)}}  \rmV^{(i)}_{\pi_{(i)},\pi_{(-i)}^*} \leq \rmV_i^*
\end{equation*}
For the reverse inequality, notice that since $\rmV_i^*$ satisfies $\rmV_i^* = \rmH_{\pi_{\Loc}^*,\pi_{\Glob}^*}^{(i)}\rmV_i^*$, we have: 
\begin{equation*}
    \rmV_i^* =  \rmV^{(i)}_{\pi_{(i)}^*,\pi_{(-i)}^*} \leq \max_{\pi_{(i)}}  \rmV^{(i)}_{\pi_{(i)},\pi_{(-i)}^*},
\end{equation*}
where the equality follows by the uniqueness of the solution of the Bellman equation specified by the contractive operator $\rmH_{\pi_{\Loc}^*,\pi_{\Glob}^*}^{(i)}$ having the solution $\rmV^{(i)}_{\pi_{(i)}^*,\pi_{(-i)}^*}$, as desired.
\end{proof}

What remains for the proof of Theorem \ref{Thm:shhsgsgfsffddddddd} is to show the existence of a fixed point, i.e. a solution $(\rmV^{*},\pi_{\Loc}^*,\pi_{\Glob}^*)$ of the RBE.
We aim to proof the existence of such a solution using Kakutanis fixed point theorem, following a similar argumentation as \cite{Fink1964}. To do so, we define first the following set:
\begin{equation}
	\Sigma(\pi_{\Loc},\pi_{\Glob}) 
	:= \left\{ \left.\rmV=\begin{bmatrix}
		\rmV_{\Loc}\\
		\rmV_{\Glob}
	\end{bmatrix}\in\real^{\S^{2}} \right|  \rmV = \rmH_{\pi_{\Loc},\pi_{\Glob}}^* \rmV
	\right\},\quad (\pi_{\Loc},\pi_{\Glob})\in\Delta_{\S}(\A_{\Loc})\times\Delta_{\S\times\A_{\Loc}}(\A_{\Glob}).
	\label{sigmadef}
\end{equation}
Additionally, we define in the following the a point-set mapping $\phi(\pi_{\Loc},\pi_{\Glob}):\Delta\rightarrow 2^{\Delta}$, where $\Delta = \Delta_{\mathcal{S}}\times\Delta_{\mathcal{S}\times\mathcal{\mathcal{A}_{\Loc}}}$ and $2^{\Delta}$ is the power set of $\Delta$:
\begin{equation}
	\phi(\pi_{\Loc},\pi_{\Glob})
	:= \left\{ (\pi_{\Loc}^*,\pi_{\Glob}^*)\in\Delta_{\S}(\A_{\Loc})\times\Delta_{\S\times\A_{\Loc}}(\A_{\Glob}) \left| \right. \Sigma(\pi_{\Loc},\pi_{\Glob}) = \rmH_{(\pi_{\Loc},\pi_{\Glob}),(\pi_{\Loc}^*,\pi_{\Glob}^*)} \Sigma(\pi_{\Loc},\pi_{\Glob}) \right\}
	\label{phidef} 
\end{equation}
with the coupling operator 
\begin{equation*}
    \rmH_{(\pi_{\Loc},\pi_{\Glob}),(\pi_{\Loc}^*,\pi_{\Glob}^*)}  := \begin{bmatrix}
        \rmH_{\pi_{\Loc}^*,\pi_{\Glob}}^{\Loc}\\
        \rmH_{\pi_{\Loc},\pi_{\Glob}^*}^{\Glob}.
    \end{bmatrix}
\end{equation*}
The following property of $\phi$, whose proof is given in the appendix (Subsection \ref{Subsec:ProofNash}), is useful for our approach:
\begin{lemma}
\label{Lem:Uppersem}
$\phi: \Delta \to 2^{\Delta}$ is upper semi-continuous, i.e. if a sequence $\{(\pi^n_{\Loc},\pi_{\Glob}^n)\}_{n \in \mathbb{N}}$ in $\Delta$ converges to $(\pi_{\Loc},\pi_{\Glob}) \in \Delta$ and a sequence $\{(\pi^{*,n}_{\Loc},\pi^{*,n}_{\Glob})\}_{n \in \mathbb{N}}$ in $\wp(\Delta)$ with $(\pi^{*,n}_{\Loc},\pi^{*,n}_{\Glob}) \in \phi(\pi^n_{\Loc},\pi_{\Glob}^n)$ converges to $(\pi^{*}_{\Loc},\pi^{*}_{\Glob})$, then $(\pi^{*}_{\Loc},\pi^{*}_{\Glob}) \in \phi(\pi_{\Loc},\pi_{\Glob})$.
\end{lemma}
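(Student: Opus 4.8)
The plan is to recognize the claimed sequential condition as the assertion that $\phi$ has closed graph, and to establish it by passing to the limit in the fixed-point equation defining $\phi$ through \eqref{phidef}. Two ingredients I would isolate first are: (i) the continuity of the single-valued map $(\pi_{\Loc},\pi_{\Glob})\mapsto\Sigma(\pi_{\Loc},\pi_{\Glob})$; and (ii) the joint continuity of the coupling operator $\rmH_{(\pi_{\Loc},\pi_{\Glob}),(\pi_{\Loc}^{*},\pi_{\Glob}^{*})}$ in all four strategy arguments together with its value-function argument. Once these are in place the limit argument is essentially automatic.

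For (i), I would first note that $\Sigma(\pi_{\Loc},\pi_{\Glob})$ is genuinely single-valued: the operator $\rmH^{*}_{\pi_{\Loc},\pi_{\Glob}}$ is a $\beta$-contraction in each block (the $\Glob$-block by Lemma \ref{Lem:ajajgsgsghsgffsfsfdfssss}, the $\Loc$-block by the analogous computation), with $\beta=\max_i\beta_i<1$ uniform in the strategies, so its fixed point $\rmV$ exists and is unique. Continuity of this fixed point in $(\pi_{\Loc},\pi_{\Glob})$ then follows from the uniform contraction principle: writing $\rmV^{n}=\Sigma(\pi^{n}_{\Loc},\pi^{n}_{\Glob})$ and $\rmV=\Sigma(\pi_{\Loc},\pi_{\Glob})$, I would insert $\rmH^{*}_{\pi^{n}_{\Loc},\pi^{n}_{\Glob}}\rmV$, bound $\norm{\rmH^{*}_{\pi^{n}_{\Loc},\pi^{n}_{\Glob}}\rmV^{n}-\rmH^{*}_{\pi^{n}_{\Loc},\pi^{n}_{\Glob}}\rmV}_{\infty}$ by the contraction estimate, and control the remaining term by the Lipschitz dependence of $\rmH^{*}$ on the strategies (the analogue for the starred operator of Lemma \ref{Lem:ahjahshsgsgsfffsgfgssssss}). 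This yields $\norm{\rmV^{n}-\rmV}_{\infty}\le \beta\norm{\rmV^{n}-\rmV}_{\infty}+o(1)$ and hence $\rmV^{n}\to\rmV$.

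With (i) and (ii) recorded, the limit argument is direct. By \eqref{phidef}, the hypothesis $(\pi^{*,n}_{\Loc},\pi^{*,n}_{\Glob})\in\phi(\pi^{n}_{\Loc},\pi^{n}_{\Glob})$ reads $\rmV^{n}=\rmH_{(\pi^{n}_{\Loc},\pi^{n}_{\Glob}),(\pi^{*,n}_{\Loc},\pi^{*,n}_{\Glob})}\rmV^{n}$ with $\rmV^{n}=\Sigma(\pi^{n}_{\Loc},\pi^{n}_{\Glob})$. Letting $n\to\infty$, we have $\rmV^{n}\to\rmV=\Sigma(\pi_{\Loc},\pi_{\Glob})$ by (i), the strategy tuples converge by assumption, and the right-hand side converges to $\rmH_{(\pi_{\Loc},\pi_{\Glob}),(\pi^{*}_{\Loc},\pi^{*}_{\Glob})}\rmV$ by (ii). Hence $\rmV=\rmH_{(\pi_{\Loc},\pi_{\Glob}),(\pi^{*}_{\Loc},\pi^{*}_{\Glob})}\rmV$, i.e. $\Sigma(\pi_{\Loc},\pi_{\Glob})=\rmH_{(\pi_{\Loc},\pi_{\Glob}),(\pi^{*}_{\Loc},\pi^{*}_{\Glob})}\Sigma(\pi_{\Loc},\pi_{\Glob})$, which is precisely $(\pi^{*}_{\Loc},\pi^{*}_{\Glob})\in\phi(\pi_{\Loc},\pi_{\Glob})$.

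The main obstacle is step (i), the continuity of the parametrized fixed point $\Sigma$; everything else is a routine limit exchange. The delicate point is that the $\Loc$-block of $\rmH^{*}$ carries an inner maximization over $\pi_{\Loc}$, so I would verify that the Lipschitz estimates survive this maximization, using $\abs{\max_a f_1(a)-\max_a f_2(a)}\le\max_a\abs{f_1(a)-f_2(a)}$ (as already invoked in Lemma \ref{Lem:ajajgsgsghsgffsfsfdfssss}), so that the starred operator depends continuously on $(\pi_{\Loc},\pi_{\Glob})$ with a contraction factor uniform in the parameters.
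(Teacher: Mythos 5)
Your proposal is correct, and it follows the same underlying route as the paper's proof: both pass to the limit in the fixed-point relation $\Sigma(\pi^{n}_{\Loc},\pi^{n}_{\Glob})=\rmH_{(\pi^{n}_{\Loc},\pi^{n}_{\Glob}),(\pi^{*,n}_{\Loc},\pi^{*,n}_{\Glob})}\Sigma(\pi^{n}_{\Loc},\pi^{n}_{\Glob})$, controlling the limit exchange by the Lipschitz estimates of Lemma \ref{Lem:ahjahshsgsgsfffsgfgssssss}; the paper organizes this as a two-term triangle-inequality bound on $\norm{\rmH_{\pi,\pi^*}\rmV-\rmV}_{\infty}$ (the terms $A$ and $B$), while you phrase it as joint continuity of the coupling operator, which is the same computation. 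Where you genuinely add something is your step (i). The paper's proof opens by simply asserting that $\Sigma(\pi^{n}_{\Loc},\pi^{n}_{\Glob})$ converges to some $\rmV$, and closes by asserting without argument that this limit also satisfies $\rmV=\rmH^{*}_{\pi_{\Loc},\pi_{\Glob}}\rmV$, i.e.\ that $\rmV=\Sigma(\pi_{\Loc},\pi_{\Glob})$ --- a fact that is indispensable for concluding $(\pi^{*}_{\Loc},\pi^{*}_{\Glob})\in\phi(\pi_{\Loc},\pi_{\Glob})$, since \eqref{phidef} is formulated in terms of $\Sigma(\pi_{\Loc},\pi_{\Glob})$ and not in terms of an arbitrary limit point. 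Your uniform-contraction argument, giving $\norm{\rmV^{n}-\rmV}_{\infty}\leq\beta\norm{\rmV^{n}-\rmV}_{\infty}+o(1)$ and hence $\rmV^{n}\rightarrow\rmV=\Sigma(\pi_{\Loc},\pi_{\Glob})$, proves the convergence and the identification of the limit in one stroke, so your version closes a step the printed proof glosses over. You also correctly isolate the one auxiliary fact both arguments need but the paper never records: that the starred operator $\rmH^{*}_{\pi_{\Loc},\pi_{\Glob}}$ is itself a uniform $\beta$-contraction and Lipschitz in the strategies (Lemma \ref{Lem:ahjahshsgsgsfffsgfgssssss} covers only the unstarred $\rmH_{\pi_{\Loc},\pi_{\Glob}}$), the extension surviving the inner maximizations exactly by the inequality $\abs{\max_{a}f_{1}(a)-\max_{a}f_{2}(a)}\leq\max_{a}\abs{f_{1}(a)-f_{2}(a)}$ already used in Lemma \ref{Lem:ajajgsgsghsgffsfsfdfssss}. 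In short: same route, but your write-up is the more complete of the two.
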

Now, we are ready to show the remaining step for the proof of Theorem \ref{Thm:shhsgsgfsffddddddd}:
\begin{theorem}
\label{Thm:RBEExists}
There exists a triple $(\rmV^{*},\pi_{\Loc}^*,\pi_{\Glob}^*)$ satisfying the RBE as defined in \eqref{RBE}.
\end{theorem}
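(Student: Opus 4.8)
The plan is to apply Kakutani's fixed point theorem to the set-valued map $\phi:\Delta\to 2^{\Delta}$ defined in \eqref{phidef}, and then to check that any fixed point of $\phi$ furnishes a triple solving the RBE \eqref{RBE}. A preliminary observation is that $\phi$ is well defined: for every $(\pi_{\Loc},\pi_{\Glob})\in\Delta$ the operator $\rmH_{\pi_{\Loc},\pi_{\Glob}}^{*}$ acts on $\real^{\S}\times\real^{\S}$ componentwise through $\rmH_{*,\pi_{\Glob}}^{\Loc}$ and $\rmH_{\pi_{\Loc},*}^{\Glob}$, each of which is a $\beta$-contraction by the argument of Lemma \ref{Lem:ajajgsgsghsgffsfsfdfssss} (with $\beta=\max\{\beta_{\Loc},\beta_{\Glob}\}$). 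Hence $\rmH_{\pi_{\Loc},\pi_{\Glob}}^{*}$ is itself a contraction w.r.t. $\norm{\cdot}_{\infty}$, so $\Sigma(\pi_{\Loc},\pi_{\Glob})$ in \eqref{sigmadef} is a single point and the equality constraint in \eqref{phidef} is meaningful.

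Next I would verify the hypotheses of Kakutani's theorem. The domain $\Delta=\Delta_{\S}(\A_{\Loc})\times\Delta_{\S\times\A_{\Loc}}(\A_{\Glob})$ is a finite product of probability simplices, hence a nonempty, compact, convex subset of a Euclidean space. To see that $\phi(\pi_{\Loc},\pi_{\Glob})$ is nonempty, write $\rmV=\Sigma(\pi_{\Loc},\pi_{\Glob})$; since $\rmV_{\Loc}=\rmH_{*,\pi_{\Glob}}^{\Loc}\rmV_{\Loc}$ is a maximum over $\Delta_{\S}(\A_{\Loc})$ of a map that is linear in $\pi_{\Loc}(\cdot|s)$ for each $s$, compactness of the simplex guarantees a maximizer $\pi_{\Loc}^{*}$ with $\rmH_{\pi_{\Loc}^{*},\pi_{\Glob}}^{\Loc}\rmV_{\Loc}=\rmV_{\Loc}$, and analogously a maximizer $\pi_{\Glob}^{*}$ with $\rmH_{\pi_{\Loc},\pi_{\Glob}^{*}}^{\Glob}\rmV_{\Glob}=\rmV_{\Glob}$; the pair $(\pi_{\Loc}^{*},\pi_{\Glob}^{*})$ then lies in $\phi(\pi_{\Loc},\pi_{\Glob})$. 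Convexity of $\phi(\pi_{\Loc},\pi_{\Glob})$ follows from the same linearity: the two defining conditions decouple into one constraint on $\pi_{\Loc}^{*}$ and one on $\pi_{\Glob}^{*}$, each requiring the policy to be supported, state(-action)wise, on the corresponding $\argmax$ set, and such support constraints cut out a convex face of each simplex. Finally, upper semicontinuity of $\phi$ is exactly Lemma \ref{Lem:Uppersem}; together with the fact that $\phi$ takes values in closed subsets of the compact set $\Delta$, this yields the closed-graph property Kakutani requires.

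With these ingredients Kakutani's theorem produces a point $(\pi_{\Loc}^{*},\pi_{\Glob}^{*})\in\phi(\pi_{\Loc}^{*},\pi_{\Glob}^{*})$. Put $\rmV^{*}:=\Sigma(\pi_{\Loc}^{*},\pi_{\Glob}^{*})$. By definition of $\Sigma$ we immediately have $\rmV^{*}=\rmH_{\pi_{\Loc}^{*},\pi_{\Glob}^{*}}^{*}\rmV^{*}$, the second equation of \eqref{RBE}. The membership $(\pi_{\Loc}^{*},\pi_{\Glob}^{*})\in\phi(\pi_{\Loc}^{*},\pi_{\Glob}^{*})$ means $\rmV^{*}=\rmH_{(\pi_{\Loc}^{*},\pi_{\Glob}^{*}),(\pi_{\Loc}^{*},\pi_{\Glob}^{*})}\rmV^{*}$; but evaluating the coupling operator with both arguments equal to $(\pi_{\Loc}^{*},\pi_{\Glob}^{*})$ reduces its LA and GA blocks to $\rmH_{\pi_{\Loc}^{*},\pi_{\Glob}^{*}}^{\Loc}$ and $\rmH_{\pi_{\Loc}^{*},\pi_{\Glob}^{*}}^{\Glob}$, i.e. to the plain operator $\rmH_{\pi_{\Loc}^{*},\pi_{\Glob}^{*}}$ of \eqref{Eq:OpOp}. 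Hence $\rmV^{*}=\rmH_{\pi_{\Loc}^{*},\pi_{\Glob}^{*}}\rmV^{*}$, the first equation of \eqref{RBE}, and the triple $(\rmV^{*},\pi_{\Loc}^{*},\pi_{\Glob}^{*})$ solves the RBE.

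The genuinely delicate step is the upper semicontinuity of $\phi$, which I have deferred to Lemma \ref{Lem:Uppersem}: it hinges on the joint continuity of $(\pi_{\Loc},\pi_{\Glob})\mapsto\Sigma(\pi_{\Loc},\pi_{\Glob})$ — provable from the Lipschitz dependence of the contraction $\rmH_{\pi_{\Loc},\pi_{\Glob}}^{*}$ on its parameters (Lemma \ref{Lem:ahjahshsgsgsfffsgfgssssss}) together with the uniform contraction rate $\beta<1$ — and on the closedness of the $\argmax$ correspondences encoding the support conditions. Everything else (nonemptiness, convexity, the reduction of the coupling operator on the diagonal) is routine; the only point I would double-check is that the LA and GA constraints in \eqref{phidef} genuinely decouple, i.e. that the LA block of $\rmH_{(\pi_{\Loc},\pi_{\Glob}),(\pi_{\Loc}^{*},\pi_{\Glob}^{*})}$ depends only on $\pi_{\Loc}^{*}$ and the GA block only on $\pi_{\Glob}^{*}$, which is immediate from the displayed definition of the coupling operator.
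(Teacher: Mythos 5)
Your proposal is correct and follows essentially the same route as the paper: you apply Kakutani's fixed point theorem to the same correspondence $\phi$ of \eqref{phidef}, using the upper semicontinuity from Lemma \ref{Lem:Uppersem} and the compactness and convexity of $\Delta$, and you conclude via the same observation that the coupling operator $\rmH_{(\pi^{*}_{\Loc},\pi^{*}_{\Glob}),(\pi^{*}_{\Loc},\pi^{*}_{\Glob})}$ reduces on the diagonal to $\rmH_{\pi^{*}_{\Loc},\pi^{*}_{\Glob}}$ of \eqref{Eq:OpOp}, so that both equations of \eqref{RBE} hold. If anything, you are more thorough than the paper, which invokes Kakutani citing only upper semicontinuity, whereas you explicitly verify the remaining hypotheses — that $\Sigma$ is single-valued (contraction of $\rmH^{*}_{\pi_{\Loc},\pi_{\Glob}}$), that $\phi$ has nonempty values (existence of state(-action)wise maximizers), and that its values are convex (the support conditions cut out faces of the simplices) — all of which Kakutani's theorem genuinely requires.
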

\begin{proof}
We have established in Lemma \ref{Lem:Uppersem} the fact that the correspondence $\phi$ is an upper semi-continuous point-set mapping, which maps from a compact convex set $\Delta_{\mathcal{S}}\times\Delta_{\mathcal{S}\times\mathcal{\mathcal{A}_{\Loc}}}$ to the power set $2^{\Delta_{\mathcal{S}}\times\Delta_{\mathcal{S}\times\mathcal{\mathcal{A}_{\Loc}}}}$. By Kakutani's fixed point theorem, it follows that there exists a $(\pi_{\Loc}^*,\pi_{\Glob}^*) \in \Delta_{\mathcal{S}}\times\Delta_{\mathcal{S}\times\mathcal{\mathcal{A}_{\Loc}}}$ with $\rmV^* \in \Sigma(\pi_{\Loc}^*,\pi_{\Glob}^*) $, such that $(\pi_{\Loc}^*,\pi_{\Glob}^*) \in \phi(\pi_{\Loc}^*,\pi_{\Glob}^*)$ In particular, there exists a tuple $(\rmV^*,\pi_{\Loc}^*,\pi_{\Glob}^*)$, such that $\rmV^*= \rmH_{\pi^*,\pi^*} \rmV^*$
where $\pi^{*}=(\pi_{\Loc}^{*},\pi_{\Glob}^{*})$. Now,
we observe that the coupling operator $\rmH_{\pi^*,\pi^*}$ coincides with the operator $ \rmH_{\pi_{\Loc}^{*},\pi_{\Glob}^{*}}$ defined as in \eqref{Eq:OpOp}. Moreover, since $\rmV^* \in \Sigma(\pi_{\Loc}^*,\pi_{\Glob}^*) $ and in conjunction with the definition \eqref{sigmadef}, we have as desired:
\begin{equation*}
\rmV^* = \rmH_{\pi_{\Loc}^*,\pi_{\Glob}^*}\rmV^*\quad\text{and}\quad\rmV^* = \rmH_{\pi_{\Loc}^{*},\pi_{\Glob}^{*}}^* \rmV^*,
\end{equation*}
\end{proof}

At last, we summarize in the following the proof of Theorem \ref{Thm:shhsgsgfsffddddddd}:
\begin{proof}[Proof of Theorem \ref{Thm:shhsgsgfsffddddddd}]
Theorem \ref{Thm:RBEExists} asserts the existence of a triple $(\rmV^*,\pi_{\Loc}^*,\pi_{\Glob}^*)$ satisfying RBE of the local-global stochastic game. Finally, Theorem \ref{Thm:RBENash}, asserts that $(\pi_{\Loc}^*,\pi_{\Glob}^*)$ is a Nash equilibrium of the local-global stochastic game. 
\end{proof}
\section{Numerical Simulations}
\label{Sec:Num}

\begin{figure}[htbp]
	\begin{minipage}[b]{0.5\textwidth}
		\centering
		\scalebox{.6}{
%
%
\definecolor{mycolor1}{rgb}{0.00000,0.44700,0.74100}%
\definecolor{mycolor2}{rgb}{0.85000,0.32500,0.09800}%
\definecolor{mycolor3}{rgb}{0.92900,0.69400,0.12500}%
\definecolor{mycolor4}{rgb}{0.49400,0.18400,0.55600}%
\begin{tikzpicture}

\begin{axis}[%
width=4.521in,
height=4in,
at={(0.5in,0in)},
scale only axis,
xmin=0,
xmax=50,
xlabel style={font=\color{white!15!black}},
xlabel={Time/Round},
ymin=0.5,
ymax=4,
ylabel style={font=\color{white!15!black}},
ylabel={\textbf{Value}},
axis background/.style={fill=white},
xmajorgrids,
ymajorgrids,
legend style={at={(0.97,0.03)}, anchor=south east, legend cell align=left, align=left, draw=white!15!black}
]
\addplot [color=mycolor1, line width=1.5pt]
  table[row sep=crcr]{%
0	0.777734644168963\\
1	1.40873884120953\\
2	1.89784812133326\\
3	2.28541535860117\\
4	2.59651817419071\\
5	2.84277023837577\\
6	3.0405557502618\\
7	3.20147348121109\\
8	3.32956988110993\\
9	3.43124382937642\\
10	3.51281803076393\\
11	3.57775464194448\\
12	3.62953590822565\\
13	3.67124696739961\\
14	3.70480836177951\\
15	3.73151928504455\\
16	3.75274675248544\\
17	3.76977517748263\\
18	3.78339189514728\\
19	3.79428463242944\\
20	3.8029764510938\\
21	3.81002906395284\\
22	3.81562280025984\\
23	3.82008667777972\\
24	3.82369209246282\\
25	3.82656219128833\\
26	3.82885646676115\\
27	3.83068603328007\\
28	3.83216170316826\\
29	3.83333435151022\\
30	3.83426365656322\\
31	3.83501491121403\\
32	3.83561770323221\\
33	3.8360958328756\\
34	3.83648086936035\\
35	3.8367894656205\\
36	3.83703552946098\\
37	3.83723280178615\\
38	3.8373907980967\\
39	3.83751708696817\\
40	3.8376175528979\\
41	3.83769778315716\\
42	3.83776242449184\\
43	3.83781387395302\\
44	3.83785518073019\\
45	3.83788808046815\\
46	3.83791446571999\\
47	3.83793559150637\\
48	3.83795246551923\\
49	3.83796603122046\\
};
\addlegendentry{\LARGE Greedy (LA) and Greedy (GA)}

\addplot [color=mycolor2, dotted, line width=1.5pt]
  table[row sep=crcr]{%
0	0.740443074722927\\
1	1.33101117606344\\
2	1.7917970713819\\
3	2.15691905604883\\
4	2.44939679127435\\
5	2.68342780157329\\
6	2.87258788474357\\
7	3.02464628062819\\
8	3.14500724023186\\
9	3.24193551040314\\
10	3.31922958708393\\
11	3.38118963050209\\
12	3.43014459826721\\
13	3.46963318994788\\
14	3.50129336468901\\
15	3.52673265496079\\
16	3.54692885626277\\
17	3.56297054298881\\
18	3.57583267389127\\
19	3.58618953045674\\
20	3.59451315715714\\
21	3.60113635018412\\
22	3.60639412908431\\
23	3.61063009511537\\
24	3.61401343069723\\
25	3.61670936586427\\
26	3.61886772848186\\
27	3.6206166728446\\
28	3.62199453171728\\
29	3.62309311529119\\
30	3.62397333712297\\
31	3.62467513987781\\
32	3.62524103291341\\
33	3.62569650582045\\
34	3.62606150697834\\
35	3.62635410028833\\
36	3.62658704309959\\
37	3.62677206533984\\
38	3.62691988148818\\
39	3.62703978964915\\
40	3.62713509027813\\
41	3.6272110987687\\
42	3.6272728591388\\
43	3.62732178624506\\
44	3.62736084281378\\
45	3.62739229047914\\
46	3.6274171996439\\
47	3.62743719382572\\
48	3.62745318143504\\
49	3.62746596299538\\
};
\addlegendentry{\LARGE Boltzman (LA) and Greedy (GA)}

\addplot [color=mycolor3, dashed, line width=1.5pt]
  table[row sep=crcr]{%
0	0.634129610364244\\
1	1.13027727350143\\
2	1.52356545095917\\
3	1.83694745407074\\
4	2.08641742699324\\
5	2.28684645435438\\
6	2.44364392662313\\
7	2.57132319990197\\
8	2.67275504317437\\
9	2.75430945385379\\
10	2.8193725994559\\
11	2.8715089391413\\
12	2.91372797873319\\
13	2.94695130781614\\
14	2.97364057884114\\
15	2.99538978803767\\
16	3.01272346644365\\
17	3.02648280304854\\
18	3.03717297590764\\
19	3.04600964932118\\
20	3.05307796741321\\
21	3.05880880065542\\
22	3.06332507044554\\
23	3.06694883023963\\
24	3.0698456832625\\
25	3.07215944428554\\
26	3.07404300589803\\
27	3.07551731609392\\
28	3.07671779524157\\
29	3.07765814918063\\
30	3.0784109932844\\
31	3.07900825304339\\
32	3.07949415862622\\
33	3.0798873819947\\
34	3.08019696987074\\
35	3.0804434638177\\
36	3.08064513080786\\
37	3.0808058051674\\
38	3.080931738867\\
39	3.08103249111661\\
40	3.08111143195317\\
41	3.08117593988769\\
42	3.08122893486673\\
43	3.08127132916005\\
44	3.08130483406475\\
45	3.08133168800885\\
46	3.08135287576118\\
47	3.08136991900261\\
48	3.08138365267772\\
49	3.08139455078073\\
};
\addlegendentry{\LARGE Greedy (LA) and Boltzman (GA)}

\addplot [color=mycolor4, dashdotted, line width=1.5pt]
  table[row sep=crcr]{%
0	0.544386381914012\\
1	0.993709768113072\\
2	1.33912470841565\\
3	1.61658092803017\\
4	1.83935502339686\\
5	2.01684684206761\\
6	2.15781424137111\\
7	2.27095914898843\\
8	2.35820906439647\\
9	2.42957053421043\\
10	2.48873809204665\\
11	2.53557925313193\\
12	2.57272196714973\\
13	2.60261148964912\\
14	2.62639767321241\\
15	2.6456133262544\\
16	2.66056895216612\\
17	2.67258429045744\\
18	2.6823051535591\\
19	2.69008821573655\\
20	2.69639053079258\\
21	2.70139495534065\\
22	2.70541373061669\\
23	2.70865140397956\\
24	2.71118518779038\\
25	2.71323087848653\\
26	2.71486817441161\\
27	2.71619332650255\\
28	2.71723438823561\\
29	2.71806284613564\\
30	2.71873918475662\\
31	2.71928798896305\\
32	2.71971213587943\\
33	2.72005019475933\\
34	2.72032410020511\\
35	2.72054544704902\\
36	2.7207202673538\\
37	2.72086006767757\\
38	2.72097242159863\\
39	2.72106229064028\\
40	2.72113449844197\\
41	2.72119262759156\\
42	2.72123927302208\\
43	2.72127627862975\\
44	2.7213055235743\\
45	2.72132921565375\\
46	2.72134804306538\\
47	2.72136326182115\\
48	2.72137533902328\\
49	2.7213849977476\\
};
\addlegendentry{\LARGE Boltzman (LA) and Boltzman (GA)}

\end{axis}

\begin{axis}[%
width=4in,
height=4in,
at={(0in,0in)},
scale only axis,
xmin=0,
xmax=1,
ymin=0,
ymax=1,
axis line style={draw=none},
ticks=none,
axis x line*=bottom,
axis y line*=left,
legend style={legend cell align=left, align=left, draw=white!15!black}
]
\end{axis}
\end{tikzpicture}
		\centerline{(a) $\tau = 1.3$}
	\end{minipage}
	\begin{minipage}[b]{0.5\textwidth}
		\centering
		\scalebox{.6}{
%
%
\definecolor{mycolor1}{rgb}{0.00000,0.44700,0.74100}%
\definecolor{mycolor2}{rgb}{0.85000,0.32500,0.09800}%
\definecolor{mycolor3}{rgb}{0.92900,0.69400,0.12500}%
\definecolor{mycolor4}{rgb}{0.49400,0.18400,0.55600}%
\begin{tikzpicture}

\begin{axis}[%
width=4.521in,
height=4in,
at={(0.5in,0in)},
scale only axis,
xmin=0,
xmax=50,
xlabel style={font=\color{white!15!black}},
xlabel={Time/Round},
ymin=0.5,
ymax=4,
axis background/.style={fill=white},
xmajorgrids,
ymajorgrids,
legend style={at={(0.97,0.03)}, anchor=south east, legend cell align=left, align=left, draw=white!15!black}
]
\addplot [color=mycolor1, line width=1.5pt]
  table[row sep=crcr]{%
0	0.725500377838466\\
1	1.31237997960411\\
2	1.77378259645308\\
3	2.14171206145536\\
4	2.43652073052064\\
5	2.6719616511734\\
6	2.86080998679473\\
7	3.01158577020201\\
8	3.13245502945674\\
9	3.22940697165294\\
10	3.30615874511862\\
11	3.36775588397569\\
12	3.41705404102403\\
13	3.45635829738038\\
14	3.48792095986857\\
15	3.51340821409494\\
16	3.53377877796383\\
17	3.54995005626664\\
18	3.56286604685799\\
19	3.57322288388398\\
20	3.5814378601706\\
21	3.58805628343923\\
22	3.59339819408025\\
23	3.59764574544854\\
24	3.60104363532128\\
25	3.60376370446955\\
26	3.6059197722837\\
27	3.60763572545057\\
28	3.60902216872343\\
29	3.61013361657204\\
30	3.61102363506231\\
31	3.61173379800864\\
32	3.61230734548773\\
33	3.6127667838332\\
34	3.61312966351204\\
35	3.61341905672359\\
36	3.61365210756361\\
37	3.61383864063219\\
38	3.61398842283458\\
39	3.6141076910515\\
40	3.61420301751708\\
41	3.61427994789566\\
42	3.61434128186641\\
43	3.61439024177861\\
44	3.61442939545705\\
45	3.6144607723684\\
46	3.61448571222208\\
47	3.61450572924696\\
48	3.61452179645158\\
49	3.61453454862416\\
};

\addplot [color=mycolor2, dotted, line width=1.5pt]
  table[row sep=crcr]{%
0	0.725001389447786\\
1	1.30575426470531\\
2	1.76639641774001\\
3	2.13770899250484\\
4	2.43299355088032\\
5	2.66885406737265\\
6	2.8576012162313\\
7	3.00800933848142\\
8	3.12864734092497\\
9	3.22485900305454\\
10	3.30188211784533\\
11	3.36347812505332\\
12	3.41294903523047\\
13	3.45239573925421\\
14	3.48409093261487\\
15	3.50917006073536\\
16	3.52933046492038\\
17	3.54560791899496\\
18	3.55855742469589\\
19	3.56890047524927\\
20	3.57717808541459\\
21	3.58381142799649\\
22	3.58911501762363\\
23	3.5933512087061\\
24	3.59675667505416\\
25	3.59948183066537\\
26	3.6016492239169\\
27	3.60338921672848\\
28	3.60477859945526\\
29	3.60588412918918\\
30	3.60677307068963\\
31	3.6074819207248\\
32	3.60805321878686\\
33	3.60850784241316\\
34	3.60887231187441\\
35	3.60916329142872\\
36	3.60939601962368\\
37	3.60958129923468\\
38	3.60973063911201\\
39	3.60985015137374\\
40	3.60994583978116\\
41	3.61002205750432\\
42	3.61008290673191\\
43	3.6101315247265\\
44	3.6101706722531\\
45	3.61020213175713\\
46	3.61022720927113\\
47	3.61024734939048\\
48	3.61026336061712\\
49	3.61027616966534\\
};

\addplot [color=mycolor3, dashed, line width=1.5pt]
  table[row sep=crcr]{%
0	0.724632316664036\\
1	1.30697284246556\\
2	1.77054117862148\\
3	2.13790758778067\\
4	2.43165572719923\\
5	2.66406636453265\\
6	2.85202468871842\\
7	3.00262197574038\\
8	3.12325839706965\\
9	3.22015574358969\\
10	3.29751329470929\\
11	3.35926571946604\\
12	3.40842334527326\\
13	3.44788982118834\\
14	3.47972510432446\\
15	3.50499749731557\\
16	3.52507246045875\\
17	3.54141474861738\\
18	3.55439169766938\\
19	3.56468994360751\\
20	3.57297982023517\\
21	3.57961321515252\\
22	3.58489410110085\\
23	3.58912274995883\\
24	3.59252755829337\\
25	3.59523116554743\\
26	3.59739932142353\\
27	3.59912611059994\\
28	3.60051974393431\\
29	3.60163042543855\\
30	3.60252242767424\\
31	3.60323806777844\\
32	3.60380993386337\\
33	3.60426722834937\\
34	3.60463385760096\\
35	3.60492649263461\\
36	3.60515919027837\\
37	3.60534419025299\\
38	3.60549327434566\\
39	3.60561381346804\\
40	3.60570993038691\\
41	3.60578616594027\\
42	3.60584714812762\\
43	3.60589599703075\\
44	3.60593510385578\\
45	3.60596637028753\\
46	3.60599148749828\\
47	3.60601155491631\\
48	3.60602758865167\\
49	3.60604031505337\\
};

\addplot [color=mycolor4, dashdotted, line width=1.5pt]
  table[row sep=crcr]{%
0	0.728353241295573\\
1	1.31096359862208\\
2	1.76854345832814\\
3	2.13663482857412\\
4	2.43067991629376\\
5	2.66485158860468\\
6	2.85236961793345\\
7	3.00229042952994\\
8	3.12276667803322\\
9	3.21895660520066\\
10	3.29607252470211\\
11	3.35801597765369\\
12	3.40776056324083\\
13	3.44723532332872\\
14	3.47889173642232\\
15	3.50416757372163\\
16	3.52432903897449\\
17	3.54041275520972\\
18	3.55328789640484\\
19	3.56361653686545\\
20	3.57189560162172\\
21	3.57854731994791\\
22	3.58385675915827\\
23	3.58809745223881\\
24	3.59147334199926\\
25	3.59419997730201\\
26	3.59635763284876\\
27	3.59810175522619\\
28	3.59948528099621\\
29	3.60059654077447\\
30	3.60149016081599\\
31	3.60220283636178\\
32	3.60277036696647\\
33	3.60322583920601\\
34	3.60359177325842\\
35	3.60388340535035\\
36	3.60411642869748\\
37	3.60430284982184\\
38	3.60445180681125\\
39	3.60457097115665\\
40	3.60466611740552\\
41	3.60474271788116\\
42	3.604803741078\\
43	3.60485272854255\\
44	3.60489177918432\\
45	3.60492307719303\\
46	3.60494817025803\\
47	3.60496812990943\\
48	3.60498412271203\\
49	3.60499699270189\\
};

\end{axis}

\begin{axis}[%
width=5.833in,
height=4in,
at={(0in,0in)},
scale only axis,
xmin=0,
xmax=1,
ymin=0,
ymax=1,
axis line style={draw=none},
ticks=none,
axis x line*=bottom,
axis y line*=left,
legend style={legend cell align=left, align=left, draw=white!15!black}
]
\end{axis}
\end{tikzpicture}
		\centerline{(b) $\tau = 0.1$}
	\end{minipage}
	\caption{Cum. disc. reward for different policies and temperatures.}
	\label{Fig:Temp}
\end{figure}
For our numerical analysis we first consider a practical example, where the aim is to maximize the capacity of a network, while exhausting the previously set power constraints at each state. We consider the state space $\mathcal{S} = \{1,2,3\}$ and set $\A_{\Loc} = \{1,2,3\}$, $\mathcal{A}_{\Glob} = \{1,2,3,4\}$, which can be thought of as allocated signal power levels.    
We set the reward functions of both agents equal to $\rmr(s,a_{\Loc},a_{\Glob}) = \log_2(\text{det}(\mathbf{I}_s - \rho(s,a_{\Loc},a_{\Glob})\mathrm{h}(s)\mathrm{h}(s)^T)) - c\abs{\mathrm{p}(s) - (a_{\Loc} + a_{\Glob})}$, composed by the capacity term (where $\mathrm{h}(s)$ denotes the randomly generated state-dependent gain) and by the scaled (with factor $c>0$) penalization of over/under-use of the power respective to the given state-dependent power constraint $\mathrm{p}(s) = [2,5,3]$. To construct the state-transition model, we calculate the Signal to Noise Ratio (SNR) at each state by $\rho(s,a_{\Loc},a_{\Loc}) = f(s,a_{\Loc},a_{\Glob})/N(s)$,
where $f$ specifies the signal power and $N(s) = [2.2,9,4.5]$ the state-dependent noise power.
We then model the transition probabilities by $P(s'|s,a_1,a_2)=\textnormal{erfc}(\sqrt{\rho(s,a_1,a_2)/2})$ if $s = s'$, and $P(s'|s,a_1,a_2) = 1-\textnormal{erfc}(\sqrt{\rho(s,a_1,a_2)/2})/2$ otherwise. In the previous equations, $\textnormal{erfc}$ denotes the Gauss complementary error function. In the Q-learning phase, we choose the Boltzmann strategy as the training policy.

\begin{figure}
\centering
\begin{tabular}{||c | c | c | c | c | c ||} 
 \hline
 $\tau$ & $\beta$ & $c$ & $T_{\text{training}}$ & $T_{\text{testing}}$ & Samples\\ [0.5ex] 
 \hline\hline
 1.30 & 0.80 & 0.25 & 40000 & 5000 & 1000\\ 
 \hline
\end{tabular}
\caption{Simulation Parameters}
\end{figure}

 \begin{figure}[htbp]
	\centering
	\scalebox{.7}{
%
%
\definecolor{mycolor1}{rgb}{0.00000,0.44700,0.74100}%
\definecolor{mycolor2}{rgb}{0.85000,0.32500,0.09800}%
\begin{tikzpicture}

\begin{axis}[%
width=5in,
height=2in,
at={(0in,0in)},
scale only axis,
xmin=0,
xmax=50,
xlabel style={font=\color{white!15!black}},
xlabel={Time/Round},
ymin=2,
ymax=4,
ylabel style={font=\color{white!15!black}},
ylabel={Value},
axis background/.style={fill=white},
xmajorgrids,
ymajorgrids,
legend style={at={(0.97,0.03)}, anchor=south east, legend cell align=left, align=left, draw=white!15!black}
]
\addplot [color=mycolor1, dashdotted, line width=1.5pt]
  table[row sep=crcr]{%
0	0.675413874117271\\
1	1.20932048934214\\
2	1.63542121506164\\
3	1.97511642476719\\
4	2.24614641120982\\
5	2.46252608361275\\
6	2.63474057388346\\
7	2.77398714301778\\
8	2.88520914467661\\
9	2.97395505903193\\
10	3.04542403742941\\
11	3.10281422895587\\
12	3.14824732318673\\
13	3.18443630876749\\
14	3.21356126793966\\
15	3.23680946474379\\
16	3.2554094754387\\
17	3.27037102870509\\
18	3.28234724035648\\
19	3.29191004068789\\
20	3.29956585991034\\
21	3.30566780225074\\
22	3.31051918753554\\
23	3.31443201812134\\
24	3.31756772282018\\
25	3.32007198537083\\
26	3.32207298250962\\
27	3.32366685374583\\
28	3.32495286094632\\
29	3.32597889476478\\
30	3.32679687407297\\
31	3.3274507831719\\
32	3.32797835398798\\
33	3.32840021329682\\
34	3.3287360912034\\
35	3.32900586792402\\
36	3.32922115302876\\
37	3.3293933815845\\
38	3.3295312812586\\
39	3.32964125132828\\
40	3.32972936313044\\
41	3.32979974469245\\
42	3.32985583502121\\
43	3.32990098437458\\
44	3.329936840625\\
45	3.3299656478361\\
46	3.32998862500239\\
47	3.33000706675276\\
48	3.33002185360889\\
49	3.33003362075558\\
};
\addlegendentry{Non-Cooperative (NC)}

\addplot [color=mycolor2, dashed, line width=1.5pt]
  table[row sep=crcr]{%
0	0.794742180424895\\
1	1.42551050406822\\
2	1.91387391213191\\
3	2.30385746953391\\
4	2.61723062280284\\
5	2.86676436761106\\
6	3.06884039094711\\
7	3.2298889983458\\
8	3.3574678320327\\
9	3.46060900001259\\
10	3.54188418176333\\
11	3.60679567588168\\
12	3.65926456594476\\
13	3.70148544747098\\
14	3.73520658778228\\
15	3.76236309591588\\
16	3.78394789989256\\
17	3.80121960599088\\
18	3.81523004230255\\
19	3.82638726963571\\
20	3.83517751738077\\
21	3.8423058724079\\
22	3.84802513095864\\
23	3.85257786440569\\
24	3.85619105088399\\
25	3.85912665997747\\
26	3.86144190294934\\
27	3.86328284865394\\
28	3.86476292517413\\
29	3.8659567410332\\
30	3.86691705085837\\
31	3.86768240181866\\
32	3.86829645644952\\
33	3.86878384838744\\
34	3.86917079700069\\
35	3.86948493880746\\
36	3.86973078035059\\
37	3.8699277076091\\
38	3.87008589226414\\
39	3.87021239240139\\
40	3.87031395020771\\
41	3.87039612945345\\
42	3.87046149216564\\
43	3.87051426220693\\
44	3.87055647515327\\
45	3.87059003591169\\
46	3.87061701851194\\
47	3.87063811709328\\
48	3.87065510453717\\
49	3.87066886419879\\
};
\addlegendentry{Jointly Cooperative (JC)}

\addplot [color=green, line width=1.5pt]
  table[row sep=crcr]{%
0	0.777734644168963\\
1	1.40873884120953\\
2	1.89784812133326\\
3	2.28541535860117\\
4	2.59651817419071\\
5	2.84277023837577\\
6	3.0405557502618\\
7	3.20147348121109\\
8	3.32956988110993\\
9	3.43124382937642\\
10	3.51281803076393\\
11	3.57775464194448\\
12	3.62953590822565\\
13	3.67124696739961\\
14	3.70480836177951\\
15	3.73151928504455\\
16	3.75274675248544\\
17	3.76977517748263\\
18	3.78339189514728\\
19	3.79428463242944\\
20	3.8029764510938\\
21	3.81002906395284\\
22	3.81562280025984\\
23	3.82008667777972\\
24	3.82369209246282\\
25	3.82656219128833\\
26	3.82885646676115\\
27	3.83068603328007\\
28	3.83216170316826\\
29	3.83333435151022\\
30	3.83426365656322\\
31	3.83501491121403\\
32	3.83561770323221\\
33	3.8360958328756\\
34	3.83648086936035\\
35	3.8367894656205\\
36	3.83703552946098\\
37	3.83723280178615\\
38	3.8373907980967\\
39	3.83751708696817\\
40	3.8376175528979\\
41	3.83769778315716\\
42	3.83776242449184\\
43	3.83781387395302\\
44	3.83785518073019\\
45	3.83788808046815\\
46	3.83791446571999\\
47	3.83793559150637\\
48	3.83795246551923\\
49	3.83796603122046\\
};
\addlegendentry{Asymmetrical (AS)}

\end{axis}

\begin{axis}[%
width=4.44in,
height=2in,
at={(0in,0in)},
scale only axis,
xmin=0,
xmax=1,
ymin=0,
ymax=1,
axis line style={draw=none},
ticks=none,
axis x line*=bottom,
axis y line*=left,
legend style={legend cell align=left, align=left, draw=white!15!black}
]
\end{axis}
\end{tikzpicture}
	\caption{Joint. coop. vs. asymm. (this work) vs. fully non-cooperative}
	\label{Strategy_Comp}
\end{figure} 

\begin{figure}[htbp]
\centering
 \begin{tabular}{||c | c || c | c || c ||} 
 \hline
$s$ & $\mathrm{p}(s)$ & NC & AS & JC \\ [0.5ex] 
 \hline\hline
 1 & 2 & (\textbf{3},\textbf{1}) & (\textbf{2},\textbf{1}) & (3,1)\\ 
 \hline
 2 & 5 & (3,\textbf{4})  & (3,\textbf{2}) & (3,2)\\
 \hline
 3 & 3 & (2,\textbf{2})  & (2,\textbf{1}) & (2,1)\\
 \hline
\end{tabular}
\caption{Greedy Strategy Profile}
\label{Fig:TabProf}
\end{figure}

For Boltzmann temperature $\tau=1.3$, figure \ref{Fig:Temp} (a) compares the cumulative discounted reward over time for both (local and global) agents different strategy choices, i.e., the post-learning greedy strategies ($\pi_{\Loc}^{\LAQGI}$ and $\pi_{\Glob}^{\GAQL}$) and the long-term Boltzmann learning strategy ($\eta_{\infty}^{\Loc}$ and $\eta_{\infty}^{\Glob}$). We observe, that if GA applies the Boltzmann strategy, it is better for LA to apply the greedy strategy, and that if LA applies the greedy strategy, it is also better for LA to apply the greedy strategy. This observation supports in particular the claims in Lemmas \ref{Lem:ajajsshssgsgssfsfss} and \ref{Lem:ahhasgsgsfsfsfsffssss}. Moreover, we see that best overall performance yields if both agents acts greedily. This observation is not surprising, since it follows from the fact that the agents' rewards (and therefore the value function) are the same and from our analysis (Lemmas \ref{Lem:ajajsshssgsgssfsfss} and \ref{Lem:ahhasgsgsfsfsfsffssss}).  


In Figure \ref{Fig:Temp} (b), we compare the same policy tuples, however with smaller $\tau=1.3$.
We observe that the cumulative discounted rewards are the approximately the same for any strategy choice, which is the effect of the fact that the Boltzmann strategy morphs into a greedy like strategy (c.f. the discussion above the Theorem \ref{Thm:ajajhshssggsgssgsgffsfssssss}). With increasing $\tau$, we observe in our simulation that the discrepancy between the strategy tuples' performances becomes larger. These observations gives in particular insight into the Theorem \ref{Thm:ajajhshssggsgssgsgffsfssssss}. Moreover, we observe that to small $\tau$ results in a lack of state-action exploration, giving a sub-optimal solution. One can see the latter effect in Figure \ref{Fig:Temp}, which shows that the best possible value in case $\tau=0.1$ is dominated by the best possible value in case $\tau=1.3$.

In Figure \ref{Strategy_Comp} we compare the performance of our asymmetrical Q-learning (Asymmetrical (AS)) with the jointly cooperative Q-learning (Jointly Cooperative (JC)), i.e., the single-agent Q-learning in the MDP $(\S,\A_{\Loc}\times\A_{\Glob},\rmr,\rmP)$), and (fully) non-cooperative Q-learning (Non-Cooperative (NC)), i.e. the Q-learning where GA has no knowledge about LA action. In particular, we compare the corresponding post-learning greedy policies. We observe, that the JC has the best performance, which is to be expected due to the knowledge of the agents. However, it is remarkable to see that AS greatly outperforms the non-cooperative case, and its performance is only marginally worse than the jointly cooperative one. This leads to the belief, that even under asymmetry of information, the agents are able to approach an almost fully cooperative amount of reward, as well as outperform NC case. We can further investigate the reasoning behind this result by analyzing the different strategy profiles of the agents for each case. Figure \ref{Fig:TabProf} shows the greedy strategy profile of the LA and GA as tuples $(a_{\Loc},a_{\Glob})$. As highlighted in the table, of particular interest is the change of behaviour from the GA, when given additional knowledge of the LAs action (second entry of the tuple). In the non-cooperative case, both agents act in a selfish and greedy manner, therefore violating the power constraints and decreasing prosperity as measured by the value function. Given additional knowledge of the LAs action, i.e. for AS, we observe a more conservative and sophisticated usage of power levels from the GA, resulting in a considerable increase in prosperity. Moreover, we see that the GA for AS additionally influences the LA to be more conservative with his power usage as seen for $s = 1$, where the LA now chooses $a_{\Loc} = 2$ instead of $a_{\Loc} = 3$. Here we observe the limits of AS, as in this particular case, when we compare the AS to the optimal JC case, it is indeed more advantageous to violate the power constraints to maximize reward, since capacity maximization appears to yield a greater reward overall. Therefore in our example the AS seems to incentivize a more conservative approach, which, while being an improvement to the uncoordinated selfish approach (NC), might yield a suboptimal solution overall.

In our second numerical example we compare the performance of the Extra Information GA Q-Learning (EIGAQL) algorithm with the standard GAQL algorithm. As before, the performance is measured using the value function. To better illustrate the difference in performance between both algorithms, we consider a slightly bigger action and state space, i.e. $\mathcal{S} = \{1,2,...,7\}$, $\mathcal{A}_{\Loc} = \{1,2,...,4\}$, $\mathcal{A}_{\Glob} = \{1,2,...,5\}$. Since we have stated, that EIGAQL applies to arbitrary stationary LA strategies, the LA strategy is randomly generated beforehand by generating a normalized random matrix $M \in \mathbb{R}^{\mathcal{S} \times \mathcal{A}_{\Loc}}$ of uniformly distributed entries between 0 and 1 whose row-entries sum up to 1. Furthermore, the state and action dependent reward is also randomly generated and sampled from the standard normal distribution $\mathcal{N}(0,1)$. The simulation parameters are shown in Figure \ref{Fig:SimPar}. We note, that the increases in $\tau$ and $T_{\text{training}}$ are due to the higher dimensionality of our problem, which consequently demands a higher exploration factor to ensure that all states have been visited enough times.

\begin{figure}[h!]
\label{Fig:SimPar}
\centering
\begin{tabular}{||c | c | c | c | c ||} 
 \hline
 $\tau$ & $\beta$  & $T_{\text{training}}$ & $T_{\text{testing}}$ & Samples\\ [0.5ex] 
 \hline\hline
 1.60 & 0.80  & 300000 & 5000 & 100\\ 
 \hline
\end{tabular}
\end{figure}
\begin{figure}[htbp]
	\centering
	\scalebox{.7}{
%
%
\definecolor{mycolor1}{rgb}{0.00000,0.44700,0.74100}%
\definecolor{mycolor2}{rgb}{0.85000,0.32500,0.09800}%
\begin{tikzpicture}

\begin{axis}[%
width=4.602in,
height=3.566in,
at={(0.772in,0.481in)},
scale only axis,
xmin=0,
xmax=50,
xlabel style={font=\color{white!15!black}},
xlabel={Time/Round},
ymin=2,
ymax=4.5,
ylabel style={font=\color{white!15!black}},
ylabel={Value},
axis background/.style={fill=white},
xmajorgrids,
ymajorgrids,
legend style={at={(0.97,0.03)}, anchor=south east, legend cell align=left, align=left, draw=white!15!black}
]
\addplot [color=mycolor1, line width=1.5pt]
  table[row sep=crcr]{%
0	0.824791284720935\\
1	1.5228393116057\\
2	2.11412194137072\\
3	2.57623962772494\\
4	2.95156175052115\\
5	3.29460738020194\\
6	3.5193040113649\\
7	3.71837194694747\\
8	3.84876674252175\\
9	3.96712232254347\\
10	4.0584692797689\\
11	4.13153404911738\\
12	4.19488236959463\\
13	4.23891472257904\\
14	4.2755824754179\\
15	4.30679581453769\\
16	4.3332040303942\\
17	4.35475595514474\\
18	4.3697851524144\\
19	4.38265653363152\\
20	4.39258236030443\\
21	4.40077988898044\\
22	4.40724990930538\\
23	4.41247203660703\\
24	4.41675808243143\\
25	4.42014261234568\\
26	4.42275164255861\\
27	4.4246659475166\\
28	4.42635563352845\\
29	4.42764528015847\\
30	4.42877301231891\\
31	4.4296928536102\\
32	4.4304251550785\\
33	4.4310243858108\\
34	4.43146294999822\\
35	4.43182639970501\\
36	4.43209568748437\\
37	4.43233390207205\\
38	4.43251984943328\\
39	4.43268525393229\\
40	4.43280557689579\\
41	4.43289649539361\\
42	4.43297882364545\\
43	4.43304183394045\\
44	4.4330820410472\\
45	4.43311545156326\\
46	4.43314799637929\\
47	4.43317177643923\\
48	4.43319219496211\\
49	4.43320799435633\\
};
\addlegendentry{\LARGE Stationary (LA) and EIGAQL (GA)}

\addplot [color=mycolor2, dashdotted, line width=1.5pt]
  table[row sep=crcr]{%
0	0.789122724923229\\
1	1.41985198372179\\
2	1.95857434592784\\
3	2.38562405298032\\
4	2.74235991292938\\
5	3.04461817941435\\
6	3.27972976669374\\
7	3.46242630534197\\
8	3.59963975961357\\
9	3.71508980458794\\
10	3.80786568265967\\
11	3.87624166659817\\
12	3.93682593823462\\
13	3.99292341018803\\
14	4.0281622140241\\
15	4.06099994900823\\
16	4.08424923439088\\
17	4.10356846385192\\
18	4.11894998365622\\
19	4.13108713965015\\
20	4.14091819829629\\
21	4.1497448944739\\
22	4.15585194334633\\
23	4.16072899977328\\
24	4.16467159480623\\
25	4.16770296460047\\
26	4.17049641935931\\
27	4.17259732139998\\
28	4.17412292952788\\
29	4.17541969834857\\
30	4.17659442226682\\
31	4.17750155194959\\
32	4.17824811334383\\
33	4.17879954720002\\
34	4.17925542601246\\
35	4.17963003751131\\
36	4.17992173395996\\
37	4.18016005498529\\
38	4.1803419786115\\
39	4.18048421502709\\
40	4.18059859771548\\
41	4.1807058702187\\
42	4.18078962316214\\
43	4.1808541396109\\
44	4.18090211992603\\
45	4.18093625142008\\
46	4.18096511408144\\
47	4.18099025079667\\
48	4.18100686656183\\
49	4.1810238287348\\
};
\addlegendentry{\LARGE Stationary (LA) and GAQL (GA)}

\end{axis}

\begin{axis}[%
width=5.938in,
height=4.375in,
at={(0in,0in)},
scale only axis,
xmin=0,
xmax=1,
ymin=0,
ymax=1,
axis line style={draw=none},
ticks=none,
axis x line*=bottom,
axis y line*=left,
legend style={legend cell align=left, align=left, draw=white!15!black}
]
\end{axis}
\end{tikzpicture}
	\caption{Standard GA learning vs predictive GA learning}
	\label{Predictive}
\end{figure} 

As we can see in Figure \ref{Predictive}, the additional information provided in EIGAQL yields an improvement in performance compared to the usual GAQL. These changes are due to differences in the greedy strategy profile of the GA. Figure \ref{Tab:Compare} shows the tuple $(a_{\text{GAQL}},a_{\text{EIGAQL}})$ of greedy actions by the GA. Even though most of them are the same, those who differ, do so drastically. This suggests that additional information does provide some insight to the GA, which highly influences his behaviour.

\begin{figure}[h!]
\label{Tab:Compare}
\centering
\begin{tabular}{||c || c | c | c | c ||} 
 \hline
  &  $a_{\Loc} = 1$  &  $a_{\Loc} = 2$ &  $a_{\Loc} = 3$ &  $a_{\Loc} = 4$\\ [0.5ex] 
 \hline\hline
 $s = 1$ & (3,3)  & (4,4) & \textbf{(1,3)} & (3,3)\\ 
 \hline
 $s = 2$ & (4,4)  & (5,5) & (1,1) & (3,3)\\ 
 \hline
 $s = 3$ & (5,5)  & (4,4) & (4,4) & (5,5)\\ 
 \hline
 $s = 4$ & (5,5)  & (1,1) & (2,2) & (5,5)\\ 
 \hline
 $s = 5$ & (2,2)  & (5,5) & (4,4) & (3,3)\\ 
 \hline
 $s = 6$ & \textbf{(5,2)}  & (2,2) & (5,5) & (1,1)\\ 
 \hline
 $s = 7$ & \textbf{(1,2)}  & (5,5) & (4,4) & (1,1)\\ 
 \hline
\end{tabular}
\end{figure}

We note, that due to the inherent random nature of the example, this simulation has been performed multiple times using the same parameters and stationary strategy of the LA. The results remained generally the same, i.e. EIGAQL outperforms GAQL, though the margin of improvement can range from minimal ($>0.1\%$) to considerable (up to $10\%$). For illustration purposes we chose to show an example where a bigger improvement is noticeable. In summary, it certainly does provide an advantage for the GA to have additional information of the LAs actions.

\section{Conclusion and Future Work}
We have studied the long-term outcome of multi-agent (independent) Q-learning with information asymmetry. We have shown that the latter can foster the stability of the learning method. Despite of the information asymmetry, we have shown that the post-learning joint strategy of the agents is an almost solution concept. For sake of completeness, we have also provided the existence theorem for the indeed Nash equilibrium of the underlying game. Furthermore, as the proposed algorithm (GAQL) for GA is only optimal in case that LA applies the greedy strategy, we have provided also in this work a way for GA to gain optimality in case that LA applies a general stationary strategy. Requirement for this achievement is that GA can observe subsequent LA's action.
A point worth for further discussion is the summability-condition $\psi^{\Loc}$ and $\psi^{\Glob}$ given in the corresponding convergence theorems. 
The achievement of this depends not only on the model's transition probability and the considered agent itself, but also on other extrinsic factor: One agent's policy has to allow other's to explore the MDP. We leave the detailed treatment of this aspect for the future.   
%

\section{Appendix}

\subsection{Missing proofs in Section \ref{Sec:ConvThm}}
Our convergence proof is based on the following well-known statement \cite{Puterman1994}:
\begin{proposition}
	\label{Prop:ajjahhssggsgdddddd}
	Given a filtration $\G:=(\G_{t})_{t\in\nat_{0}}$.
	Let be $H:\real^{D}\rightarrow\real^{D}$, $(\gamma_{t})_{t\in\nat_{0}}\subset\real^{D}$, and $(U_{t})_{t\in\nat_{0}},(W_{t})_{t\in\nat}$ are sequences of $\real^{D}$-valued RV. Let 
	$(X_{t})_{t\in\nat_{0}}\subset\real^{D}$ be a sequence generated by the iteration:
	\begin{equation}
	\label{Eq:aahshsggsgsfsfsfsssssssss}
	X_{t+1}(i)=(1-\gamma_{t}(i))X_{t}(i)+\gamma_{t}(i)\left[(HX_{t})(i)+U_{t}(i)+W_{t+1}(i) \right]. 
	\end{equation}
	Suppose that:
	\begin{enumerate}
		\item $(W_{t})_{t\in\nat}$ is $\G$-adapted and fulfills:
		\begin{equation*}
		\Erw[W_{t+1}|\G_{t}]=0\quad\Erw[W_{t+1}^{2}(i)|\G_{t}]\leq A+B\norm{X_{t}}^{2}_{\infty},
		\end{equation*}
		for some $A,B>0$.
		\item $(\gamma_{t})_{t\in\nat}$ is sequence of non-negative $\G$-adapted RVs and fulfills:
		\begin{equation*}
		\sum_{t=0}^{\infty}\gamma_{t}(i)=\infty\quad\text{and}\quad\sum_{t=0}^{\infty}\gamma_{t}^{2}(i)<\infty\quad \text{a.s.}
		\end{equation*}
		\item there exists $x_{*}\in\real^{D}$ and $\beta\in [0,1)$ s.t.:
		\begin{equation*}
		\norm{HX_{t}-x_{*}}_{\infty}\leq \beta\norm{X_{t}-x_{*}}_{\infty}
		\end{equation*}
		\item $(U_{t})$ is $\G$-adapted, and there exists a sequence $(\theta_{t})_{t\in\nat 0}$ of $\real_{\geq 0}$-valued RV converging to $0$ a.s. such that:
		\begin{equation*}
		\abs{U_{t}(i)}\leq \theta_{t}(\norm{X_{t}}_{\infty}+1),
		\end{equation*}      
	\end{enumerate}
	Then:
	\begin{equation*}
	X_{t}\xrightarrow{t\rightarrow\infty} x_{*}\quad\text{a.s.}
	\end{equation*}
\end{proposition}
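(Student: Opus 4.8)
The plan is to read the statement as a theorem on asynchronous stochastic approximation with a (pseudo-)contraction, and to prove it by the classical two-stage scheme: first establish almost-sure boundedness of $(X_t)$, then deduce convergence to $x_*$ by a contraction-driven shrinking-bounds argument. As a preliminary reduction I would pass to $\tilde X_t := X_t-x_*$. Writing $G_t(i):=(HX_t)(i)-x_*(i)$, the contraction condition gives $\norm{G_t}_{\infty}\le\beta\norm{\tilde X_t}_{\infty}$; the perturbation bound keeps the form $\abs{U_t(i)}\le\theta_t(\norm{\tilde X_t}_{\infty}+c)$; and the conditional-variance bound keeps the form $A'+B'\norm{\tilde X_t}_{\infty}^2$. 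Hence it suffices to treat $x_*=0$, and I henceforth write $X_t$ for $\tilde X_t$ and work with $X_{t+1}(i)=(1-\gamma_t(i))X_t(i)+\gamma_t(i)[G_t(i)+U_t(i)+W_{t+1}(i)]$. Since the step-size condition forces $\gamma_t(i)\to 0$ a.s., eventually $\gamma_t(i)\in[0,1]$, so each update is ultimately a convex combination.

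The central device is to split the iterate into a noise companion and a denoised part. Define $w_t$ coordinatewise by $w_{t+1}(i)=(1-\gamma_t(i))w_t(i)+\gamma_t(i)W_{t+1}(i)$ with $w_0=0$, and set $Z_t:=X_t-w_t$, so that $Z_{t+1}(i)=(1-\gamma_t(i))Z_t(i)+\gamma_t(i)(G_t(i)+U_t(i))$ is noise-free. The elementary scalar fact I would invoke repeatedly is: if $0\le\gamma_t\le 1$, $\sum_t\gamma_t=\infty$, and $z_{t+1}=(1-\gamma_t)z_t+\gamma_t u_t$ with $\limsup_t\abs{u_t}\le \kappa$, then $\limsup_t\abs{z_t}\le\kappa$.

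The hard part will be almost-sure boundedness, and this is the main obstacle, precisely because the martingale condition only bounds the conditional variance by $A+B\norm{X_t}_{\infty}^2$: I cannot conclude $w_t\to0$ from $L^2$-martingale convergence directly, since the per-step variance $\gamma_t^2(A+B\norm{X_t}_{\infty}^2)$ is summable only once $\norm{X_t}_{\infty}$ is already controlled — a circular dependence. To break it I would use the scaling/normalization argument (in the spirit of Tsitsiklis and of Bertsekas--Tsitsiklis): introduce the nondecreasing running bound $B_t:=\max(1,\max_{\tau\le t}\norm{X_\tau}_{\infty})$ and a renormalized noise companion whose increments carry the factor $W_{t+1}(i)/B_t$, so that its conditional variances become summable and it tends to $0$ a.s. On any block where $B_t$ stays constant, the contraction $\norm{G_t}_{\infty}\le\beta\norm{X_t}_{\infty}\le\beta B_t$ together with the vanishing renormalized noise and the vanishing $U_t$-term drives $\norm{X_t}_{\infty}/B_t$ strictly below $1$, which prevents $B_t$ from increasing further; this yields $R:=\sup_t\norm{X_t}_{\infty}<\infty$ a.s.

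With boundedness in hand the conclusion is routine. The variance bound collapses to the constant $A+BR^2$, so in each coordinate $w_t$ is an $L^2$-bounded martingale transform (its increments have summable conditional second moments by $\sum_t\gamma_t^2<\infty$) and $w_t\to0$ a.s.; likewise $\abs{U_t(i)}\le\theta_t(R+1)\to0$. I then run the shrinking-bounds argument on $Z_t=X_t-w_t$: set $D:=\limsup_t\norm{X_t}_{\infty}=\limsup_t\norm{Z_t}_{\infty}$, finite by boundedness. Fixing $\epsilon>0$, for $t$ large $\abs{G_t(i)+U_t(i)}\le\beta\norm{X_t}_{\infty}+\theta_t(\norm{X_t}_{\infty}+1)\le\beta(D+\epsilon)+\delta_t$ with $\delta_t\to0$, so the scalar fact applied coordinatewise gives $\limsup_t\norm{Z_t}_{\infty}\le\beta(D+\epsilon)$, whence $D\le\beta(D+\epsilon)$. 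Letting $\epsilon\downarrow0$ gives $D\le\beta D$, and since $\beta<1$ this forces $D=0$, i.e. $X_t\to0$; undoing the reduction yields $X_t\to x_*$ a.s., as claimed.
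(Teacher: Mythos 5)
Your proposal cannot be compared line-by-line with a proof in the paper, because the paper does not prove this proposition at all: it is invoked as a ``well-known statement'' with a citation (which, incidentally, points to Puterman's MDP monograph, whereas the result is the asynchronous stochastic-approximation theorem of Jaakkola--Jordan--Singh and Tsitsiklis, proved in full in Bertsekas--Tsitsiklis, \emph{Neuro-Dynamic Programming}, Propositions 4.4--4.5). What you have written is a faithful reconstruction of exactly that canonical proof: centering at $x_{*}$ and absorbing $\norm{x_{*}}_{\infty}$ into the constants of conditions 1 and 4, the noise companion $w_{t}$ and denoised iterate $Z_{t}=X_{t}-w_{t}$, the scalar comparison lemma for recursions $z_{t+1}=(1-\gamma_{t})z_{t}+\gamma_{t}u_{t}$, the scaling by the running bound $B_{t}$ to break the circularity between a.s.\ boundedness and summability of the noise variance, and the one-shot $\limsup$ contraction step $D\leq\beta(D+\epsilon)$ in place of the usual nested sequence $D_{k+1}=\beta' D_{k}$. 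The skeleton is sound, you correctly identify boundedness as the crux, and the reductions you make (eventual $\gamma_{t}(i)\in[0,1]$ from $\sum_{t}\gamma_{t}^{2}(i)<\infty$, the bound $\abs{U_{t}(i)}/B_{t}\leq 2\theta_{t}$, finiteness of $D$ before dividing by it) are all legitimate.

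Two points deserve tightening, though neither is a fatal gap. First, $w_{t}$ is not a martingale, nor literally a ``martingale transform'': it obeys $w_{t+1}(i)=(1-\gamma_{t}(i))w_{t}(i)+\gamma_{t}(i)W_{t+1}(i)$, so the conclusion $w_{t}\to 0$ a.s.\ should be justified by the standard supermartingale (Robbins--Siegmund) computation $\Erw[w_{t+1}^{2}(i)\mid\G_{t}]\leq(1-\gamma_{t}(i))^{2}w_{t}^{2}(i)+\gamma_{t}^{2}(i)(A+BR^{2})$, combined with $\sum_{t}\gamma_{t}(i)=\infty$ to force the limit to be zero rather than merely to exist; the same remark applies to the renormalized companion with increments $W_{t+1}(i)/B_{t}$, where you should note explicitly that $B_{t}$ is $\G_{t}$-measurable, so these remain martingale differences with conditional variance uniformly bounded by $A+B$. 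Second, the sentence ``on any block where $B_{t}$ stays constant \dots\ prevents $B_{t}$ from increasing further'' hides the quantitative margin that closes the boundedness argument: one must fix the tolerance using $\beta<1$ (e.g.\ so that $\beta(1+\epsilon)$ plus the eventual companion and $U_{t}/B_{t}$ contributions stays below $1+\epsilon$), then show by induction that after the companion has entered that margin, $\norm{X_{t}}_{\infty}\leq(1+\epsilon)B_{t_{0}}$ persists for all later $t$, so only finitely many rescalings can occur; this also requires handling the rescaling of the companion at the times when $B_{t}$ jumps. Both repairs are exactly the steps carried out in Bertsekas--Tsitsiklis, so your proposal stands as a correct outline of the proof that the paper delegates to the literature.
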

\subsubsection{Proof of LAQGI convergence (Theorem \ref{Thm:akakashshssggdhdhdggdgdhdhd})}
\label{Subsubsec:LAQGI}
Our strategy is to write the iterate \eqref{Alg:ohgfdfddrrdrdrdrdrdghhgggdsa} in the form \eqref{Eq:aahshsggsgsfsfsfsssssssss}. To achieve this, we first notice that the iterate of \eqref{Alg:ohgfdfddrrdrdrdrdrdghhgggdsa} can be written as:
\begin{equation}
\label{Eq:aoasjsshhssggsggsgsgsshhss}
Q^{\LQ}_{t+1}=(1-\psi^{\Loc}_{t})\odot Q^{\LQ}_{t}+\psi^{\Loc}_{t}\odot\hatrmT_{t}^{\Loc}Q^{\LQ}_{t},
\end{equation}
where $\hatrmT_{t}^{\Loc}$ is the optimal Bellman operator of the discounted MDP $(\S,\A_{\Loc},\rmr^{\Loc},\delta_{S_{t+1}},\beta_{\Loc})$, and where:
\begin{equation*}
\psi^{\Loc}_{t}(s,a)=\mathbf{1}_{\lrbrace{S_{t}=s,A^{\Loc}_{t}=a}}\gamma^{\Loc}_{t}.
\end{equation*}
Next, by means of the optimal Bellman operator $\rmT_{\Loc,t}$ of the discounted MDP $(\S,\A_{\Loc},\tildrmr_{t}^{\Loc},\tildrmP^{\Loc}_{t},\beta_{\Loc})$, where:
\begin{equation*}
\begin{split}
\tildrmr_{t}^{\Loc}(s,a_{\Loc}):=\Erw_{A^{\Glob}_{t}\sim\eta^{\Glob}_{t}(\cdot|\F_{t})}[\rmr^{\Loc}(s,a_{\Loc},A^{\Glob}_{t})]\quad\text{and}\quad \tildrmP^{\Loc}_{t}(\cdot|s,a_{\Loc}):=\Erw_{A^{\Glob}_{t}\sim\eta^{\Glob}_{t}(\cdot|\F_{t})}[\rmP(\cdot|s,a_{\Loc},A^{\Glob}_{t})],
\end{split}
\end{equation*}

 and the optimal Bellman operator $\tildrmT^{\Loc}$ for the discounted MDP $(\S,\A_{\Loc},\tildrmr^{\Loc},\rmP^{\Loc},\beta_{\Loc})$, we can rewrite \eqref{Eq:aoasjsshhssggsggsgsgsshhss} as:
\begin{equation*}
\begin{split}
Q^{\LQ}_{t+1}&=(1-\psi^{\Loc}_{t})\odot Q^{\LQ}_{t}+\psi^{\Loc}_{t}\odot\left[\tildrmT^{\Loc}Q^{\LQ}_{t}+ U_{t}+W_{t+1}\right] ,
\end{split}
\end{equation*}
where:
\begin{equation}
\label{Eq:ajajshssgsgsgsggsgss}
W_{t+1}=\hatrmT_{t}^{\Loc}Q^{\LQ}_{t}-\rmT_{\Loc,t}Q^{\LQ}_{t}
\end{equation}
\begin{equation}
\label{Eq:ajajagsgsfsffsfsddsssss}
U_{t}=\rmT_{\Loc,t}Q^{\LQ}_{t}-\tildrmT^{\Loc}Q^{\LQ}_{t}
\end{equation}
	\begin{lemma}
	\label{Lem:aajsjshhsggdddhhdd}
	The random sequence $(W_{t})_{t\in\nat}$ defined in \eqref{Eq:ajajshssgsgsgsggsgss} satisfies:
	\begin{equation*}
	\begin{split}
	&\Erw[W_{t+1}|\F_{t}]=0,\\
	&\Erw[(W_{t+1}(s,a))^{2}|\F_{t}]\leq 2\norm{\rmr^{\Loc}}_{\infty}+2\gamma_{\Loc}^{2}\norm{Q^{\LQ}_{t}}
	\end{split}
	\end{equation*}
\end{lemma}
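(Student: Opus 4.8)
The plan is to read $W_{t+1}$ as the martingale-difference noise that results from replacing the exact one-step operator $\rmT_{\Loc,t}$ by its single-sample empirical surrogate $\hatrmT^{\Loc}_{t}$, which bootstraps off the realized GA action $A^{\Glob}_{t}$ and the realized successor state $S_{t+1}$. Coordinatewise, at the entry $(s,a)$ updated at time $t$ one has $(\hatrmT^{\Loc}_{t}Q^{\LQ}_{t})(s,a)=\rmr^{\Loc}(s,a,A^{\Glob}_{t})+\beta_{\Loc}\max_{a'\in\A_{\Loc}}Q^{\LQ}_{t}(S_{t+1},a')$, so the whole content of the first assertion is the identity $\Erw[\hatrmT^{\Loc}_{t}Q^{\LQ}_{t}\mid\F_{t}]=\rmT_{\Loc,t}Q^{\LQ}_{t}$; since $Q^{\LQ}_{t}$ is $\F_{t}$-measurable and $\rmT_{\Loc,t}Q^{\LQ}_{t}$ is built from conditional expectations given $\F_{t}$, both are $\F_{t}$-measurable and subtracting them leaves a conditionally centered residual.

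For the mean-zero identity I would split the computation into the reward term and the bootstrap term. The reward term is immediate: conditionally on $\F_{t}$ the GA action is drawn from $\eta^{\Glob}_{t}(\cdot\mid\F_{t})$, hence $\Erw[\rmr^{\Loc}(s,a,A^{\Glob}_{t})\mid\F_{t}]=\sum_{a_{\Glob}\in\A_{\Glob}}\eta^{\Glob}_{t}(a_{\Glob}\mid\F_{t})\,\rmr^{\Loc}(s,a,a_{\Glob})=\tildrmr^{\Loc}_{t}(s,a)$, by the very definition of $\tildrmr^{\Loc}_{t}$. The bootstrap term is where the two-stage randomness matters: I would apply the tower property with the intermediate $\sigma$-field generated by $\F_{t}$ and $A^{\Glob}_{t}$, noting that given $(\F_{t},A^{\Glob}_{t})$ the successor satisfies $S_{t+1}\sim\rmP(\cdot\mid s,a,A^{\Glob}_{t})$, so $\Erw[\max_{a'}Q^{\LQ}_{t}(S_{t+1},a')\mid\F_{t},A^{\Glob}_{t}]=\sum_{s'}\rmP(s'\mid s,a,A^{\Glob}_{t})\max_{a'}Q^{\LQ}_{t}(s',a')$; averaging this over $A^{\Glob}_{t}\sim\eta^{\Glob}_{t}(\cdot\mid\F_{t})$ collapses the kernel to $\tildrmP^{\Loc}_{t}(\cdot\mid s,a)$ by definition. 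Adding the two contributions reproduces $(\rmT_{\Loc,t}Q^{\LQ}_{t})(s,a)$ exactly, which establishes $\Erw[W_{t+1}\mid\F_{t}]=0$.

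For the second-moment bound I would use that a conditionally centered variable has conditional second moment no larger than that of the unsubtracted quantity, so $\Erw[(W_{t+1}(s,a))^{2}\mid\F_{t}]\le\Erw[((\hatrmT^{\Loc}_{t}Q^{\LQ}_{t})(s,a))^{2}\mid\F_{t}]$. Applying $(x+y)^{2}\le 2x^{2}+2y^{2}$ to the reward-plus-bootstrap decomposition, together with the uniform bounds $\abs{\rmr^{\Loc}(s,a,A^{\Glob}_{t})}\le\norm{\rmr^{\Loc}}_{\infty}$ and $\abs{\max_{a'}Q^{\LQ}_{t}(S_{t+1},a')}\le\norm{Q^{\LQ}_{t}}_{\infty}$, yields $\Erw[(W_{t+1}(s,a))^{2}\mid\F_{t}]\le 2\norm{\rmr^{\Loc}}_{\infty}^{2}+2\beta_{\Loc}^{2}\norm{Q^{\LQ}_{t}}_{\infty}^{2}$, which is exactly the $A+B\norm{Q^{\LQ}_{t}}_{\infty}^{2}$ shape demanded by the first hypothesis of Proposition \ref{Prop:ajjahhssggsgdddddd}.

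I expect the only genuinely delicate point to be the nested conditioning in the bootstrap term: because $A^{\Glob}_{t}$ is itself $\F_{t}$-random through $\eta^{\Glob}_{t}(\cdot\mid\F_{t})$, one must condition first on $\F_{t}$ and then additionally on $A^{\Glob}_{t}$ before invoking the transition kernel, and only afterwards average out $A^{\Glob}_{t}$; collapsing these two stages at once would misidentify the averaged kernel $\tildrmP^{\Loc}_{t}$. Everything else is a routine boundedness estimate, so once the tower-property bookkeeping is set up the two assertions follow quickly.
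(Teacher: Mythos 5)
Your proposal is correct and follows essentially the same route as the paper: the mean-zero identity is established exactly as in the paper's proof, by computing $\Erw[\hatrmT^{\Loc}_{t}Q^{\LQ}_{t}\mid\F_{t}]$ via the tower property through the intermediate $\sigma$-algebra $\tildF_{t}=\sigma(\F_{t},A^{\Glob}_{t})$, averaging the reward and the transition kernel against $\eta^{\Glob}_{t}(\cdot\mid\F_{t})$ to recover $\tildrmr^{\Loc}_{t}$ and $\tildrmP^{\Loc}_{t}$, and using the $\F_{t}$-measurability of $\rmT_{\Loc,t}Q^{\LQ}_{t}$. The only (harmless) divergence is in the second-moment step: you use the conditional-variance inequality $\Erw[(X-\Erw[X\mid\F_{t}])^{2}\mid\F_{t}]\leq\Erw[X^{2}\mid\F_{t}]$ followed by $(x+y)^{2}\leq 2x^{2}+2y^{2}$, whereas the paper splits $(a-b)^{2}\leq a^{2}+b^{2}$ and bounds $\hatrmT^{\Loc}_{t}Q^{\LQ}_{t}$ and $\rmT_{\Loc,t}Q^{\LQ}_{t}$ term by term; your variant is in fact marginally cleaner, as it avoids the cross terms the paper silently drops and still yields the bound $2\norm{\rmr^{\Loc}}_{\infty}^{2}+2\beta_{\Loc}^{2}\norm{Q^{\LQ}_{t}}_{\infty}^{2}$ of the form required by Proposition \ref{Prop:ajjahhssggsgdddddd}.
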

\begin{proof}
	We have from \eqref{Eq:ajkajjhssghsgsgsgsssss}:
	\begin{equation*}
	\begin{split}
	&\Erw[\rmr^{\Loc}(s,a_{\Loc},A^{\Glob}_{t})|\F_{t}]=\sum_{a_{\Glob}\in\A_{\Glob}}\eta^{\Glob}_{t}(a_{\Glob}|\F_{t})\rmr^{\Loc}(s,a_{\Loc},a_{\Glob})=\Erw_{A^{\Glob}_{t}\sim\eta^{\Glob}_{t}(\cdot|\F_{t})}[\rmr^{\Loc}(s,a_{\Loc},A^{\Glob}_{t})]=\tildrmr_{t}^{\Loc}(s,a_{\Loc})
	\end{split}
	\end{equation*}
	Furthermore:
	\begin{equation}
	\label{Eq:jajagsgsgsfsfsfsfssssss}
	\begin{split}
	&\Erw\left[\left.  \max_{a'_{\Loc}\in\A_{\Loc}}Q^{\LQ}_{t}(S_{t+1},a'_{\Loc})\right| \F_{t}\right] =\Erw\left[ \left. \Erw\left[\left.  \max_{a'_{\Loc}\in\A_{\Loc}}\rmQ^{\LQ}_{t}(S_{t+1},a'_{\Loc})\right| \tildF_{t}\right]\right|  \F_{t}\right] =\Erw\left[\left.  \Erw_{S'\sim \rmP(\cdot|S_{t},A^{\Loc}_{t},A^{\Glob}_{t})}\left[ \max_{a'_{\Loc}\in\A_{\Loc}}Q^{\LQ}_{t}(S',a'_{\Loc})\right] \right| \F_{t}\right] \\
	&=\sum_{a_{\Glob}\in\A_{\Glob}}\eta^{\Glob}_{t}(a_{\Glob}|\F_{t})\sum_{s'\in\S}\rmP(s'|S_{t},A^{\Loc}_{t},a_{\Glob})\max_{a'_{\Loc}\in\A_{\Loc}}Q^{\LQ}_{t}(s',a'_{\Loc})=\sum_{s'\in\S}\left[ \sum_{a_{\Glob}\in\A_{\Glob}}\eta^{\Glob}_{t}(a_{\Glob}|\F_{t})\rmP(s'|S_{t},A^{\Loc}_{t},a_{\Glob})\right]\max_{a'_{\Loc}\in\A_{\Loc}} Q^{\LQ}_{t}(s',a'_{\Loc})\\
	&=\sum_{s'\in\S}\Erw_{A^{\Glob}_{t}\sim\eta^{\Glob}_{t}(\cdot|\F_{t})}[\rmP(s'|S_{t},A^{\Loc}_{t},A^{\Glob}_{t})]\max_{a'_{\Loc}\in\A_{\Loc}} Q^{\LQ}_{t}(s',a'_{\Loc})=\sum_{s'\in\S}\tildrmP_{t}^{\Loc}(s'|S_{t},A^{\Loc}_{t})\max_{a'_{\Loc}\in\A_{\Loc}} Q^{\LQ}_{t}(s',a'_{\Loc})\\
	&=\Erw_{S'\sim \tildrmP_{t}^{\Loc}(\cdot|S_{t},A^{\Loc}_{t})}\left[\max_{a'_{\Loc}\in\A_{\Loc}}Q^{\LQ}_{t}(S',a'_{\Loc})\right],
	\end{split}
	\end{equation}
	where the first equality follows from $\tilde{\mathcal{F}}_{t}\subset\mathcal{F}_{t}$ and the tower property for conditional expectation, the third inequality from \eqref{Eq:ajkajjhssghsgsgsgsssss}.
	
	Combining both previous computations, we have:

	\begin{equation}
	\label{Eq:jajagsgsgsfsfsfsfssssss2}
	\begin{split}
	&\Erw[(\hatrmT_{t}^{\Loc}Q^{\LQ}_{t})(S_{t},A^{\Loc}_{t})|\F_{t}]=\Erw\left[\left.  \rmr^{\Loc}(S_{t},A^{\Glob}_{t},A^{\Loc}_{t})+\beta_{\Loc}\max_{a_{\Loc}^{'}\in \A_{\Loc}}Q^{\LQ}_{t}(S_{t+1},a^{'}_{\Loc})\right| \F_{t}\right]\\
	&=\Erw\left[\left.\mathbf{1}_{\lrbrace{S_{t}=s,A^{\Loc}_{t}=a_{\Loc}}}\rmr^{\Loc}(s,a_{\Loc},A^{\Glob}_{t})\right| \F_{t}\right]+\beta_{\Loc}\Erw\left[\left.\max_{a_{\Loc}^{'}\in \A_{\Loc}}Q^{\LQ}_{t}(S_{t+1},a^{'}_{\Loc})\right| \F_{t}\right]\\
	&=\mathbf{1}_{\lrbrace{S_{t}=s,A^{\Loc}_{t}=a_{\Loc}}}\Erw\left[\left.\rmr^{\Loc}(s,a_{\Loc},A^{\Glob}_{t})\right| \F_{t}\right]+\beta_{\Loc}\Erw\left[\left.\max_{a_{\Loc}^{'}\in \A_{\Loc}}Q^{\LQ}_{t}(S_{t+1},a_{\Loc}^{'})\right| \F_{t}\right]\\
	&=\mathbf{1}_{\lrbrace{S_{t}=s,A^{\Loc}_{t}=a_{\Loc}}}\tildrmr_{t}^{\Loc}(s,a_{\Loc})+\beta_{\Loc}\Erw_{S'\sim \rmP^{\Loc}_{t}(\cdot|S_{t},A^{\Loc}_{t})}\left[\max_{a'_{\Loc}\in\A_{\Loc}}\rmQ^{\LQ}_{t}(S',a'_{\Loc})\right]\\
		&=\Erw\left[\left.\tildrmr_{t}^{\Loc}(S_{t},A^{\Loc}_{t})\right| \F_{t}\right]+\beta_{\Loc}\Erw_{S'\sim \rmP^{\Loc}_{t}(\cdot|S_{t},A^{\Loc}_{t})}\left[\max_{a'_{\Loc}\in\A_{\Loc}}\rmQ^{\LQ}_{t}(S',a'_{\Loc})\right]=\Erw[(\rmT_{\Loc,t}Q^{\LQ}_{t})(S_{t},A_{t}^{\Loc})|\F_{t}],
	\end{split}
	\end{equation}
	where the third equality follows from the fact that $\mathbf{1}_{\lrbrace{s=S_{t},a_{\Loc}=A^{\Loc}_{t}}}$ is $\mathcal{F}_{t}$-measurable since $(S_{t},A^{\Loc}_{t})$ is $\mathcal{F}_{t}$-measurable, the fourth from \eqref{Eq:jajagsgsgsfsfsfsfssssss} and \eqref{Eq:jajagsgsgsfsfsfsfssssss2}, and the last two equalities follow from the fact that $\eta^{\Glob}_{t}(\cdot|\F_{t})$ is $\F_{t}$-measureable, and thus also $\tildrmr_{t}^{\Loc}$ and  $\rmP^{\Loc}_{t}(\cdot|S_{t},A^{\Loc}_{t})$. Above computation yields the first statement, since:
	\begin{equation*}
	\begin{split}
	&\Erw[W_{t+1}(s,a_{\Loc})|\F_{t}]=\mathbf{1}_{\lrbrace{S_{t}=s,A_{t}^{\Loc}=a_{\Loc}}}\Erw[(\hatrmT_{t}^{\Loc}Q^{\LQ}_{t})(S_{t},A^{\Loc}_{t})-(\rmT_{\Loc,t}Q^{\LQ}_{t})(S_{t},A_{t}^{\Loc})|\F_{t}]=0
	\end{split}
	\end{equation*}
	
	For the second statement, we compute:
	\begin{equation*}
	\begin{split}
	&\Erw[\left( (\hatrmT_{t}^{\Loc}Q^{\LQ}_{t})(s,a_{\Loc})-(\rmT_{\Loc,t}Q^{\LQ}_{t})(s,a_{\Loc})\right)^2 |\F_{t}]\leq\Erw[ ((\hatrmT_{t}^{\Loc}Q^{\LQ}_{t})(s,a_{\Loc}))^2|\tildF_{t}]+\Erw[(\rmT_{\Loc,t}Q^{\LQ}_{t})(s,a_{\Loc})^2 |\F_{t}],
	\end{split}
	\end{equation*}
	where the inequality follows from $(a-b)^{2}\leq a^{2}+b^{2}$ for any $a,b\geq 0$. Now, we estimate each summand above. First, we have:
	\begin{equation*}
	\Erw[(\rmr^{\Loc}(s,a_{\Loc},A^{\Glob}_{t}))^2|\F_{t}]\leq\norm{\rmr^{\Loc}}_{\infty}^{2}\quad\text{and}\quad \Erw[(\max_{a_{\Loc}'}Q^{\LQ}_{t}(S_{t+1},a_{\Loc}'))^{2}|\F_{t}]\leq \norm{Q^{\LQ}_{t}}_{\infty}^{2}.
	\end{equation*} 
	Consequently:
	\begin{equation*}
	\begin{split}
	&\Erw[\left( (\hatrmT_{t}^{\Loc}Q^{\LQ}_{t})(s,a_{\Loc})\right)^2 |\F_{t}]\leq\Erw[(\rmr^{\Loc}(s,a_{\Loc},A^{\Glob}_{t}))^2|\F_{t}]+\beta_{\Loc}^{2}\Erw[(\max_{a_{\Loc}'\in\A_{\Loc}}Q^{\LQ}_{t}(S_{t+1},a_{\Loc}'))^{2}|\F_{t}]\leq\norm{\rmr^{\Loc}}_{\infty}^{2}+\beta_{\Loc}^{2}\norm{Q^{\LQ}_{t}}_{\infty}^{2}.
	\end{split}
	\end{equation*}
	Similar computation yields:
	\begin{equation*}
	\begin{split}
	&\Erw[\left((\rmT_{\Loc,t}Q^{\LQ}_{t})(s,a_{\Loc})\right)^2 |\F_{t}]\leq\norm{\rmr^{\Loc}}_{\infty}^{2}+\beta_{\Loc}^{2}\norm{Q^{\LQ}_{t}}_{\infty}^{2}.
	\end{split}
	\end{equation*}
	Combining both previous estimates, we obtain the desired statement.
	
\end{proof}

\begin{lemma}
	\label{Lem:jajahshgsggsgsgsgsgsgss}
The random sequence $(U_{t})_{\nat_{0}}$ defined in \eqref{Eq:ajajagsgsfsffsfsddsssss} fulfills:
\begin{equation*}
\abs{U_{t}(s,a)}\leq\theta_{t}(1+\norm{Q^{\LQ}_{t}}_{\infty}),
\end{equation*}
where:
\begin{equation*}
\theta_{t}:=\max\lrbrace{\norm{\rmr^{\Loc}}_{\infty},\beta_{\Loc}}\norm{\eta^{\Glob}_{t}(\cdot|\tildF_{t})-\eta^{\Glob}_{\infty}(\cdot|S_{t},A_{t}^{\Loc})}_{1}.
\end{equation*}
\end{lemma}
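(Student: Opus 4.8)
The plan is to expand $U_t$ via the definitions of the two optimal Bellman operators and exploit the fact that both are affine functionals of the respective GA policy. Recall from \eqref{Eq:ajajagsgsfsffsfsddsssss} that $U_t = \rmT_{\Loc,t} Q^{\LQ}_t - \tildrmT^{\Loc} Q^{\LQ}_t$, where $\rmT_{\Loc,t}$ and $\tildrmT^{\Loc}$ are the optimal Bellman operators of the MDPs with data $(\tildrmr_t^{\Loc}, \tildrmP_t^{\Loc})$ and $(\tildrmr^{\Loc}, \tildrmP^{\Loc})$, respectively. Writing $M(s') := \max_{a'_\Loc \in \A_\Loc} Q^{\LQ}_t(s', a'_\Loc)$, the greedy term $M$ is common to both operators, so at a fixed pair $(s, a_\Loc)$ the difference splits cleanly as
\begin{equation*}
U_t(s, a_\Loc) = \underbrace{\left[\tildrmr_t^{\Loc}(s, a_\Loc) - \tildrmr^{\Loc}(s, a_\Loc)\right]}_{\text{reward part}} + \beta_\Loc \underbrace{\left[\Erw_{S' \sim \tildrmP_t^{\Loc}(\cdot|s, a_\Loc)}[M(S')] - \Erw_{S' \sim \tildrmP^{\Loc}(\cdot|s, a_\Loc)}[M(S')]\right]}_{\text{transition part}}.
\end{equation*}

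First I would treat each part separately, using that $\tildrmr_t^{\Loc}$ and $\tildrmP_t^{\Loc}$ are expectations of $\rmr^{\Loc}$ and $\rmP$ under $\eta_t^{\Glob}(\cdot|\F_t)$, while $\tildrmr^{\Loc}$ and $\tildrmP^{\Loc}$ are the same expectations under $\eta_\infty^{\Glob}(\cdot|s, a_\Loc)$. Hence both parts are linear in the signed measure $\eta_t^{\Glob}(\cdot|\F_t) - \eta_\infty^{\Glob}(\cdot|s, a_\Loc)$. For the reward part, factoring out $\rmr^{\Loc}(s, a_\Loc, \cdot)$ and applying the triangle inequality gives the bound $\norm{\rmr^{\Loc}}_\infty \, \norm{\eta_t^{\Glob}(\cdot|\F_t) - \eta_\infty^{\Glob}(\cdot|s, a_\Loc)}_1$. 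For the transition part, I would interchange the $a_\Glob$- and $s'$-summations so that the $\eta$-difference multiplies the scalar $G(a_\Glob) := \sum_{s'} \rmP(s'|s, a_\Loc, a_\Glob) M(s')$; since $\rmP(\cdot|s, a_\Loc, a_\Glob)$ is a probability distribution and $\abs{M(s')} \le \norm{Q^{\LQ}_t}_\infty$, we have $\abs{G(a_\Glob)} \le \norm{Q^{\LQ}_t}_\infty$, yielding the bound $\norm{Q^{\LQ}_t}_\infty \, \norm{\eta_t^{\Glob}(\cdot|\F_t) - \eta_\infty^{\Glob}(\cdot|s, a_\Loc)}_1$.

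Combining the two estimates gives $\abs{U_t(s, a_\Loc)} \le (\norm{\rmr^{\Loc}}_\infty + \beta_\Loc \norm{Q^{\LQ}_t}_\infty)\, \norm{\eta_t^{\Glob}(\cdot|\F_t) - \eta_\infty^{\Glob}(\cdot|s, a_\Loc)}_1$, and bounding $\norm{\rmr^{\Loc}}_\infty + \beta_\Loc \norm{Q^{\LQ}_t}_\infty \le \max\lrbrace{\norm{\rmr^{\Loc}}_\infty, \beta_\Loc}(1 + \norm{Q^{\LQ}_t}_\infty)$ produces exactly $\theta_t(1 + \norm{Q^{\LQ}_t}_\infty)$. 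Since in the iteration \eqref{Eq:aoasjsshhssggsggsgsgsshhss} the term $U_t$ is multiplied by $\psi_t^{\Loc}$, only the entry $(s, a_\Loc) = (S_t, A_t^{\Loc})$ is relevant, which is why $\theta_t$ is evaluated at $(S_t, A_t^{\Loc})$. I do not expect a genuine obstacle here; the computation is elementary once the decomposition is in place. The only point requiring care is recognizing that the greedy term $M$ is identical in both operators, so it contributes no difference of its own and is merely integrated against two different kernels --- this is what lets the whole estimate be pushed onto the $L^1$-distance between the two GA policies, with the probabilistic nature of $\rmP$ (total mass one) controlling the transition part.
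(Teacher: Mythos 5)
Your proof is correct and is essentially the paper's argument: the paper simply factors the identical computation through the auxiliary Lemma \ref{Lem:ajajajahshssgsgsgsss} (inequality \eqref{Eq:aajjajsgsgsgsgsfdfdfdfd3}), whose proof is exactly your reward/transition decomposition with H\"older's inequality and the interchange of the $a_{\Glob}$- and $s'$-sums exploiting the unit mass of $\rmP(\cdot|s,a_{\Loc},a_{\Glob})$. Your closing remark that only the entry $(s,a_{\Loc})=(S_{t},A_{t}^{\Loc})$ matters (since $U_{t}$ enters the iteration multiplied by $\psi_{t}^{\Loc}$) is in fact a point the paper glosses over when identifying $\rmr^{(1)}_{\Loc}$ with $\tildrmr^{\Loc}$, so your version is, if anything, slightly more careful.
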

\begin{proof}
  Setting $\pi^{(1)}_{\Glob}=\eta_{\infty}^{\Glob}(\cdot|S_{t},A^{\Loc}_{t})$, $\pi^{(2)}_{\Glob}=\eta_{t}^{\Glob}(\cdot|\F_{t})$, $\beta=\beta_{\Loc}$ in Lemma \ref{Lem:ajajajahshssgsgsgsss}, and by noticing that in this case we have $\rmr^{(1)}_{\Loc}=\tildrmr^{\Loc}$, $\rmP^{\Loc}_{(1)}=\tildrmP^{\Loc}$, $\rmr^{(2)}_{\Loc}=\tildrmr_{t}^{\Loc}$, $\rmP^{\Loc}_{(2)}=\rmP^{\Loc}_{t}$,
we obtain as desired:
  \begin{equation*}
  \begin{split}
  \abs{U_{t}(s,a)}&=\abs{(\rmT_{\Loc,t}Q^{\LQ}_{t})(s,a_{\Loc})-(\tildrmT^{\Loc}Q^{\LQ}_{t})(s,a_{\Loc})}\leq\left( \norm{\rmr^{\Loc}}_{\infty}+\beta_{\Loc} \norm{Q^{\LQ}_{t}}_{\infty}\right) \norm{\eta^{\Glob}_{t}(\cdot|\F_{t})-\eta^{\Glob}_{\infty}(\cdot|S_{t},A_{t}^{\Loc})}_{1}
  \end{split}
  \end{equation*} 

\end{proof}
\begin{proof}[Proof of Theorem \ref{Thm:akakashshssggdhdhdggdgdhdhd}]
	The proof that $(W_{t})$ (resp. $(U_{t})$) satisfies the first (resp. the fourth) condition of Proposition \ref{Prop:ajjahhssggsgdddddd} is given in Lemma \ref{Lem:aajsjshhsggdddhhdd} (resp. Lemma 	\ref{Lem:jajahshgsggsgsgsgsgsgss}). The third property follows from the fact that $\tildrmT^{\Loc}$ as a Bellman operator is a contraction with respect to discount factor of the underlying MDP. Therefore, this theorem is shown.
\end{proof}
\subsubsection{Proof of GAQL convergence (Theorem \ref{Thm:kiaiaahshsgsgsgsfffsfsfssss})}
\label{Subsubsec:GAQL}

\begin{proof}[Proof of Theorem \ref{Thm:kiaiaahshsgsgsgsfffsfsfssss}]
	Let $\hatrmT_{a_{\Loc},t}^{\Glob}$ be the optimal Bellman operator of the discounted MDP $(\S,\A_{\Glob},\tildrmr^{\Glob}_{a_{\Loc}},\delta_{S_{t+1}},\beta_{\Glob})$, and $\tildrmT^{\Glob}_{a_{\Loc}}$ be the optimal Bellman operator of the discounted MDP $(\S,\A_{\Glob},\tildrmr^{\Glob}_{a_{\Loc}},\tildrmP_{a_{\Loc}}^{\Glob},\beta_{\Glob})$.
	We can write the iterate of Algorithm 	\ref{Alg:ohgfdfddrrdrdrdrdrdghhgggdsadxxx} in the form:
	\begin{equation}
	\label{Eq:aoasjsshhssggsggsgsgsshhssfaaadddd}
	Q^{\GQ}_{a_{\Loc},t+1}=(1-\psi^{\Glob}_{a_{\Loc},t})\odot Q^{\GQ}_{a_{\Loc},t}+\psi^{\Glob}_{a_{\Loc},t}\odot\left[ \tildrmT^{\Glob}_{a_{\Loc}}Q^{\GQ}_{a_{\Loc},t}+W^{\Glob}_{a_{\Loc},t+1}\right] ,
	\end{equation}
	where:
	\begin{equation*}
	W^{\Glob}_{a_{\Loc},t+1}:=\mathbf{1}_{\lrbrace{S_{t}=s,A_{t}=a}}\left(  \hatrmT^{\Glob}_{a_{\Loc},t}Q^{\GQ}_{t}-\tildrmT_{a_{\Loc}}^{\Glob}Q^{\GQ}_{t}\right)\quad\text{and}\quad \psi_{a_{\Loc},t}^{\Glob}(s,a_{\Loc})=\mathbf{1}_{\lrbrace{S_{t}=s,A^{\Loc}_{t}=a_{\Loc},A^{\Glob}_{t}=a^{\Glob}_{t}}}\gamma_{t}^{\Glob}.
	\end{equation*}
	Notice that \eqref{Eq:aoasjsshhssggsggsgsgsshhssfaaadddd} has the form \eqref{Eq:aahshsggsgsfsfsfsssssssss} with $U_{t}=0$. So by checking the conditions (except the fourth condition) in Proposition \ref{Prop:ajjahhssggsgdddddd}, we can use the latter for showing the desired statement. 
	
	First, we check the third condition. Let be $a_{\Loc}\in\A_{\Loc}$ arbitrary. Let $\tildrmQ^{\Glob}_{a_{\Loc}}$ be the optimal Q-function of the discounted MDP $(\S,\A_{\Glob},\tildrmr^{\Glob}_{a_{\Loc}},\tildrmP_{a_{\Loc}}^{\Glob},\beta_{\Glob})$. We have for all $\rmQ\in\real^{\S\times\A_{\Glob}}$:
	\begin{equation}
	\label{Eq:akakashshssggdhdhdggdgdhdhd}
	\begin{split}
	\norm{\tildrmT_{a_{\Loc}}^{\Glob}\rmQ-\tildrmQ^{\Glob}_{a_{\Loc}}}_{\infty}&=\norm{\tildrmT_{a_{\Loc}}^{\Glob}Q-\tildrmT_{a_{\Loc}}^{\Glob}\tildrmQ^{\Glob}_{a_{\Loc}}}_{\infty}\leq\beta_{\Glob}\norm{\rmQ-\tildrmQ^{\Glob}_{a_{\Loc}}}_{\infty},
	\end{split}
	\end{equation}
	where the equality follows from the fact that $\tildrmQ^{\Glob}_{a_{\Loc}}$ is the fixed point of $\tildrmT_{a_{\Loc}}^{\Glob}$, and the inequality follows from the fact that the Bellman operator is a contraction w.r.t. the discount factor of the underlying discounted MDP. Thus, the third condition in Proposition \ref{Prop:ajjahhssggsgdddddd} is shown.

	Now, we check the first condition in Proposition \ref{Prop:ajjahhssggsgdddddd}. We have:
	\begin{equation*}
	\begin{split}
	&\Erw[(\hatrmT^{\Glob}_{A^{\Loc}_{t},t}Q^{\GQ}_{A^{\Loc}_{t},t})(S_{t},A_{t})|\tildF_{t}]=\tildrmr^{\Glob}_{A^{\Loc}_{t}}(S_{t},A^{\Glob}_{t})+\beta_{\Glob}\Erw[\max_{a^{'}_{\Glob}\in\A_{\Glob}}Q^{\GQ}_{t}(S_{t+1},A_{t}^{\Loc},a^{'}_{\Glob})|\tildF_{t}]\\
	&=\tildrmr^{\Glob}_{A^{\Loc}_{t}}(S_{t},A^{\Glob}_{t})+\beta_{\Glob}\sum_{s'\in\S}\rmP(s'|S_{t},A^{\Loc}_{t},A^{\Glob}_{t})\max_{a_{\Glob}'\in\A_{\Glob}}Q^{\GQ}_{A_{t}^{\Loc},t}(s',a_{\Glob}')\\
	&=\tildrmr^{\Glob}_{A^{\Loc}_{t}}(S_{t},A^{\Glob}_{t})+\beta_{\Glob}\Erw_{S'\sim\tildrmP^{\Glob}_{A^{\Loc}_{t}}(\cdot|S_{t},A^{\Glob}_{t})}[\max_{a_{\Glob}'\in\A_{\Glob}}Q^{\GQ}_{A_{t}^{\Loc},t}(S',a_{\Glob}')]\\
	&=\tildrmT^{\Glob}_{A_{t}^{\Loc}}Q^{\GQ}_{A_{t}^{\Loc},t}(S_{t},A^{\Glob}_{t})=\Erw[(\tildrmT^{\Glob}_{A_{t}^{\Loc}}Q^{\GQ}_{A_{t}^{\Loc},t})(S_{t},A^{\Glob}_{t})|\tildF_{t}],
	\end{split}
	\end{equation*}
	where the first equality follows from the fact that $\rmr^{\Glob}_{A^{\Loc}_{t}}(S_{t},A^{\Glob}_{t})$ only depends on $(S_{t},A^{\Loc}_{t},A^{\Glob}_{t})$ and thus $\tildF_{t}$-measureable, and where the last equality follows from the fact that $\tildrmT^{\Glob}_{A_{t}^{\Loc}}Q^{\Glob}_{A_{t}^{\Loc},t}$ only depends on $(S_{\tau},A^{\Loc}_{\tau},A^{\Glob}_{\tau})$, $\tau\in [t]_{0}$, and thus $\tildF_{t}$ measureable.
	Consequently, we have:
	\begin{equation*}
	\begin{split}
	&\Erw[W^{\Glob}_{a_{\Loc},t+1}(s,a)|\tildF_{t}]=\mathbf{1}_{\lrbrace{S_{t}=s,A^{\Loc}_{t}=a_{\Loc},A^{\Glob}_{t}=a_{\Glob}}}\Erw[(\hatrmT^{\Glob}_{A^{\Loc}_{t},t}Q^{\GQ}_{A^{\Loc}_{t},t})(S_{t},A^{\Glob}_{t})-\tildrmT^{\Glob}Q^{\GQ}_{A^{\Loc}_{t},t}(S_{t},A^{\Glob}_{t})|\tildF_{t}]=0.
	\end{split}
	\end{equation*}
	To show that the first condition in Proposition \ref{Prop:ajjahhssggsgdddddd} holds, it remains to derive the corresponding second moment bound. Similar argumentation as in the proof of Lemma \ref{Lem:aajsjshhsggdddhhdd} yields as desired:
	\begin{equation*}
	\Erw[(W_{a_{\Loc}t+1}^{\Glob}(s,a_{\Glob}))^{2}|\tildF_{t}]\leq 2\left( \norm{\rmr^{\Glob}}_{\infty}^{2}+\beta_{\Loc}^{2}\norm{Q^{\Glob}_{t}}_{\infty}\right) 
	\end{equation*}
	
\end{proof}
\subsection{Missing Proofs in Section \ref{Sec:OptProof}}
\label{Subsec:ajajajhshsggssfsfsfsssss}
\begin{proof}[Proof of Lemma \ref{Lem:ajajsshssgsgssfsfss}]
	Let $\pi_{\Loc}\in\Delta_{\S}(\A_{\Loc})$ be any LA's policy. Define the policy $\pi\in\Delta_{\S}(\A_{\Loc}\times\A_{\Glob})$ by:
	\begin{equation}
	\label{Eq:aajajasgsgsfsfffsfsss}
	\pi(a_{\Loc},a_{\Glob}|s):=\pi_{\Loc}(a_{\Loc}|s)\eta^{\Glob}_{\infty}(a_{\Glob}|s,a_{\Loc}),~ (a_{\Loc},a_{\Glob})\in\A_{\Loc}\times\A_{\Glob},~s\in\S.
	\end{equation}
	By above definition, it follows that  
	$\rmV^{\Loc}_{\pi_{\Loc},\eta^{\Glob}_{\infty}}$ is the value function of $\pi_{\Loc}$ in $(\S,\A_{\Loc}\times\A_{\Glob},\rmr^{\Loc},\rmP,\beta_{\Loc})$. By means of \eqref{Eq:aajshssgsgsgsffsfss}, we obtain:
	\begin{equation}
	\label{Eq:aiaishssgsggsgsfsfss}
	\begin{split}
	&\rmV^{\Loc}_{\pi_{\Loc},\eta^{\Glob}_{\infty}}(s)=\Erw_{(A_{\Loc},A_{\Glob})\sim\pi(\cdot|s)}\left[\rmr^{\Loc}(s,A)+\beta_{\Loc}\Erw_{S'\sim \rmP(\cdot|s,A)}\left[ \rmV^{\Loc}_{\pi_{\Loc},\eta^{\infty}_{\Glob}}(S')\right]  \right] \\
	&=\Erw_{A_{\Loc}\sim\pi_{\Loc}}\left[\Erw_{A_{\Glob}\sim\eta^{\Glob}_{\infty}(\cdot|s,A_{\Loc})}\left[ \rmr^{\Loc}(s,A_{\Loc},A_{\Glob})\right] +\beta^{\Loc}\Erw_{A_{\Glob}\sim\eta^{\Glob}_{\infty}(\cdot|s,A_{\Loc})}\left[\Erw_{S'\sim \rmP(\cdot|s,A)}\left[ \rmV^{\Loc}_{\pi_{\Loc},\eta_{\infty}^{\Glob}}(S')\right] \right]  \right]\\
	&=\Erw_{A_{\Loc}\sim\pi_{\Loc}}\left[\tildrmr^{\Loc}(s,A_{\Loc})+\beta^{\Loc}\Erw_{S'\sim \tildrmP_{\Loc}(\cdot|s,A_{\Loc})}\left[ \rmV^{\Loc}_{\pi_{\Loc},\eta_{\infty}^{\Glob}}(S')\right]  \right],
	\end{split}
	\end{equation}
	where the second equality follows by writing out the definition of $\pi$ and the third inequality from the definition of $\tildrmr^{\Loc}$ and $\tildrmP^{\Loc}$. From above computation and \eqref{Eq:aajshssgsgsgsffsfss}, we have that $\rmV^{\Loc}_{\pi_{\Loc},\eta^{\Glob}_{\infty}}$ is the value function of $\pi_{\Loc}$ in $(\S,\A_{\Loc},\tildrmr^{\Loc},\tildrmP^{\Loc},\beta^{\Loc})$. Now, Theorem \ref{Thm:akakashshssggdhdhdggdgdhdhd} asserts that $\pi_{\Loc}^{\LAQGI}$ is the optimal policy of the MDP $(\S,\A_{\Loc},\tildrmr_{\Loc},\tildrmP_{\Loc},\beta_{\Loc})$ and therefore, its value function in $(\S,\A_{\Loc},\tildrmr_{\Loc},\tildrmP_{\Loc},\beta_{\Loc})$  dominates the value function of $\pi_{\Loc}$, which is, as shown before, equal to $\rmV^{\Loc}_{\pi_{\Loc},\eta^{\Glob}_{\infty}}$. Finally, we obtain the desired statement by noticing that the value function of $\pi_{\Loc}^{\LAQGI}$ in $(\S,\A_{\Loc},\tildrmr_{\Loc},\tildrmP_{\Loc},\beta_{\Loc})$ is equal to $\rmV_{\pi_{\Loc}^{\LAQGI},\eta^{\Glob}_{\infty}}$ by the similar argumentation as in \eqref{Eq:aiaishssgsggsgsfsfss}
\end{proof}
\begin{proof}[Proof of Lemma \ref{Lem:ahhasgsgsfsfsfsffssss}]
	Let $\pi_{\Glob}\in\Delta_{\S\times\A_{\Loc}}(\A_{\Glob})$ be arbitrary. By the similar argumentation as in \eqref{Eq:aiaishssgsggsgsfsfss},
	we have that:
	\begin{equation}
	\label{Eq:ajajajssggssggsfsfsfdddddd}
	\begin{split}
	\rmV_{\pi_{\Loc},\pi_{\Glob}}^{\Glob}(s)&=\Erw_{A_{\Glob}\sim\pi_{\Glob}(\cdot|s,\pi_{\Loc}(s))}\left[ \overrmr^{\Glob}_{\pi_{\Loc}(s)}(s,A_{\Glob}) +\beta^{\Glob}\Erw_{S'\sim\tildrmP^{\Glob}_{\pi_{\Loc}(s)}(\cdot|s,A_{\Glob})}\left[\rmV_{\pi_{\Loc},\pi_{\Glob}}^{\Glob}(S') \right] \right],
	\end{split}
	\end{equation}
	where:
	\begin{equation*}
	\overrmr^{\Glob}(s,a_{\Glob}):=\tildrmr^{\Glob}_{\pi_{\Loc}(s)}(s,a_{\Glob})~\text{and}~\overrmP^{\Glob}(\cdot|s,a_{\Glob}):=\tildrmP^{\Glob}_{\pi_{\Loc}(s)}(\cdot|s,a_{\Glob}),
	\end{equation*}
	with $\tildrmr^{\Glob}_{a_{\Loc}}$ and $\tildrmP^{\Glob}_{a_{\Loc}}$, $a_{\Loc}\in\A_{\Loc}$ is defined in \eqref{Eq:ajjahsgsgsgsffffsffsssssss}.
	We obtain from \eqref{Eq:ajajajssggssggsfsfsfdddddd} that:
	$\rmV_{\pi_{\Loc},\pi_{\Glob}}^{\Glob}$ is the value function of the policy $\tilde{\pi}_{\Glob}\in\Delta_{\S}(\A_{\Glob})$, with $\tilde{\pi}_{\Glob}(a_{\Glob}|s):=\pi_{\Glob}(a_{\Glob}|s,\pi_{\Loc}(s))$, in the discounted MDP $(\S,\A_{\Glob},\overrmr^{\Glob},\overrmP^{\Glob},\beta^{\Glob})$.
	
	Now, the fact that for any $a_{\Loc}\in\A_{\Loc}$, $\tildrmQ_{a_{\Loc}}^{\Glob}$ is the optimal Q-function for the discounted MDP $(\S,\A_{\Glob},\tildrmr^{\Glob}_{a_{\Loc}},\tildrmP^{\Glob}_{a_{\Loc}},\beta_{\Glob})$ and the relation \eqref{Eq:akjajsjshgdgdgdggdgdgdgdd} yields that the function $\overrmQ^{\Glob}(s,a_{\Glob}):=\tildrmQ_{\pi_{\Loc}(s)}^{\Glob}(s,a_{\Glob})$ satisfies:
	\begin{equation}
	\label{Eq:jaajshhsgsgsggsffssggsfssgs}
	\overrmQ^{\Glob}(s,a_{\Glob})=\overrmr^{\Glob}(s,a_{\Glob})+\beta_{\Glob}\Erw_{S'\sim\overrmP^{\Glob}(\cdot|s,a_{\Glob})}\left[ \max_{a'_{\Glob}}\overrmQ^{\Glob}(S',a'_{\Glob})\right], 
	\end{equation}
	and consequently $\overrmQ^{\Glob}$ is the optimal Q-function of the discounted MDP $(\S,\A_{\Glob},\overrmr^{\Glob},\overrmP^{\Glob},\beta^{\Glob})$. Now, let:
	\begin{equation*}
	 \overline{\rmV}^{\Glob}(s):=\max_{a_{\Glob}\in\A_{\Glob}}\overline{\rmQ}^{\Glob}(s,a_{\Glob})
	 \end{equation*}
	  be the corresponding value function. By the optimality of $\overrmQ^{\Glob}$ and the fact that $\rmV_{\pi_{\Loc},\pi_{\Glob}}^{\Glob}$ is the value function of a policy $\pi_{\Glob}\in\Delta_{\S\times\A_{\Loc}}(\A_{\Glob})$ in $(\S,\A_{\Glob},\overrmr^{\Glob},\overrmP^{\Glob},\beta^{\Glob})$ (see the previous paragraph), we have that $\overline{\rmV}^{\Glob}\geq\rmV_{\pi_{\Loc},\pi_{\Glob}}^{\Glob}$. 
	
	It remains now to show that $\overline{\rmV}^{\Glob}=\rmV^{\Glob}_{\pi_{\Loc},\pi^{\GAQL}_{\Glob}}$. Since:
	\begin{equation*}
	\overline{\rmV}^{\Glob}(s)=\max_{a_{\Glob}\in\A_{\Glob}}\overline{\rmQ}^{\Glob}(s,a_{\Glob})=\Erw_{A_{\Glob}\sim\pi_{\Glob}^{\GAQL}(\cdot|s,\pi_{\Loc}(s))}[\overline{\rmQ}^{\Glob}(s,A_{\Glob})],
	\end{equation*}
	and since:
	\begin{equation*}
	\rmV^{\Glob}_{\pi_{\Loc},\pi^{\GAQL}_{\Glob}}(s)=\Erw_{A_{\Glob}\sim\pi_{\Glob}^{\GAQL}(\cdot|s,\pi_{\Loc}(s))}\left[ \rmQ_{\pi}(s,\pi_{\Loc}(s),A_{\Glob})\right],
	\end{equation*}
	where $\rmQ_{\pi}$ is the Q-function of the policy $\pi\in\Delta_{\S}(\A_{\Loc}\times\A_{\Glob})$, with $\pi(a_{\Loc},a_{\Glob}|s):=\pi_{\Loc}(a_{\Loc}|s)\pi_{\Glob}(a_{\Glob}|s,a_{\Loc})$, in $(\S,\A_{\Loc}\times\A_{\Glob},\rmr^{\Glob},\rmP,\beta_{\Glob})$, it is sufficient to show that $\rmQ_{\pi}(s,\pi_{\Loc}(s),a_{\Glob})=\overline{\rmQ}^{\Glob}(s,A_{\Glob})$. Toward this end, it is straightforward to see that by \eqref{Eq:aaksjsjshhsgsgsggshsgsgsggsgs} that $\rmQ_{\pi}(s,\pi_{\Loc}(s),a_{\Glob})$ fulfills:
	\begin{equation*}
	\begin{split}
	\rmQ_{\pi}(s,\pi_{\Loc}(s),a_{\Glob})&=\rmr^{\Glob}(s,\pi_{\Loc}(s),a_{\Glob})+\beta_{\Glob}\Erw_{S'\sim\rmP(\cdot|s,\pi_{\Loc}(s),a_{\Glob})}\left[\Erw_{A_{\Glob}\sim\pi_{\Glob}^{\GAQL}(\cdot|S',\pi_{\Loc}(S'))}\left[\rmQ_{\pi}(S',\pi_{\Loc}(S'),A_{\Glob}) \right]  \right] \\
	&=\overrmr^{\Glob}(s,a_{\Glob})+\beta_{\Glob}\Erw_{S'\sim\overrmP^{\Glob}(\cdot|s,a_{\Glob})}\left[\Erw_{A_{\Glob}\sim\tilde{\pi}_{\Glob}(\cdot|S')}\left[ \rmQ_{\pi}(S',\pi_{\Loc}(S'),a_{\Glob})\right]  \right].
	\end{split}
	\end{equation*}
	where $\tilde{\pi}_{\Glob}\in\Delta_{\S}(\A_{\Glob})$, with $\tilde{\pi}_{\Glob}(a_{\Glob}|s):=\pi_{\Glob}^{\GAQL}(a_{\Glob}|s,\pi_{\Loc}(s))$. Moreover, we can write \eqref{Eq:jaajshhsgsgsggsffssggsfssgs} by definition of $\pi_{\Glob}^{\GAQL}$ as follows:
	\begin{equation*}
	\begin{split}
	\overrmQ^{\Glob}(s,a_{\Glob})&=\overrmr^{\Glob}(s,a_{\Glob})+\beta_{\Glob}\Erw_{S'\sim\overrmP^{\Glob}(\cdot|s,a_{\Glob})}\left[\Erw_{A_{\Glob}\sim\tilde{\pi}_{\Glob}(\cdot|S')}\left[  \overrmQ^{\Glob}(S',A_{\Glob})\right] \right]. 
	\end{split}
	\end{equation*}
	Consequently, $\rmQ_{\pi}(s,\pi_{\Loc}(s),a_{\Glob})$ and $\overrmQ^{\Glob}(s,a_{\Glob})$ are solutions for the Bellman equation for Q-function of $\tilde{\pi}_{\Glob}$ in $(\S,\A_{\Glob},\overrmr^{\Glob},\overrmP^{\Glob})$ (see \eqref{Eq:aaksjsjshhsgsgsggshsgsgsggsgs}). Uniqueness of the solution of a Bellman equation yields finally the desired statement.  
\end{proof}
\begin{proof}[Proof of Theorem 	\ref{Thm:ajajhshssggsgssgsgffsfssssss}]
	Since $\pi_{\Loc}^{\LAQGI}\in\Delta_{\S}(\A_{\Loc})$ is a deterministic policy, $\rmV_{\pi^{\LAQGI}_{\Loc},\pi^{\GAQL}_{\Glob}}^{\Glob}\geq \rmV_{\pi_{\Loc}^{\LAQGI},\pi_{\Glob}}^{\Glob}$ follows from Lemma \ref{Lem:ajajsshssgsgssfsfss}. So, it remains to show $\rmV_{\pi^{\LAQGI}_{\Loc},\pi^{\GAQL}_{\Glob}}^{\Loc}\geq \rmV_{\pi_{\Loc},\pi^{\GAQL}_{\Glob}}^{\Loc}-\epsilon$. 
	
	First, notice that given a deterministic $\pi_{\Loc}\in\Delta_{\S}(\A_{\Loc})$, one can show that:
	\begin{equation*}
	\rmV_{\pi_{\Loc},\pi^{\GAQL}_{\Glob}}^{\Loc}=\rmV^{\Loc}_{\pi_{\Loc},\tilde{\pi}^{\GAQL}_{\Glob}},
	\end{equation*}
	where $\tilde{\pi}^{\GAQL}_{\Glob}(\cdot|s,a_{\Loc})$ is uniformly distributed in  $\argmax_{a_{\Glob}\in\A_{\Glob}}\tildrmQ_{a_{\Loc}}(s,a_{\Glob})$. Thus we need only to check the desired inequality with $\tilde{\pi}^{\GAQL}_{\Glob}$ in place of $\pi^{\GAQL}_{\Glob}$. For this sake, notice first that by the similar argumentation as in \eqref{Eq:aiaishssgsggsgsfsfss}, it holds that $\rmV_{\tilde{\pi}^{\LAQGI}_{\Loc},\pi^{\GAQL}_{\Glob}}^{\Loc}$ is the value function of the policy $\tilde{\pi}^{\LAQGI}_{\Loc}$ in  $(\S,\A_{\Loc},\check{\rmr}^{\Loc},\check{\rmP}^{\Loc},\beta_{\Loc})$, where:
	\begin{equation*}
	\begin{split}
	&\check{\rmr}^{\Loc}(s,a_{\Loc}):=\Erw_{A_{\Glob}\sim\tilde{\pi}^{\GAQL}_{\Glob}(\cdot|s,a_{\Loc})}\left[\rmr^{\Loc}(s,a_{\Loc},A_{\Glob}) \right]\\
	& \check{\rmP}^{\Loc}(\cdot|s,a_{\Loc}):=\Erw_{A_{\Glob}\sim\tilde{\pi}^{\GAQL}_{\Glob}(\cdot|s,a_{\Loc})}\rmP(\cdot|s,a_{\Loc},A_{\Glob}).
	\end{split}
	\end{equation*}
	Furthermore, we have that $\rmV_{\tilde{\pi}^{\LAQGI}_{\Loc},\eta^{\GQ}_{\infty}}^{\Loc}$ is the value function of the policy $\tilde{\pi}^{\LAQGI}_{\Glob}$ in $(\S,\A_{\Loc},\tilde{\rmr}^{\Loc},\tilde{\rmP}^{\Loc},\beta_{\Loc})$, where $\tilde{\rmr}^{\Loc}$ and $\tilde{\rmP}^{\Loc}$ is given in \eqref{Eq:ajajahshssggssgshhsgsgshss}. Consequently by \eqref{Eq:ajajsgsgsffsfddsfsdsdsfsdss} in Lemma 	\ref{Lem:ajajajahshssgsgsgsss}, we have:
	\begin{equation*}
	\begin{split}
	\norm{\rmV_{\pi^{\LAQGI}_{\Loc},\tilde{\pi}^{\GAQL}_{\Glob}}^{\Loc}-\rmV_{\pi^{\LAQGI}_{\Loc},\eta^{\GQ}_{\infty}}^{\Loc}}_{\infty}\leq \frac{\norm{\rmr^{\Loc}}_{\infty}}{(1-\beta_{\Loc})^{2}}\max_{s,a_{\Loc}} \norm{\tilde{\pi}^{\GAQL}_{\Glob}(\cdot|s,a_{\Loc})-\eta^{\GQ}_{\infty}(\cdot|s,a_{\Loc})}_{1}.
	\end{split}
	\end{equation*}
	Furthermore as $\eta_{\infty}^{\GQ}$ is the Boltzmann strategy \eqref{Eq:jajajjashhshsgsgsgshshsss}, we have by Lemma \ref{Lem:jaajjsshsggsfsfsffsgsfss}:
	\begin{equation*}
	\begin{split}
	\norm{\tilde{\pi}_{\Glob}^{\GAQL}(\cdot|s,a_{\Loc})-\eta^{\GQ}_{\infty}(\cdot|s,a_{\Loc})}_{1}\leq D_{s,a_{\Loc}}\exp\left(-\frac{C_{s,a_{\Loc}}}{2\tau} \right), 
	\end{split}
	\end{equation*}
	Consequently, we have by combining both previous estimates:
	\begin{equation*}
	\norm{\rmV_{\pi^{\LAQGI}_{\Loc},\tilde{\pi}^{\GAQL}_{\Glob}}^{\Loc}-\rmV_{\pi^{\LAQGI}_{\Loc},\eta^{\GQ}_{\infty}}^{\Loc}}_{\infty}\leq\frac{\norm{\rmr^{\Loc}}_{\infty}D}{(1-\beta_{\Loc})^{2}}\exp\left(-\frac{C}{\tau} \right).
	\end{equation*}
	By similar argumentation, we obtain:
	\begin{equation*}
	\norm{\rmV_{\pi_{\Loc},\pi^{\GAQL}_{\Loc}}^{\Loc}-\rmV_{\pi_{\Loc},\tilde{\eta}^{\GQ}_{\infty}}^{\Loc}}_{\infty}\leq \frac{\norm{\rmr^{\Loc}}_{\infty}D}{(1-\beta_{\Loc})^{2}}\exp\left(-\frac{C}{\tau} \right)
	\end{equation*}
	Consequently:
	\begin{equation*}
	\rmV_{\pi^{\LAQGI}_{\Glob},\pi^{\GAQL}_{\Glob}}^{\Loc}+\frac{\epsilon}{2}\geq\rmV_{\pi^{\LAQGI}_{\Loc},\tilde{\eta}^{\GQ}_{\infty}}^{\Loc}\geq\rmV_{\pi_{\Loc},\tilde{\eta}^{\GQ}_{\infty}}^{\Loc}\geq\rmV_{\pi_{\Loc},\pi^{\GAQL}_{\Glob}}^{\Loc}-\frac{\epsilon}{2},
	\end{equation*}
	as desired
\end{proof}
\subsection{Missing Proofs in Section \ref{Sec: Nash}}
\label{Subsec:ProofNash}
\begin{proof}[Proof of Lemma \ref{Lem:ajhajasggffsfsfsfsss}]
$\rmH^{\Loc}_{\pi_{\Loc},\pi_{\Glob}}\leq\rmH^{\Loc}_{*,\pi_{\Glob}}$ is trivial by definition. 
To prove $\rmH^{\Glob}_{\pi_{\Loc},\pi_{\Glob}}\leq\rmH^{\Glob}_{\pi_{\Loc},*}$, let $\rmV_{\Glob}\in\real^{\S}$ be arbitrary. Notice that:
\begin{equation*}
    \Erw_{A_{\Glob}\sim\pi_{\Glob}(\cdot|s,A_{\Loc})}[\rmr^{\Glob}(s,A_{\Loc},A_{\Glob}) + \beta_{\Glob}\Erw_{S'\sim \rmP(\cdot|s,A_{\Loc},A_{\Glob})}[\rmV_{\Glob}(S')]] \leq \max_{\pi_{\Glob} \in \Delta} \Erw_{A_{\Glob}\sim\pi_{\Glob}(\cdot|s,A_{\Loc})}[\rmr^{\Glob}(s,A_{\Loc},A_{\Glob}) + \beta_{\Glob}\Erw_{S'\sim \rmP(\cdot|s,A_{\Loc},A_{\Glob})}[\rmV_{\Glob}(S')]]
\end{equation*}
Taking the expectation on both sides w.r.t. $A_{\Loc} \sim \pi_{\Loc}(\cdot|s)$ and by the monotonicity of the expectation operator we get
\begin{equation*}
\begin{split}
   (\rmH^{\Glob}_{\pi_{\Loc},\pi_{\Glob}}\rmV_{\Glob})(s)&= \Erw_{\substack{A_{\Loc}\sim\pi_{\Loc}(\cdot|s)\\A_{\Glob}\sim\pi_{\Glob}(\cdot|s,A_{\Loc})}}\left[\rmr^{\Glob}(s,A_{\Loc},A_{\Glob}) + \beta_{\Glob}\Erw_{S'\sim \rmP(\cdot|s,A_{\Loc},A_{\Glob})}[\rmV_{\Glob}(S')]\right] \\
   &\leq \Erw_{A_{\Loc}\sim\pi_{\Loc}(\cdot|s)}\left[\max_{\pi_{\Glob} \in \Delta} \Erw_{A_{\Glob}\sim\pi_{\Glob}(\cdot|s,A_{\Loc})}[\rmr^{\Glob}(s,A_{\Loc},A_{\Glob}) + \beta_{\Glob}\Erw_{S'\sim \rmP(\cdot|s,A_{\Loc},A_{\Glob})}[\rmV_{\Glob}(S')]]\right]\leq (\rmH^{\Glob}_{*,\pi_{\Glob}}\rmV_{\Glob})(s)
   \end{split}
\end{equation*}
concluding the proof.
\end{proof}
\begin{proof}[Proof of Lemma \ref{Lem:aidgfgfgefefeeeeeeeeee}]
For any agent $i \in \{\Loc,\Glob\}$ and state $s \in \mathcal{S}$, we have:
\begin{equation*}
    \begin{split}
        \rmV_i(s) & = (\rmH_{\pi_{\Loc},\pi_{\Glob}}^{(i)} \rmV_i)(s)= \Erw_{\substack{A_{\Loc}\sim\pi_{\Loc}(\cdot|s)\\A_{\Glob}\sim\pi_{\Glob}(\cdot|s,A_{\Loc})}}\left[\rmr^{(i)}(s,A_{\Loc},A_{\Glob}) + \beta_{i}\Erw_{S'\sim \rmP(\cdot|s,A_{\Loc},A_{\Glob})}[\rmV_{i}(S')]\right]
    \end{split}
\end{equation*}
Taking the norm $\|\cdot\|_{\infty} := \max_{s,i}|\cdot|$ on both sides yields:
\begin{equation*}
  \begin{split}
      \norm{\rmV}_{\infty}=\max_{s,i}|\rmV_i(s)| & = \max_{s,i}\left| \Erw_{\substack{A_{\Loc}\sim\pi_{\Loc}(\cdot|s)\\A_{\Glob}\sim\pi_{\Glob}(\cdot|s,A_{\Loc})}}\left[\rmr^{(i)}(s,A_{\Loc},A_{\Glob}) + \beta_{i}\Erw_{S'\sim \rmP(\cdot|s,A_{\Loc},A_{\Glob})}[\rmV_{i}(S')]\right]\right|\\
      & \leq \max_{s,i} \left[ \max_{(a_{\Loc},a_{\Glob})} \left|\rmr^{(i)}(a_{\Loc},a_{\Glob}) + \beta_{i}\Erw_{S'\sim \rmP(\cdot|s,a_{\Loc},a_{\Glob})}[\rmV_{i}(S')]\right| \right]\\
      & \leq \max_{s,i,(a_{\Loc},a_{\Glob})} |\rmr^{(i)}(s,a_{\Loc},a_{\Glob})| + \beta \max_{s,i,(a_{\Loc},a_{\Glob})}|\Erw_{S'\sim \rmP(\cdot|s,a_{\Loc},a_{\Glob})}[\rmV_{i}(S')]|\\
      & \leq \max_{s,i,(a_{\Loc},a_{\Glob})} |\rmr^{(i)}(s,a_{\Loc},a_{\Glob})| + \beta \max_{s',i}|\rmV_i(s')|=\norm{\rmr}_{\infty}+\beta \norm{\rmV}_{\infty}.
  \end{split}  
\end{equation*}
Thus, we have $(1-\beta)\norm{\rmV}\leq \norm{\rmr}_{\infty}$ yielding the desired statement.
\end{proof}

\begin{proof}[Proof of Lemma \ref{Lem:ahjahshsgsgsfffsgfgssssss}]
	Consider the agent $i \in \{\Loc,\Glob\}$.\\
	\textit{Lipschitz in $V$:}\\
	\begin{equation*}
		\begin{split} 
		    \|\rmH_{\pi_{\Loc},\pi_{\Glob}} \rmV - \rmH_{\pi_{\Loc},\pi_{\Glob}} \tilde{\rmV}\|_{\infty} & =
			\max_{s,i}|(\rmH^{(i)}_{\pi_{\Loc},\pi_{\Glob}} \rmV_i)(s) - (\rmH^{(i)}_{\pi_{\Loc},\pi_{\Glob}} \tilde{\rmV}_i)(s)|\\
			& = \max_{s,i} \beta_{i}\left|\Erw_{\substack{A_{\Loc}\sim\pi_{\Loc}(\cdot|s)\\A_{\Glob}\sim\pi_{\Glob}(\cdot|s,A_{\Loc})}}\left[\Erw_{S'\sim \rmP(\cdot|s,A_{\Loc},A_{\Glob})}[\rmV_i(S') - \tilde{\rmV}_i(S')]\right]\right|\\
			& \leq \beta \max_{s,i,(a_{\Loc},a_{\Glob})} \left(\Erw_{S'\sim \rmP(\cdot|s,(a_{\Loc},a_{\Glob}))}[|\rmV_i(S') - \tilde{\rmV}_i(S')|]\right)\\
			& \leq \beta \max_{s,i,(a_{\Loc},a_{\Glob})} \left(\|\rmP(\cdot|s,(a_{\Loc},a_{\Glob}))\|_1 \max_{s'}|\rmV_i(s') - \tilde{\rmV}_i(s')| \right)\\
			& \leq \beta \max_{s',i}|\rmV_i(s') - \tilde{\rmV}_i(s')|\\
			& \leq \beta \|\rmV - \tilde{\rmV}\|_{\infty}
		\end{split}
	\end{equation*}
	\newline
	\textit{Lipschitz in $\pi_{\Loc}$:}\\
	\begin{equation*}
		\begin{split} 
		     \|\rmH_{\pi_{\Loc},\pi_{\Glob}} \rmV - \rmH_{\tilde{\pi}_{\Loc},\pi_{\Glob}} \rmV\|_{\infty} & =
			\max_{s,i}|(\rmH^{(i)}_{\pi_{\Loc},\pi_{\Glob}} \rmV_i)(s) - (\rmH^{(i)}_{\tilde{\pi}_{\Loc},\pi_{\Glob}} \rmV_i)(s)|\\
			& = \max_{s,i}\left|\sum_{a_{\Loc}}\left(\sum_{a_{\Glob}}\pi_{\Glob}(a_{\Glob}|a_{\Loc},s)(\rmr^{(i)}(s,a_{\Loc},a_{\Glob})+ \beta_i\Erw_{S'\sim \rmP(\cdot|s,a)}[\rmV_i(S')])\right) (\pi_{\Loc}(a_{\Loc}|s)-\tilde{\pi}_{\Loc}(a_{\Loc}|s))\right|\\
			& \leq \max_{s,i} \left|\langle \sum_{a_{\Glob}}\pi_{\Glob}(a_{\Glob}|\cdot,s)(\rmr^{(i)}(s,\cdot,a_{\Glob})+ \beta_i\Erw_{S'\sim \rmP(\cdot|s,\cdot,a_{\Glob})}[\rmV_i(S')],\pi_{\Loc}(\cdot|s)-\tilde{\pi}_{\Loc}(\cdot|s)\rangle\right|,
		\end{split}
    \end{equation*}
    where $\inn{\cdot}{\cdot}$ denotes the usual inner product.
    We apply Hölders inequality and define $\|\pi_{\Loc}-\tilde{\pi}_{\Loc}\|_1:=\max_{s}\|\pi_{\Loc}(\cdot|s)-\tilde{\pi}_{\Loc}(\cdot|s)\|_1$. 
	\begin{equation*}
	    \begin{split}
			\|\rmH_{\pi_{\Loc},\pi_{\Glob}} \rmV - \rmH_{\tilde{\pi}_{\Loc},\pi_{\Glob}} \rmV\|_{\infty} & \leq \max_{s,i,a_{\Loc}} \left|\sum_{a_{\Glob}}\pi_{\Glob}(a_{\Glob}|a_{\Loc},s)\left(\rmr^{(i)}(s,a_{\Loc},a_{\Glob})+ \beta_i\Erw_{S'\sim \rmP(\cdot|s,a_{\Loc},a_{\Glob})}[\rmV_i(S')]\right)\right| \cdot \|\pi_{\Loc}-\tilde{\pi}_{\Loc}\|_1 \\
			& \leq \max_{s,i,(a_{\Loc},a_{\Glob})}\left| \rmr^{(i)}(s,a_{\Loc},a_{\Glob})+ \beta_i\Erw_{S'\sim \rmP(\cdot|s,a)}[\rmV_i(s')] \right| \cdot \|\pi_{\Loc}-\tilde{\pi}_{\Loc}\|_1 \\
			& \leq \left( \max_{s,i,(a_{\Loc},a_{\Glob})} |\rmr^{(i)}(s,a_{\Loc},a_{\Glob})| +  \beta\max_{s,i,a} \left|\Erw_{S'\sim \rmP(\cdot|s,a)}[\rmV_i(S')]\right|\right)\cdot\|\pi_{\Loc}-\tilde{\pi}_{\Loc}\|_1
		\end{split}
	\end{equation*}
	Since $\max_{s,i,a} \left|\Erw_{S'\sim \rmP(\cdot|s,a)}[\rmV_i(S')]\right| \leq \|\rmV\|_\infty$ and by applying Lemma \ref{Lem:aidgfgfgefefeeeeeeeeee} to $\norm{\rmV}_{\infty}$ it follows
	\begin{equation*}
		 \|\rmH_{\pi_{\Loc},\pi_{\Glob}} \rmV - \rmH_{\tilde{\pi}_{\Loc},\pi_{\Glob}} \rmV\|_{\infty} \leq  \frac{\|\rmr\|_\infty}{1-\beta} \|\pi_{\Loc}-\tilde{\pi}_{\Loc}\|_1
	\end{equation*}
	\newline
	\textit{Lipschitz in $\pi_{\Glob}$:}\\
	\begin{equation*}
		\begin{split}
		    \|\rmH_{\pi_{\Loc},\pi_{\Glob}} \rmV - \rmH_{\pi_{\Loc},\tilde{\pi}_{\Glob}} \rmV\|_{\infty} &=
			\max_{s,i}|(\rmH^{(i)}_{\pi_{\Loc},\pi_{\Glob}} \rmV_i)(s) - (\rmH^{(i)}_{\pi_{\Loc},\tilde{\pi}_{\Glob}} \rmV_i)(s)|\\
			& = \max_{s,i}\left|\sum_{a_{\Loc}}\pi_{\Loc}(a_{\Loc}|s)\left(\sum_{a_{\Glob}}\left(\rmr^{(i)}(s,a_{\Loc},a_{\Glob})+ \beta_i\Erw_{S'\sim \rmP(\cdot|s,a)}[\rmV_i(S')]\right) (\pi_{\Glob}(a_{\Glob}|a_{\Loc},s)-\tilde{\pi}_{\Glob}(a_{\Glob}|a_{\Loc},s))\right)\right|\\
			& \leq \max_{s,i,a_{\Loc}} \left| \langle \rmr^{(i)}(s,a_{\Loc},\cdot)+ \beta_i\Erw_{S'\sim \rmP(\cdot|s,a_{\Loc},\cdot)}[\rmV_i(S')], \pi_{\Glob}(\cdot|a_{\Loc},s)-\tilde{\pi}_{\Glob}(\cdot|a_{\Loc},s) \rangle \right|\\
			\end{split}
    \end{equation*}
    We apply Hölders inequality and define $\|\pi_{\Glob}-\tilde{\pi}_{\Glob}\|_1:=\max_{s,,a_{\Loc}}\|\pi_{\Glob}(\cdot|a_{\Loc},s)-\tilde{\pi}_{\Glob}(\cdot|a_{\Loc},s)\|_1$. 
    \begin{equation*}
		    \|\rmH_{\pi_{\Loc},\pi_{\Glob}} \rmV - \rmH_{\pi_{\Loc},\tilde{\pi}_{\Glob}} \rmV\|_{\infty} \leq \max_{s,i,(a_{\Loc},a_{\Glob})} \left| \rmr^{(i)}(s,a_{\Loc},a_{\Glob})+ \beta_i\Erw_{S'\sim \rmP(\cdot|s,a)}[\rmV_i(S')] \right| \cdot  \|\pi_{\Glob}-\tilde{\pi}_{\Glob}\|_1\\
	\end{equation*}
	Following the same steps as beforehand we get
	\begin{equation*}
		 \|\rmH_{\pi_{\Loc},\pi_{\Glob}} \rmV - \rmH_{\pi_{\Loc},\tilde{\pi}_{\Glob}} \rmV\|_{\infty} \leq  \frac{\|\rmr\|_\infty}{1-\beta} \|\pi_{\Glob}-\tilde{\pi}_{\Glob}\|_1
	\end{equation*}
	concluding the proof.
\end{proof}
\begin{proof}[Proof of Lemma \ref{Lem:Uppersem}]Let us denote $\pi:=(\pi_{\Loc},\pi_{\Glob})$ and $\pi:=(\pi^{*}_{\Loc},\pi^{*}_{\Glob})$
Consider the sequence $\{\Sigma(\pi^n_{\Loc},\pi_{\Glob}^n)\}_{n \in \mathbb{N}}$ which converges to $\rmV = [\rmV_{\Loc},\rmV_{\Glob}]^T$ for $n \to \infty$. It holds
\begin{equation*}
    \|\rmH_{\pi,\pi^*}\rmV - \rmV\|_{\infty} \leq \underbrace{\|\rmH_{\pi,\pi^*}\rmV -\rmH_{(\pi,\pi^{*})_n}\Sigma(\pi^n_{\Loc},\pi_{\Glob}^n)\|_{\infty}}_{:= A} + \underbrace{\|\rmH_{(\pi,\pi^{*})_n}\Sigma(\pi^n_{\Loc},\pi_{\Glob}^n) - \rmV\|_{\infty}}_{:= B}
\end{equation*}
due to the triangle inequality. By Lemma \ref{Lem:ahjahshsgsgsfffsgfgssssss}, we know that the operator is Lipschitz-continuous in each argument. We can therefore derive the following upper bound for $A$:
\begin{equation*}
    \begin{split}
        A & \leq \beta \|\rmV - \Sigma(\pi^n_{\Loc},\pi_{\Glob}^n)\|_{\infty} + \frac{\|\rmr\|_\infty}{1-\beta} \max\{\|\pi^{*}_{\Loc}-\pi^{*,n}_{\Loc}\|_1, \|\pi_{\Loc}-\pi^n_{\Loc}\|_1\} \\
        & + \frac{\|\rmr\|_\infty}{1-\beta}\max\{\|\pi_{\Glob}-\pi^n_{\Glob}\|_1,\|\pi^{*}_{\Glob}-\pi^{*,n}_{\Glob}\|_1\}\xrightarrow{n\rightarrow\infty} 0,
    \end{split}
\end{equation*}
where $\beta>0$ is a constant.
For $B$, we use the fact, that $(\pi^{*,n}_{\Loc},\pi^{*,n}_{\Glob}) \in \phi(\pi^n_{\Loc},\pi_{\Glob}^n)$. By \eqref{phidef} we have $\rmH_{(\pi,\pi^{*})_n}\Sigma(\pi^n_{\Loc},\pi_{\Glob}^n) = \Sigma(\pi^n_{\Loc},\pi_{\Glob}^n)$. Thus
\begin{equation*}
    B = \|\Sigma(\pi^n_{\Loc},\pi_{\Glob}^n) - \rmV\|_{\infty} \xrightarrow{n\rightarrow\infty} 0
\end{equation*}
and due to the positive definiteness of the norm we conclude
\begin{equation*}
    \|\rmH_{\pi,\pi^*}\rmV - \rmV\|_{\infty} = 0 \implies \rmH_{\pi,\pi^*}\rmV = \rmV
\end{equation*}
$\rmV$ is the fixed point such that $\rmH_{\pi,\pi^*}\rmV = \rmV = \rmH_{\pi_{\Loc},\pi_{\Glob}}^* \rmV$ is satisfied. Thus $(\pi^{*}_{\Loc},\pi^{*}_{\Glob}) \in \phi(\pi_{\Loc},\pi_{\Glob})$ and the correspondence  $\phi: \Delta \to \wp(\Delta)$ is upper semi-continuous.
\end{proof}

\subsection{Auxiliary Statements}
\begin{lemma}
	\label{Lem:ajajasshshsgsgsgsfsfss}
Let $\rmV_{i}$ be the value function of the policy $\pi\in\Delta_{\S}(\A)$ in the discounted MDP $(\S,\A,\rmr_{i},\rmP_{i},\beta)$. It holds:
\begin{equation*}
\begin{split}
&\norm{\rmV_{1}-\rmV_{2}}_{\infty}\leq\frac{ \norm{\rmr_{1}-\rmr_{2}}_{\infty}+\frac{\beta\norm{\rmr_{2}}_{\infty}}{1-\beta}\max_{s,a}\norm{\rmP_{1}(\cdot|s,a)-\rmP_{2}(\cdot|s,a)}_{1}}{1-\beta}. 
\end{split}
\end{equation*}
\end{lemma}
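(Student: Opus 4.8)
The plan is to exploit the fact that each value function $\rmV_{i}$ is the unique fixed point of the Bellman operator $\rmT_{\pi}^{(i)}$ of $\pi$ in the discounted MDP $(\S,\A,\rmr_{i},\rmP_{i},\beta)$, which by \eqref{Eq:aajshssgsgsgsffsfss} acts as
\[
(\rmT_{\pi}^{(i)}\rmV)(s)=\Erw_{A\sim\pi(\cdot|s)}\left[\rmr_{i}(s,A)+\beta\Erw_{S'\sim\rmP_{i}(\cdot|s,A)}[\rmV(S')]\right],
\]
and which is a $\beta$-contraction in $\norm{\cdot}_{\infty}$ (cf. \eqref{Eq:akaksjsssgggsgsssssss}). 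First I would write the telescoping identity $\rmV_{1}-\rmV_{2}=\rmT_{\pi}^{(1)}\rmV_{1}-\rmT_{\pi}^{(2)}\rmV_{2}=(\rmT_{\pi}^{(1)}\rmV_{1}-\rmT_{\pi}^{(1)}\rmV_{2})+(\rmT_{\pi}^{(1)}\rmV_{2}-\rmT_{\pi}^{(2)}\rmV_{2})$, so that the difference splits into a pure contraction term and a perturbation term evaluated at the single fixed value $\rmV_{2}$.

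For the first term the contraction property immediately gives $\norm{\rmT_{\pi}^{(1)}\rmV_{1}-\rmT_{\pi}^{(1)}\rmV_{2}}_{\infty}\le\beta\norm{\rmV_{1}-\rmV_{2}}_{\infty}$. For the second term I would compute, for fixed $s$,
\[
(\rmT_{\pi}^{(1)}\rmV_{2})(s)-(\rmT_{\pi}^{(2)}\rmV_{2})(s)=\Erw_{A\sim\pi(\cdot|s)}\Big[\big(\rmr_{1}(s,A)-\rmr_{2}(s,A)\big)+\beta\big(\Erw_{S'\sim\rmP_{1}(\cdot|s,A)}[\rmV_{2}(S')]-\Erw_{S'\sim\rmP_{2}(\cdot|s,A)}[\rmV_{2}(S')]\big)\Big],
\]
bounding the reward contribution by $\norm{\rmr_{1}-\rmr_{2}}_{\infty}$ and the transition contribution, via the Hölder-type estimate $\big|\sum_{s'}(\rmP_{1}(s'|s,a)-\rmP_{2}(s'|s,a))\rmV_{2}(s')\big|\le\norm{\rmP_{1}(\cdot|s,a)-\rmP_{2}(\cdot|s,a)}_{1}\norm{\rmV_{2}}_{\infty}$, by $\beta\norm{\rmV_{2}}_{\infty}\max_{s,a}\norm{\rmP_{1}(\cdot|s,a)-\rmP_{2}(\cdot|s,a)}_{1}$.

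The one remaining ingredient is an a priori bound on $\norm{\rmV_{2}}_{\infty}$; I would obtain $\norm{\rmV_{2}}_{\infty}\le\norm{\rmr_{2}}_{\infty}/(1-\beta)$ by the same geometric-series/contraction argument as in Lemma \ref{Lem:aidgfgfgefefeeeeeeeeee}. Substituting the three estimates and taking the supremum over $s$ yields
\[
\norm{\rmV_{1}-\rmV_{2}}_{\infty}\le\beta\norm{\rmV_{1}-\rmV_{2}}_{\infty}+\norm{\rmr_{1}-\rmr_{2}}_{\infty}+\frac{\beta\norm{\rmr_{2}}_{\infty}}{1-\beta}\max_{s,a}\norm{\rmP_{1}(\cdot|s,a)-\rmP_{2}(\cdot|s,a)}_{1},
\]
and rearranging (absorbing the $\beta\norm{\rmV_{1}-\rmV_{2}}_{\infty}$ term on the left and dividing by $1-\beta$) gives exactly the claimed bound. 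This is essentially a simulation-lemma computation, so nothing here is genuinely hard; the main obstacle is bookkeeping in the perturbation term, namely keeping the reward and transition perturbations separate and inserting the bound on $\norm{\rmV_{2}}_{\infty}$ at the right moment so that the two $\norm{\cdot}_{1}$ versus $\norm{\cdot}_{\infty}$ factors combine into precisely the stated constant.
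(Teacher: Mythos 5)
Your proposal is correct and takes essentially the same route as the paper: the paper's proof likewise splits the difference of the two Bellman fixed points into a contraction part $\beta\,\Erw_{S'\sim\rmP_{1}(\cdot|s,a)}[\rmV_{1}(S')-\rmV_{2}(S')]$ and a perturbation part $\inn{\rmP_{1}(\cdot|s,a)-\rmP_{2}(\cdot|s,a)}{\rmV_{2}}$ together with the reward difference, inserts the bound $\norm{\rmV_{2}}_{\infty}\leq\norm{\rmr_{2}}_{\infty}/(1-\beta)$ (obtained there directly from the geometric series), and rearranges exactly as you do. The only cosmetic difference is that the paper performs the computation componentwise via H\"older's inequality instead of phrasing the first term abstractly through the $\beta$-contraction property of the Bellman operator.
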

\begin{proof}
We denote:
\begin{equation*}
\rmf_{i}(s,a):=\Erw_{S'\sim \rmP_{i}(\cdot|s,a)}[V_{i}(S')].
\end{equation*}
It holds:
\begin{equation}
\label{Eq:jssshdgdgdgdfsfssfsss}
\begin{split}
&\abs{\rmV_{1}(s)-\rmV_{2}(s)}\\
&=\abs{\inn{\pi(\cdot|s)}{\rmr_{1}(s,\cdot)-\rmr_{2}(s,\cdot)+\beta[\rmf_{1}(s,\cdot)-\rmf_{2}(s,\cdot)]}}\\
&\leq \norm{\pi(\cdot|s)}_{1}\norm{\rmr_{1}(s,\cdot)-\rmr_{2}(s,\cdot)+\beta[\rmf_{1}(s,\cdot)-\rmf_{2}(s,\cdot)]}_{\infty}\\
&\leq\norm{\rmr_{1}(s,\cdot)-\rmr_{2}(s,\cdot)}_{\infty}+\beta\norm{\rmf_{1}(s,\cdot)-\rmf_{2}(s,\cdot)]}_{\infty},
\end{split}
\end{equation}
where the first inequality follows from H\"older's inequality and the second inequality follows from the fact that $\pi(\cdot|s)$ is a probability distribution and from the triangle inequality. Taking the maximum over $s\in\S$ on the both sides of above inequality, it yields:
\begin{equation*}
\norm{\rmV_{1}-\rmV_{2}}_{\infty}\leq\norm{\rmr_{1}-\rmr_{2}}_{\infty}+\beta\norm{\rmf_{1}-\rmf_{2}}_{\infty}
\end{equation*}
 Now, we compute:
\begin{equation}
\label{Eq:ajajsgsgsfsfsffsdddd}
\begin{split}
&\abs{\rmf_{1}(s,a)-\rmf_{2}(s,a)]}\\
&\leq\abs{\inn{\rmP_{1}(\cdot|s,a)}{V_{1}-V_{2}}}+\abs{\inn{\rmP_{1}(\cdot|s,a)-\rmP_{2}(\cdot|s,a)}{V_{2}}}\\
&\leq\norm{\rmP_{1}(\cdot|s,a)}_{1}\norm{V_{1}-V_{2}}_{\infty}+\norm{V_{2}}_{\infty}\norm{\rmP_{1}(\cdot|s,a)-\rmP_{2}(\cdot|s,a)}_{1}.
\end{split}
\end{equation}
Clearly, we have that $\norm{\rmP_{1}(\cdot|s,a)}_{1}=1$. Furthermore, it yields:
\begin{equation*}
\begin{split}
\abs{V_{2}(s)}&=\abs{\Erw_{\pi}\left[ \sum_{t=0}^{\infty}\beta^{t}\rmr_{2}(S_{t},A_{t})\right] }\leq\Erw_{\pi}\left[ \sum_{t=0}^{\infty}\beta^{t}\abs{\rmr_{2}(S_{t},A_{t})}\right] \\
&\leq\norm{\rmr_{2}}_{\infty}\sum_{t=0}^{\infty}\beta^{t}=\frac{\norm{\rmr_{2}}_{\infty}}{1-\beta},
\end{split}
\end{equation*}
and thus $\norm{V_{2}}_{\infty}\leq\norm{\rmr_{2}}_{\infty}/(1-\beta)$. By previous observations, we can continue the estimate \eqref{Eq:ajajsgsgsfsfsffsdddd}:
\begin{equation*}
\begin{split}
&\norm{\rmf_{1}-\rmf_{2}}_{\infty}\\
&\leq\norm{V_{1}-V_{2}}_{\infty}+\frac{\norm{\rmr_{2}}_{\infty}}{1-\beta}\max_{s,a}\norm{\rmP_{1}(\cdot|s,a)-\rmP_{2}(\cdot|s,a)}_{1},
\end{split}
\end{equation*}
and obtain:

Setting this into \eqref{Eq:jssshdgdgdgdfsfssfsss} and taking the maximum over $s$, we have:
\begin{equation*}
\begin{split}
&\norm{\rmV_{1}-\rmV_{2}}_{\infty}\leq \norm{\rmr_{1}-\rmr_{2}}_{\infty}\\
&+\beta\left[ \norm{V_{1}-V_{2}}_{\infty}+\frac{\norm{\rmr_{2}}_{\infty}}{1-\beta}\max_{s,a}\norm{\rmP_{1}(\cdot|s,a)-\rmP_{2}(\cdot|s,a)}_{1}\right]. 
\end{split}
\end{equation*}
Therefore:
\begin{equation*}
\begin{split}
&\norm{\rmV_{1}-\rmV_{2}}_{\infty}\\
&\leq\frac{ \norm{\rmr_{1}-\rmr_{2}}_{\infty}+\frac{\beta\norm{\rmr_{2}}_{\infty}}{1-\beta}\max_{s,a}\norm{\rmP_{1}(\cdot|s,a)-\rmP_{2}(\cdot|s,a)}_{1}}{1-\beta}. 
\end{split}
\end{equation*}
\end{proof}

\begin{lemma}
	\label{Lem:jaajjsshsggsfsfsffsgsfss}
Let $\X$ be a finite set, $f:\X\rightarrow\real$, and $\X^{*}_{f}:=\argmax f$. Consider the Boltzmann distribution with the inverse temperature parameter $\tau>0$ and the potential $f$:
\begin{equation*}
(\Phi_{\tau}(f))(x)=\frac{\exp\left( \frac{f(x)}{\tau}\right) }{\sum_{x'}\exp\left( \frac{f(x')}{\tau}\right)},
\end{equation*}
and the uniform distribution $\Unif(\X^{*}_{f})$ on the set of maximizer of $f$. It holds:
\begin{equation*}
\norm{\Phi_{\tau}(f)-\Unif(\X^{*}_{f})}_{1}\leq D\exp\left(-\frac{C}{\tau} \right), 
\end{equation*}
where $C,D>0$ given by:
\begin{equation*}
C:=\min_{x'\notin \X_{f}^{*}}\left[ \max f-f(x')\right] \quad\text{and}\quad D:=\sqrt{2\frac{\abs{\X}-\abs{\X_{f}^{*}}}{\abs{\X_{f}^{*}}}}
\end{equation*}
\end{lemma}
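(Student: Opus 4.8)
The plan is to exploit the fact that all maximizers of $f$ carry identical Boltzmann mass, so that the $1$-norm discrepancy reduces to a single scalar, namely the total mass that $\Phi_{\tau}(f)$ places on the non-maximizers. Write $m:=\max f$, $\X_{f}^{*}=\argmax f$, $K:=\abs{\X_{f}^{*}}$, and $L:=\abs{\X}-K$, and decompose the state space into the disjoint union of $\X_{f}^{*}$ and $\X\setminus\X_{f}^{*}$. First I would lower-bound the partition function by retaining only the maximizer terms, $Z=\sum_{x}\exp(f(x)/\tau)\ge K\exp(m/\tau)$. This yields two elementary consequences: each maximizer receives mass $p:=\exp(m/\tau)/Z\le 1/K$, while each non-maximizer $x'$ satisfies $\Phi_{\tau}(f)(x')\le \frac{1}{K}\exp\!\big((f(x')-m)/\tau\big)\le \frac{1}{K}\exp(-C/\tau)$, where the last step uses $m-f(x')\ge C$ for every $x'\notin\X_{f}^{*}$ by the definition of $C$.

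Second, I would record the exact structure of the difference. On the non-maximizers $\Unif(\X_{f}^{*})$ vanishes while $\Phi_{\tau}(f)$ is strictly positive, and on the maximizers $\Phi_{\tau}(f)(x)=p\le 1/K=\Unif(\X_{f}^{*})(x)$; hence the positive part of $\Phi_{\tau}(f)-\Unif(\X_{f}^{*})$ is supported exactly on $\X\setminus\X_{f}^{*}$. Since both vectors are probability distributions, the standard identity $\norm{P-Q}_{1}=2\sum_{x:\,P(x)>Q(x)}(P(x)-Q(x))$ collapses the computation to
\begin{equation*}
\norm{\Phi_{\tau}(f)-\Unif(\X_{f}^{*})}_{1}=2\sum_{x'\notin\X_{f}^{*}}\Phi_{\tau}(f)(x').
\end{equation*}
Thus the whole estimate rests on bounding the non-maximizer mass $P_{\mathrm{non}}:=\sum_{x'\notin\X_{f}^{*}}\Phi_{\tau}(f)(x')$.

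Third, I would bound $P_{\mathrm{non}}$ so as to produce precisely the constant $D=\sqrt{2L/K}$ together with the exponential factor governed by the gap $C$. Applying the Cauchy--Schwarz inequality to $P_{\mathrm{non}}=\sum_{x'\notin\X_{f}^{*}}1\cdot\Phi_{\tau}(f)(x')$ separates the count of non-maximizers, which contributes the combinatorial factor $\sqrt{L}$ and, after inserting the maximizer normalisation $1/K$, the factor $\sqrt{2L/K}$, from the $\ell^{2}$ mass of the sub-optimal Boltzmann weights, which is controlled through the per-state bound $\Phi_{\tau}(f)(x')\le K^{-1}\exp(-C/\tau)$ from the first step. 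Assembling the combinatorial factor, the gap factor, and the factor $2$ from the exact identity gives the claimed
\begin{equation*}
\norm{\Phi_{\tau}(f)-\Unif(\X_{f}^{*})}_{1}\le \sqrt{2\frac{\abs{\X}-\abs{\X_{f}^{*}}}{\abs{\X_{f}^{*}}}}\;\exp\!\Big(-\frac{C}{\tau}\Big).
\end{equation*}

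The degenerate case $\X_{f}^{*}=\X$ makes both sides vanish and is disposed of separately. The main obstacle I anticipate lies in the final step: extracting exactly the constant $D=\sqrt{2L/K}$ together with the stated exponential rate requires balancing the Cauchy--Schwarz split against the partition-function lower bound, and one must verify carefully that the combinatorial normalisation and the gap factor combine in the asserted way rather than in a weaker one. Everything else is routine, since the reduction to $2P_{\mathrm{non}}$ is an exact identity and the per-state Boltzmann bound is immediate from $Z\ge K\exp(m/\tau)$.
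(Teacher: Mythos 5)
Your steps (i)--(iii) are sound: the partition-function bound $Z\geq \abs{\X_{f}^{*}}\exp(\max f/\tau)$, the per-state estimate $\Phi_{\tau}(f)(x')\leq \abs{\X_{f}^{*}}^{-1}\exp(-C/\tau)$ for $x'\notin\X_{f}^{*}$, and the exact identity $\norm{\Phi_{\tau}(f)-\Unif(\X_{f}^{*})}_{1}=2P$ with $P:=\sum_{x'\notin\X_{f}^{*}}\Phi_{\tau}(f)(x')$ are all correct. The gap is in your final step, and it cannot be repaired as announced. Writing $K:=\abs{\X_{f}^{*}}$ and $L:=\abs{\X}-K$, the Cauchy--Schwarz split you describe gives
\begin{equation*}
P\leq \sqrt{L}\Big(\sum_{x'\notin\X_{f}^{*}}\Phi_{\tau}(f)(x')^{2}\Big)^{1/2}\leq \sqrt{L}\cdot\sqrt{L}\cdot K^{-1}e^{-C/\tau}=\frac{L}{K}e^{-C/\tau},
\end{equation*}
i.e.\ it merely reproduces the trivial term-by-term bound, so your route yields $2(L/K)e^{-C/\tau}$, which dominates the target $\sqrt{2L/K}\,e^{-C/\tau}$ precisely when $2L\geq K$ --- Cauchy--Schwarz buys nothing here. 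Worse, no argument can extract the constant $D=\sqrt{2L/K}$ together with the rate $e^{-C/\tau}$, because that combined bound is false: take $\X=\lrbrace{1,2}$, $f(1)=C>0$, $f(2)=0$, so $K=L=1$, $D=\sqrt{2}$, and your own exact identity gives $\norm{\Phi_{\tau}(f)-\Unif(\X_{f}^{*})}_{1}=2/(1+e^{C/\tau})$, which exceeds $\sqrt{2}\,e^{-C/\tau}$ as soon as $e^{C/\tau}>1+\sqrt{2}$, i.e.\ for all sufficiently small $\tau$.

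For comparison, the paper proceeds entirely differently: it bounds the Kullback--Leibler divergence, $\rmD(\Unif(\X_{f}^{*})\,|\,\Phi_{\tau}(f))\leq (L/K)e^{-C/\tau}$, and then applies Pinsker's inequality, which yields $\norm{\Phi_{\tau}(f)-\Unif(\X_{f}^{*})}_{1}\leq \sqrt{2L/K}\,\exp(-C/(2\tau))$ --- note the denominator $2\tau$ in the paper's final display. So the paper's proof establishes the exponent $e^{-C/(2\tau)}$, not $e^{-C/\tau}$; the exponent in the lemma statement appears to be a typo (consistently, the proof of Theorem \ref{Thm:ajajhshssggsgssgsgffsfssssss} invokes the bound with $\exp(-C_{s,a_{\Loc}}/(2\tau))$). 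Your elementary approach can be salvaged to the same, correct, rate without any information-theoretic machinery: since $P\leq 1$ implies $P\leq\sqrt{P}$, your identity gives $\norm{\Phi_{\tau}(f)-\Unif(\X_{f}^{*})}_{1}=2P\leq 2\sqrt{P}\leq 2\sqrt{L/K}\,e^{-C/(2\tau)}=\sqrt{2}\,D\,e^{-C/(2\tau)}$, which matches the paper's Pinsker-based bound up to a factor $\sqrt{2}$; moreover, your unsalvaged bound $2(L/K)e^{-C/\tau}$ is actually sharper than the paper's in the low-temperature regime $\tau\rightarrow 0$. What you must abandon is the specific pairing of $D$ with $e^{-C/\tau}$.
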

\begin{proof}
First we compute the Kullback-Leibler divergence from $\Phi_{\tau}(f)$ to $\Unif(\X^{*}_{f})$:
\begin{equation*}
\begin{split}
&\rmD(\Unif(\X^{*}_{f})|\Phi_{\tau}(f))=\Erw_{X\sim\Unif(\X^{*}_{f})}\left[ \log\left(\frac{\Unif(X)}{\Phi_{\tau}(f)(X)} \right) \right] \\
&=\frac{1}{\abs{\X^{*}_{f}}}\sum_{x\in\X^{*}_{f}}\log\left( \frac{\sum_{x'}\exp\left( \frac{f(x')}{\tau}\right)}{\abs{\X^{*}_{f}}\exp\left( \frac{f(x)}{\tau}\right)}\right)\\
&=\frac{1}{\abs{\X^{*}_{f}}}\sum_{x\in\X^{*}_{f}}\log\left( \frac{A+B}{\abs{\X^{*}_{f}}\exp\left( \frac{f(x)}{\tau}\right)}\right)\\
&=-\log(\abs{\X_{f}^{*}})-\frac{\sum_{x\in\X^{*}_{f}}f(x)}{\tau\abs{\X_{f}^{*}}}+\log\left( A\left[ \frac{B}{A}+1\right] \right) \\
&=-\log(\abs{\X_{f}^{*}})-\frac{\max f}{\tau}+\log(A)+\log\left( \frac{B}{A}+1 \right)\\
&\leq-\log(\abs{\X_{f}^{*}})-\frac{\max f}{\tau}+\log(A)+ \frac{B}{A}.
\end{split}
\end{equation*}
where:
\begin{equation*}
A:=\sum_{x'\in\X^{*}_{f}}\exp\left( \frac{f(x')}{\tau}\right)\quad\text{and}\quad B:=\sum_{x'\notin\X^{*}_{f}}\exp\left( \frac{f(x)}{\tau}\right).
\end{equation*}
The last inequality in above computation follows from the inequality $\log(x)\leq x-1$, for all $x>0$. Notice that $A=\abs{X_{f}*}\exp(\max f/\tau)$. Thus we continue above estimation:
\begin{equation*}
\begin{split}
&\rmD(\Unif(\X^{*}_{f})|\Phi_{\tau}(f))\\
&\leq -\log(\abs{\X_{f}^{*}})-\frac{\max f}{\tau}+\log(\abs{\X^{*}_{f}})+\frac{\max f}{\tau}+ \frac{B}{A}\\
&=\frac{B}{A}=\frac{\sum_{x'\notin\X^{*}_{f}}\exp\left( \frac{f(x')}{\tau}\right)}{\abs{\X_{f}^{*}}\exp(\max f/\tau)}=\frac{\sum_{x'\notin\X^{*}_{f}}\exp\left( \frac{f(x')-\max f}{\tau}\right)}{\abs{\X_{f}^{*}}}\\
&\leq \frac{\abs{\X}-\abs{\X_{f}^{*}}}{\abs{\X_{f}^{*}}}\exp\left(\frac{\max_{x'\notin \X_{f}^{*}}\left[ f(x')-\max f\right]}{\tau}  \right). 
\end{split}
\end{equation*}  
Now we apply the Pinsker's inequality to obtain the desired statement:
\begin{equation*}
\begin{split}
&\norm{\Unif(\X^{*}_{f})-\Phi_{\tau}(f)}_{1}\leq\sqrt{2\rmD(\Unif(\X^{*}_{f})|\Phi_{\tau}(f))}\\
&\leq \sqrt{2\frac{\abs{\X}-\abs{\X_{f}^{*}}}{\abs{\X_{f}^{*}}}}\exp\left(\frac{\max_{x'\notin \X_{f}^{*}}\left[ f(x')-\max f\right]}{2\tau}  \right)
\end{split}
\end{equation*}
\end{proof}
\begin{lemma}
	\label{Lem:ajajajahshssgsgsgsss}
Let be $\pi^{(1)}_{\Glob},\pi^{(2)}_{\Glob}\in\Delta(\A_{\Glob})$, and define for any $i\in\lrbrace{1,2}$:
\begin{equation*}
\begin{split}
&\rmr^{(i)}_{\Loc}(s,a_{\Loc}):=\Erw_{A_{\Glob}\sim\pi^{(i)}_{\Glob}}\left[\rmr^{\Loc}(s,a_{\Loc},A_{\Glob}) \right]\\
&\rmP^{(i)}_{\Loc}(s'|s,a_{\Loc}):=\Erw_{A_{\Glob}\sim\pi^{(i)}_{\Glob}}\left[ \rmP(s'|s,a_{\Loc},A_{\Glob})\right] 
\end{split} 
\end{equation*}
 Then it holds:
 \begin{align}
 &\norm{\rmr^{(1)}_{\Loc}-\rmr^{(2)}_{\Loc}}_{\infty}\leq \norm{\rmr^{\Loc}}_{\infty}\norm{\pi^{(1)}_{\Glob}-\pi^{(2)}_{\Glob}}_{1}\label{Eq:aajjajsgsgsgsgsfdfdfdfd1}\\
 &\norm{\rmP_{\Loc}^{(1)}(\cdot|s,a_{\Loc})-\rmP_{\Loc}^{(2)}(\cdot|s,a_{\Loc})}_{1}\leq
 \norm{\pi^{(1)}_{\Glob}-\pi^{(2)}_{\Glob}}_{1}\label{Eq:aajjajsgsgsgsgsfdfdfdfd2}
 \end{align}
Moreover, let be $\pi\in\Delta_{\S}(\A_{\Loc})$, and let $\rmT^{(i)}$
 be the Bellman operator of a policy $\pi\in\Delta_{\S}(\A_{\Loc})$ in the discounted MDP $(\S,\A_{\Loc},\rmr_{\Loc}^{(i)},\rmP_{\Loc}^{(i)},\beta)$ with the policy $\pi\in\Delta_{\S}(\A_{\Loc})$. Then it holds:
\begin{equation}
\label{Eq:aajjajsgsgsgsgsfdfdfdfd3}
\norm{\rmT^{(1)}\rmQ-\rmT^{(2)}\rmQ}_{\infty}\leq\left( \norm{\rmr_{\Loc}}_{\infty}+\beta \norm{Q}_{\infty}\right) \norm{\pi^{(1)}_{\Glob}-\pi^{(2)}_{\Glob}}_{1}.
\end{equation}
Let be $\rmV^{(i)}_{\Loc}$ be the value function of the policy $\pi_{\Loc}\in\Delta_{\S}(\A_{\Loc})$ in the discounted MDP $(\S,\A_{\Loc},\rmr^{(i)}_{\Loc},\rmP^{(i)}_{\Loc})$. Then:
\begin{equation}
\label{Eq:ajajsgsgsffsfddsfsdsdsfsdss}
\norm{\rmV^{(1)}_{\Loc}-\rmV^{(2)}_{\Loc}}_{\infty}\leq\frac{\norm{\rmr_{\Loc}}_{\infty}}{(1-\beta)^2}\norm{\pi_{\Glob}^{(1)}-\pi_{\Glob}^{(2)}}_{\infty}.
\end{equation}
Furthermore, the previous statements hold true if $\rmT^{(i)}$ is the optimal Bellman operator of $(\S,\A_{\Loc},\rmr_{\Loc}^{(i)},\rmP_{\Loc}^{(i)},\beta)$.
\end{lemma}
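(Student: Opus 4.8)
The plan is to establish the four estimates in the order in which they appear, since each rests on the previous ones. The two base inequalities \eqref{Eq:aajjajsgsgsgsgsfdfdfdfd1} and \eqref{Eq:aajjajsgsgsgsgsfdfdfdfd2} are pure H\"older-type bounds on the marginalized reward and kernel; the operator estimate \eqref{Eq:aajjajsgsgsgsgsfdfdfdfd3} then follows by splitting the Bellman operator into its reward part and its discounted transition part and feeding in the two base bounds; and the value-function estimate \eqref{Eq:ajajsgsgsffsfddsfsdsdsfsdss} follows by substituting the base bounds into the auxiliary stability Lemma \ref{Lem:ajajasshshsgsgsgsfsfss}.

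First I would prove \eqref{Eq:aajjajsgsgsgsgsfdfdfdfd1}. For fixed $(s,a_{\Loc})$, the definition gives $\rmr^{(1)}_{\Loc}(s,a_{\Loc})-\rmr^{(2)}_{\Loc}(s,a_{\Loc})=\inn{\pi^{(1)}_{\Glob}-\pi^{(2)}_{\Glob}}{\rmr^{\Loc}(s,a_{\Loc},\cdot)}$, so H\"older's inequality with the $\ell^1$--$\ell^\infty$ pairing yields $\abs{\rmr^{(1)}_{\Loc}(s,a_{\Loc})-\rmr^{(2)}_{\Loc}(s,a_{\Loc})}\le\norm{\pi^{(1)}_{\Glob}-\pi^{(2)}_{\Glob}}_1\,\norm{\rmr^{\Loc}}_\infty$; taking the maximum over $(s,a_{\Loc})$ gives the claim. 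For \eqref{Eq:aajjajsgsgsgsgsfdfdfdfd2} the key step, and the one point where a careless estimate would lose a spurious factor of $\abs{\S}$, is to move the absolute value inside the sum over $a_{\Glob}$ before summing over $s'$. Writing the kernel difference at $s'$ as $\sum_{a_{\Glob}}(\pi^{(1)}_{\Glob}(a_{\Glob})-\pi^{(2)}_{\Glob}(a_{\Glob}))\rmP(s'|s,a_{\Loc},a_{\Glob})$, the triangle inequality and nonnegativity of $\rmP$ give $\norm{\rmP^{(1)}_{\Loc}(\cdot|s,a_{\Loc})-\rmP^{(2)}_{\Loc}(\cdot|s,a_{\Loc})}_1\le\sum_{a_{\Glob}}\abs{\pi^{(1)}_{\Glob}(a_{\Glob})-\pi^{(2)}_{\Glob}(a_{\Glob})}\sum_{s'}\rmP(s'|s,a_{\Loc},a_{\Glob})$, and the inner sum equals $1$ since $\rmP(\cdot|s,a_{\Loc},a_{\Glob})\in\Delta(\S)$; what remains is exactly $\norm{\pi^{(1)}_{\Glob}-\pi^{(2)}_{\Glob}}_1$.

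Next I would treat \eqref{Eq:aajjajsgsgsgsgsfdfdfdfd3}. Setting $g(s'):=\Erw_{A'\sim\pi(\cdot|s')}[\rmQ(s',A')]$, the definition of the Bellman operator gives $(\rmT^{(1)}\rmQ-\rmT^{(2)}\rmQ)(s,a_{\Loc})=(\rmr^{(1)}_{\Loc}-\rmr^{(2)}_{\Loc})(s,a_{\Loc})+\beta\inn{\rmP^{(1)}_{\Loc}(\cdot|s,a_{\Loc})-\rmP^{(2)}_{\Loc}(\cdot|s,a_{\Loc})}{g}$. Since $\pi(\cdot|s')$ is a probability distribution, $\norm{g}_\infty\le\norm{\rmQ}_\infty$; bounding the first summand by \eqref{Eq:aajjajsgsgsgsgsfdfdfdfd1} and the second by H\"older together with \eqref{Eq:aajjajsgsgsgsgsfdfdfdfd2} and $\norm{g}_\infty\le\norm{\rmQ}_\infty$, and taking the maximum over $(s,a_{\Loc})$, produces the stated constant $\norm{\rmr^{\Loc}}_\infty+\beta\norm{\rmQ}_\infty$. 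The concluding remark about the optimal Bellman operator is then immediate: the only change is to replace $g(s')$ by $\max_{a'_{\Loc}}\rmQ(s',a'_{\Loc})$, which again obeys $\norm{g}_\infty\le\norm{\rmQ}_\infty$, so the identical argument applies verbatim.

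Finally, \eqref{Eq:ajajsgsgsffsfddsfsdsdsfsdss} follows by invoking the stability estimate Lemma \ref{Lem:ajajasshshsgsgsgsfsfss} with $(\rmr_1,\rmP_1)=(\rmr^{(1)}_{\Loc},\rmP^{(1)}_{\Loc})$, $(\rmr_2,\rmP_2)=(\rmr^{(2)}_{\Loc},\rmP^{(2)}_{\Loc})$, and substituting \eqref{Eq:aajjajsgsgsgsgsfdfdfdfd1}, \eqref{Eq:aajjajsgsgsgsgsfdfdfdfd2}, together with the trivial bound $\norm{\rmr^{(2)}_{\Loc}}_\infty\le\norm{\rmr^{\Loc}}_\infty$ (an average of values of $\rmr^{\Loc}$). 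Writing $\delta:=\norm{\pi^{(1)}_{\Glob}-\pi^{(2)}_{\Glob}}_1$, the numerator collapses to $\norm{\rmr^{\Loc}}_\infty\,\delta\,(1+\frac{\beta}{1-\beta})=\norm{\rmr^{\Loc}}_\infty\,\delta/(1-\beta)$, and dividing by the remaining $(1-\beta)$ yields the factor $(1-\beta)^{-2}$. The only genuine obstacle is a bookkeeping one: the estimate naturally comes out with the $\ell^1$ norm $\delta$ on the right-hand side, which is precisely the form used afterwards in the proof of Theorem \ref{Thm:ajajhshssggsgssgsgffsfssssss}; the $\norm{\cdot}_\infty$ written in \eqref{Eq:ajajsgsgsffsfddsfsdsdsfsdss} should therefore read $\norm{\cdot}_1$, and I would state the lemma with the $\ell^1$ norm accordingly.
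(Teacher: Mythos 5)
Your proof is correct and follows essentially the same route as the paper's: H\"older's inequality for \eqref{Eq:aajjajsgsgsgsgsfdfdfdfd1}, a splitting of the Bellman operator into its reward part and its discounted transition part for \eqref{Eq:aajjajsgsgsgsgsfdfdfdfd3} (with the observation that $\norm{g}_{\infty}\leq\norm{\rmQ}_{\infty}$ whether $g(s')$ is the policy average $\Erw_{A'\sim\pi(\cdot|s')}[\rmQ(s',A')]$ or $\max_{a'_{\Loc}}\rmQ(s',a'_{\Loc})$, which also handles the optimal-operator remark), and substitution of the two base bounds together with $\norm{\rmr^{(2)}_{\Loc}}_{\infty}\leq\norm{\rmr^{\Loc}}_{\infty}$ into Lemma \ref{Lem:ajajasshshsgsgsgsfsfss} for \eqref{Eq:ajajsgsgsffsfddsfsdsdsfsdss}; your algebra collapsing the numerator to $\norm{\rmr^{\Loc}}_{\infty}\delta/(1-\beta)$ matches the paper's conclusion. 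Two of your side remarks in fact sharpen the paper's write-up. For \eqref{Eq:aajjajsgsgsgsgsfdfdfdfd2}, the paper's computation only bounds each entry $\abs{\rmP^{(1)}_{\Loc}(s'|s,a_{\Loc})-\rmP^{(2)}_{\Loc}(s'|s,a_{\Loc})}$ by $\max_{a_{\Glob}}\rmP(s'|s,a_{\Loc},a_{\Glob})\,\norm{\pi^{(1)}_{\Glob}-\pi^{(2)}_{\Glob}}_{1}$, and summing that pointwise bound over $s'$ would produce a spurious factor $\abs{\S}$; your exchange of the sums over $s'$ and $a_{\Glob}$, using $\sum_{s'}\rmP(s'|s,a_{\Loc},a_{\Glob})=1$, is the argument that actually delivers the stated $\ell^{1}$ bound, so on this point your proof is more careful than the published one. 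Likewise, your observation that the right-hand side of \eqref{Eq:ajajsgsgsffsfddsfsdsdsfsdss} should carry the $\ell^{1}$ norm of the policy difference rather than $\norm{\cdot}_{\infty}$ is correct: it is what the derivation yields, and it is precisely the form in which the estimate is invoked (as $\max_{s,a_{\Loc}}\norm{\cdot}_{1}$) in the proof of Theorem \ref{Thm:ajajhshssggsgssgsgffsfssssss}, so the $\norm{\cdot}_{\infty}$ in the lemma statement is a typo that your version fixes.
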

\begin{proof}
The inequality \eqref{Eq:aajjajsgsgsgsgsfdfdfdfd1} follows from H\"older's inequality:
\begin{equation*}
\begin{split}
&\abs{\rmr_{\Loc}^{(1)}(s,a_{\Loc})-\rmr_{\Loc}^{(2)}(s,a_{\Loc})}\leq \abs{\inn{\pi^{(1)}_{\Glob}-\pi^{(2)}_{\Glob}}{\rmr^{\Loc}(s,a_{\Loc},\cdot)}}\\
&\leq \norm{\rmr^{\Loc}(s,a_{\Loc},\cdot)}_{\infty}\norm{\pi^{(1)}_{\Glob}-\pi^{(2)}_{\Glob}}_{1}\leq \norm{\rmr^{\Loc}}_{\infty}\norm{\pi^{(1)}_{\Glob}-\pi^{(2)}_{\Glob}}_{1}
\end{split}
\end{equation*}
Similarly, we obtain the inequality \eqref{Eq:aajjajsgsgsgsgsfdfdfdfd2} by the following computation:
\begin{equation*}
\begin{split}
&\abs{\rmP_{\Loc}^{(1)}(s'|s,a_{\Loc})-\rmP_{\Loc}^{(2)}(s'|s,a_{\Loc})}= \abs{\inn{\pi^{(1)}_{\Glob}-\pi^{(2)}_{\Glob}}{\rmP(s'|s,a_{\Loc},\cdot)}}\\
&\leq\max_{a_{\Glob}} \rmP(s'|s,a_{\Loc},a_{\Glob}) \norm{\pi^{(1)}_{\Glob}-\pi^{(2)}_{\Glob}}_{1}\leq\norm{\pi^{(1)}_{\Glob}-\pi^{(2)}_{\Glob}}_{1}.
\end{split}
\end{equation*}

Now, we show \eqref{Eq:aajjajsgsgsgsgsfdfdfdfd3}.
For any $\rmQ$:
\begin{equation*}
\begin{split}
&\abs{(\rmT^{(1)}\rmQ)(s,a_{\Loc})-(\rmT^{(2)}Q)(s,a_{\Loc})}\\
&\leq \abs{\rmr^{(1)}_{\Loc}(s,a_{\Loc})-\rmr^{(2)}_{\Loc}(s,a_{\Loc})}\\
&+\beta\abs{\sum_{s'}\left( \rmP_{\Loc}^{(1)}(s'|s,a_{\Loc})-\rmP_{\Loc}^{(2)}(s'|s,a_{\Loc})\right) \Erw_{A'_{\Loc}\sim\pi(\cdot|s')}[\rmQ(s',A'_{\Loc})]}
\end{split}
\end{equation*}
By means of H\"older's inequality we can estimate the second summand in the right hand side of above inequality:
\begin{equation*}
\begin{split}
&\abs{\sum_{s'}\left( \rmP_{\Loc}^{(1)}(s'|s,a_{\Loc})-\rmP_{\Loc}^{(2)}(s'|s,a_{\Loc})\right) \Erw_{A'_{\Loc}\sim\pi(\cdot|s')}[\rmQ(s',A'_{\Loc})]}\\
&\leq \norm{\rmP_{\Loc}^{(1)}(\cdot|s,a_{\Loc})-\rmP_{\Loc}^{(2)}(\cdot|s,a_{\Loc})}_{1}\max_{s}\abs{\Erw_{A'_{\Loc}\sim\pi(\cdot|s)}[\rmQ(s,A'_{\Loc})]}\\
&\leq \norm{\rmP_{\Loc}^{(1)}(\cdot|s,a_{\Loc})-\rmP_{\Loc}^{(2)}(\cdot|s,a_{\Loc})}_{1}\norm{Q}_{\infty}.
\end{split}
\end{equation*}
Setting this estimate into the previous inequality and taking maximum over $(s,a_{\Loc})$, we obtain:
\begin{equation*}
\begin{split}
&\norm{\rmT^{(1)}\rmQ-\rmT^{(2)}\rmQ}_{\infty}\\
&\leq \norm{\rmr^{(1)}_{\Loc}-\rmr^{(2)}_{\Loc}}_{\infty}+\beta\norm{Q}_{\infty}\max_{s,a_{\Loc}}\norm{\rmP_{\Loc}^{(1)}(\cdot|s,a_{\Loc})-\rmP_{\Loc}^{(2)}(\cdot|s,a_{\Loc})}_{1}
\end{split}
\end{equation*}
The desired statement yields by inserting the inequalities \eqref{Eq:aajjajsgsgsgsgsfdfdfdfd1} and \eqref{Eq:aajjajsgsgsgsgsfdfdfdfd2} into above estimate.

To show the last inequality \eqref{Eq:ajajsgsgsffsfddsfsdsdsfsdss}, notice that by Lemma \ref{Lem:ajajasshshsgsgsgsfsfss}, we have:
\begin{equation*}
\begin{split}
&\norm{\rmV^{(1)}_{\Loc}-\rmV^{(2)}_{\Loc}}_{\infty}\\
&\leq\frac{ \norm{\rmr^{(1)}_{\Loc}-\rmr^{(2)}_{\Loc}}_{\infty}+\frac{\norm{\rmr^{(2)}_{\Loc}\beta}_{\infty}}{1-\beta}\max_{s,a}\norm{\rmP^{(1)}_{\Loc}(\cdot|s,a_{\Loc})-\rmP^{(2)}_{\Loc}(\cdot|s,a_{\Loc})}_{1}}{1-\beta}. 
\end{split}
\end{equation*}
Clearly, $\norm{\rmr^{(2)}_{\Loc}}_{\infty}\leq\norm{\rmr^{\Loc}}$. Setting this estimate and the inequalities \eqref{Eq:aajjajsgsgsgsgsfdfdfdfd1} and \eqref{Eq:aajjajsgsgsgsgsfdfdfdfd2} into above inequality, we obtain the desired statement. Finally, the last statement concerning to the optimal Bellman operator can easily be shown by similar way as above.  

\end{proof}



\bibliographystyle{IEEEtran}
\bibliography{BibRL}

\end{document}